\documentclass[10pt,journal,compsoc]{IEEEtran}

\usepackage[english]{babel}
\usepackage{amsmath,amsthm,amssymb,amsfonts}
\usepackage{graphicx, array}
\usepackage{hyperref}
\hypersetup{colorlinks,linkcolor={blue},citecolor={blue},urlcolor={red}}  
\usepackage{euler, latexsym, epsfig,epic}
\usepackage{tikz}

\makeatletter
\let\@fnsymbol\@arabic
\makeatother

\theoremstyle{plain}
\newtheorem{thm}{Theorem}
\newtheorem{prp}{Proposition}

\newtheorem{lem}{Lemma}
\newtheorem{remark}{Remark}

\newtheorem{ex}{Example}

\newtheorem{form}{Formulation}

\theoremstyle{definition}
\newtheorem{definition}{Definition}

\newcommand{\myparagraph}[1]{\smallskip\noindent\textbf{#1.}}

\def\0{\boldsymbol{0}}

\def\Y{\boldsymbol{Y}}

\def\C{\mathcal{C}}

\def\U{\boldsymbol{U}}

\def\H{\mathcal{H}}

\def\P{\boldsymbol{P}}

\def\I{\mathcal{I}}

\def\J{\mathcal{J}}

\def\K{\mathcal{K}}

\def\M{\mathcal{M}}

\def\P{\mathcal{P}}
\def\PP{\mathbb{P}}
\def\U{\mathscr{U}}
\def\sV{\mathscr{V}}

\def\V{\mathcal{V}}

\def\Y{\mathcal{Y}}


\DeclareMathOperator{\rank}{rank}

\DeclareMathOperator*{\Gr}{Gr}

\def\RR{\mathbb{R}}
\def\CC{\mathbb{C}}

\long\def\answer#1{}

\long\def\comment#1{}

\hyphenation{MATLAB}

\begin{document} 

\title{Low-Rank Matrix Completion Theory \\ via Pl\"ucker Coordinates}

\author{
    \IEEEauthorblockN{Manolis C. Tsakiris \\ 
    \vspace{0.2in}   
    \vspace{0.05in}         
    \IEEEcompsocitemizethanks{\IEEEcompsocthanksitem Key Laboratory of Mathematics Mechanization, Academy of Mathematics and Systems Science, Chinese Academy of Sciences, Beijing, 100190, China.
    \IEEEcompsocthanksitem manolis@amss.ac.cn}
} \normalsize}

\IEEEtitleabstractindextext{
\begin{abstract}
Despite the popularity of low-rank matrix completion, the majority of its theory has been developed under the assumption of random observation patterns, whereas very little is known about the practically relevant case of non-random patterns. Specifically, a fundamental yet largely open question is to describe patterns that allow for unique or finitely many completions. This paper provides three such families of patterns for any rank and any matrix size. A key to achieving this is a novel formulation of low-rank matrix completion in terms of Pl\"ucker coordinates, the latter a traditional tool in computer vision. This connection is of potential significance to a wide family of matrix and subspace learning problems with incomplete data. 
\end{abstract}
\begin{IEEEkeywords}
low-rank matrix completion, non-random observation patterns, Pl\"ucker coordinates, Grassmannian, algebraic geometry
\end{IEEEkeywords}}

\maketitle

\section{Introduction} \label{section:Introduction}

\subsection{Low-Rank Matrix Completion (LRMC)} \label{subsection:LRMC}
Low-Rank Matrix Completion (LRMC) is a fundamental interdisciplinary problem, in which one seeks to complete an $m \times n$ real matrix $X$ of rank $r \ll \min\{m,n\}$ by only observing a subset of its entries. LRMC has many applications such as photometric stereo \cite{wu2010robust}, depth enhancement \cite{lu2014depth} and reflection removal \cite{han2017reflection} in computer vision, recommendation systems \cite{bennett2007netflix}, time-series forecasting \cite{gillard2018structured}, localization in IoT networks \cite{nguyen2019localization}, seismic data reconstruction \cite{ma2013three} and many others.  

There are at least three factors that make LRMC ubiquitous. First, high-dimensional data have strong underlying patterns, which manifest via a low-intrinsic dimension with respect to some geometric model. The most popular among such models are linear subspaces, often leading to the phenomenon of high-dimensional data matrices approximately having low-rank. Second, such data often occur in incomplete form, as typically not all features are observed across all samples. For example, many times it is too expensive or even infeasible to directly acquire the entire target data matrix; rather, one samples a small part of it at specially designed locations and then tries to infer the missing values. This is known as \emph{active matrix completion} \cite{chakraborty2013active} or as \emph{matrix completion with queries} \cite{ruchansky2015matrix}. Third, the otherwise NP-hard LRMC problem is provably solvable under mild conditions via convex optimization in polynomial time \cite{candes2009exact}, and scalable implementations \cite{jain2013low,mu2016scalable} have made LRMC a powerful tool in applications. The popularity of LRMC has also led to its serving as a fertile paradigm in machine learning theory, e.g., in the study of data-induced benign structures in non-convex optimization \cite{ge2016matrix,balcan2019non} or for understanding the behavior of gradient descent in deep networks \cite{arora2019implicit}. 

\subsection{Random Patterns} \label{subsection:Random Patterns}

Let $\M(r, m \times n)$ be the set of $m \times n$ real matrices of rank at most $r$. In their seminal paper \cite{candes2009exact} Cand\`es \& Recht established that an incoherent matrix $X \in \M(r, m \times n)$ can be correctly completed with high probability via convex nuclear norm minimization from $\mathcal{O}(r \max\{m,n\}^{6/5} \log \max\{m,n\})$ uniformly at random observed entries. Ignoring logarithms, this sampling rate is almost optimal since $\M(r, m \times n)$ is an algebraic variety of dimension $r(m+n-r)$ \cite{harris2013algebraic} and sampling fewer entries gives infinitely many completions. Reaching an optimal sampling rate by removing the $6/5$ exponent was achieved shortly after in another famous paper by Cand\`es \& Tao \cite{candes2010power}, who employed a remarkable chain of long and complicated combinatorial arguments. Then Recht \cite{recht2011simpler} simplified the proof by using results of Gross in quantum information theory, who in turn generalized the theorem to arbitrary linear measurements of $X$ \cite{gross2011recovering}. Since then, a large literature on LRMC has been developed with the majority of theoretical analyses adopting a random observation pattern. 

\subsection{Deterministic Patterns} \label{subsection:Deterministic Patterns}

In contrast to random patterns, deterministic observation patterns have received much less attention, and have been far from being understood in the context of LRMC. Among the few works that have considered them are\footnote{For deterministic patterns in seeking a minimal-rank completion of a generic incomplete matrix see \cite{shapiro2018matrix,bernstein2020typical}. For deterministic patterns for matrix completion in a union of subspaces see \cite{ongie2021tensor}.} \cite{singer2010uniqueness,kiraly2012combinatorial,lee2013matrix,chen2015completing,kiraly2015algebraic,pimentel2016characterization,bernstein2017completion,liu2019matrix,chatterjee2020deterministic,foucart2020weighted}. This is an important gap in the literature, because in practice the pattern of missing entries depends on the specific nature of the application and thus can be very non-random. For example, in computer vision the missing entries occur due to occlusions or shadows in the scene; in recommendation systems they occur as a consequence of the existing preferences of different users. 

\myparagraph{Uniquely Completable Patterns} Given an observation pattern $\Omega$, which we can think of as an $m \times n$ matrix of $0$'s and $1$'s, one would like to know whether observing a sufficiently generic matrix $X \in \M(r, m \times n)$ at the locations of $1$'s uniquely defines $X$, in the sense that $X$ is the only generic rank-$r$ matrix agreeing with the incomplete matrix; here \emph{generic} means that the matrix avoids certain pathologies in alignment with the nature of the application at hand. If the answer is \emph{yes}, then one would already know that any algorithm that produces a generic rank-$r$ completion, produces the ground-truth matrix $X$. If on the other hand the answer is \emph{no}, then one would know that either i) more properties of $X$ must be taken into account and specialized algorithms leveraging these properties must be employed or ii) one needs to sample more entries from $X$. Conversely, if a characterization is known for the patterns that allow unique completion, then one would seek to sample data according to these patterns, which is important when one designs queries for active matrix completion \cite{chakraborty2013active,ruchansky2015matrix}.   

\myparagraph{Finitely Completable Patterns} A less intuitive but even more fundamental question than unique completability is whether for a given pattern $\Omega$, a sufficiently generic matrix $X \in \M(r,m \times n)$ observed at $\Omega$ has finitely many rank-$r$ completions. Finite completability is necessary for unique completability, and thus understanding the finitely completable patterns is key to understanding the uniquely completable ones. Moreover, finitely completable patterns are important in their own right, because in principle one may extract the finitely many completions or a subset of them and then use other algorithms or tests to isolate a completion that best suits the application. 

\subsection{Related Work} Among early works that concretely and formally posed the problem of characterizing the uniquely or finitely completable patterns in LRMC are \cite{singer2010uniqueness,kiraly2012combinatorial,kiraly2015algebraic}. This characterization was settled in \cite{singer2010uniqueness} for $r=1$ using rigidity theory. Important connections with graph theory, combinatorics and algebraic geometry were established in \cite{kiraly2012combinatorial,kiraly2015algebraic}. Nevertheless, the question of \emph{what exactly is the nature of these patterns}, or in other words \emph{how do finitely or uniquely completable patterns look like}, as for example one would need to know for designing queries, still remained open. This was undertook by \cite{pimentel2016characterization}, where for the special case of $m \times r(m-r)$ rank-$r$ matrices, a characterization of finitely completable patterns was suggested using algebraic geometry. Several important insights emerged from this work; however, their algebraic geometric analysis needs to be revisited \footnote{Details about this issue are given in \S \ref{section:Alarcon}.}. Using tropical geometry, \cite{bernstein2017completion} described the finitely completable patterns for $r=2$. Finally, an asymptotic in the matrix dimensions $m,n$ characterization of approximately uniquely completable patterns was given in \cite{chatterjee2020deterministic}. Overall, for $r>2$ very little else seems to be known about uniquely or finitely completable patterns.

\subsection{The Role of Algebraic Geometry}  The problem of unique or finite completability in LRMC is in its nature a problem of algebraic geometry, the field of mathematics concerned with solutions to polynomial systems of equations; interestingly, LRMC itself can be used to solve such equations \cite{ma2012computing}. One of the most successful applications of algebraic geometry has been computer vision. Indeed, state-of-the-art commercial structure from motion systems often rely on the \emph{5-point algorithm} \cite{nister2004efficient}, a polynomial system solver for recovering the relative pose of two calibrated cameras from five point correspondences. Moreover, using algebraic geometry, a lot has been achieved in computer vision theory in latest years in the theory of multi-view geometry; for example see \cite{agarwal2019ideals} and references therein. Algebraic geometry has also played an important role in generalized principal component analysis \cite{Vidal:PAMI05} or else known as subspace clustering \cite{vidal2011subspace}, as well as in phase retrieval \cite{conca2015algebraic}. 
This is not a coincidence: low-rank matrices, unions of subspaces and Grassmannians are prevalent mathematical objects in machine learning with inherent algebraic-geometric character.

\subsection{Paper Contributions} This paper takes several steps forward in the theory of LRMC by contributing the following:

\begin{enumerate}
\item For any rank $r$ and any matrix dimensions $m$ and $n$, a family of $m \times n$ finitely completable patterns (\S \ref{section:main}). 
\item For any $r,m,n$, two families of $m \times n$ uniquely completable patterns (\S \ref{section:main}). 
\item A potential connection with probabilistic methods within LRMC, since in the low-rank regime the families of 1) and 2) occur with high probability under uniform sampling (\S \ref{section:main}).
\item A novel formulation of LRMC in terms of Pl\"ucker coordinates (\S \ref{section:MC-PC}), the latter a traditional tool in computer vision. Besides a key technical feature in the proofs of 1) and 2), this connection is also of potential significance to a wider class of matrix and subspace learning problems from incomplete data. 
\item A discussion of how existing flaws in the literature are fixed by 1), 2) and 4) (\S \ref{section:Alarcon}). 
\item An accessible technical development for readers not familiar with algebraic geometry, with technicalities suppressed as much as possible in favor of intuition and examples (\S \ref{section:main}-\S\ref{section:proofs}).
\end{enumerate}

The rest of the paper is organized as follows. \S \ref{section:main} discusses the main results, \S \ref{section:AG} gives the necessary algebraic geometry background, \S \ref{section:MC-PC} formulates LRMC as a problem on the Grassmannian using Pl\"ucker coordinates, \S \ref{section:Alarcon} discusses connections with literature and \S \ref{section:proofs} gives proofs.


\section{Main Results} \label{section:main}

\subsection{Notation} Before we state the main results of this paper, we need a little preparation. For a positive integer $k$ we let $[k]=\{1,\dots,k\}$. We can think of an observation pattern $\Omega$ as an $m \times n$ matrix of $0$'s and $1$'s. Alternatively, we can think of it as a subset $\Omega \subset [m] \times [n]$ with every element of $\Omega$ indicating an observed location in the matrix. It will further be convenient to think of $\Omega$ column-wise, in which case we write $\Omega = \bigcup_{j \in [n]} \omega_j \times \{j\} \subset [m] \times [n]$; here $\omega_j$ is the subset of $[m]$ that contains the observed row-indices across column $j$. The next example illustrates these conventions:

\begin{ex} \label{example:main}
An $\Omega \subset [6] \times [5]$ with $r=2, \, m=6, \, n=5$, depicted both in matrix indicator form and set-theoretic form:
$$\Omega = \begin{bmatrix} 
1 & 0 & 0 & 1 & 1 \\
1& 0 & 1 & 1 & 0 \\
1& 0 & 0& 0 & 1 \\
1 & 1 & 1 & 1 & 0 \\
1 & 1 & 0 & 1 & 1\\
0 & 1 & 0 & 1 & 0
\end{bmatrix}$$
\begin{align}
\cong & \big(\underbrace{\{1,2,3,4,5\}}_{\omega_1} \times \{1\}\big) \cup  \big(\underbrace{\{4,5,6\}}_{\omega_2} \times \{2\} \big) \cup  \big(\underbrace{\{2,4\}}_{\omega_3} \times \{3\} \big) \nonumber \\ & \cup  \big(\underbrace{\{1,2,4,5,6\}}_{\omega_4} \times \{4\} \big) \cup  
\big(\underbrace{\{1,3,5\}}_{\omega_5} \times \{5\} \big) \nonumber
\end{align}  
\end{ex}

\subsection{Supports of Linkage Matching Fields (SLMF's)} A combinatorial structure that lies in the heart of our families of patterns is the so-called \emph{Support of a Linkage Matching Field (SLFM)}. This was introduced in \cite{sturmfels1993maximal} for seemingly very different reasons than ours, relating to the theory of Gr\"obner bases. With $\# \mathcal{T}$ the size of a set $\mathcal{T}$, the definition reads:

\begin{definition}[\cite{sturmfels1993maximal}, paraphrased] \label{dfn:SLMF}
An $(r,m)$-SLMF is a set $$\Phi =\bigcup_{j \in [m-r]} \varphi_j \times \{j\} \subset [m] \times [m-r],$$ with the $\varphi_j$'s subsets of $[m]$ of size $r+1$, satisfying
\begin{align} 
\# \bigcup_{j \in \mathcal{T}} \varphi_j \ge \# \mathcal{T} + r, \, \, \,  \forall \mathcal{T} \subseteq [m-r]. \label{eq:SLMF}
\end{align}
\end{definition}

Let us provide some intuition on how $(r,m)$-SLMF's look like. Thinking about their defining condition \eqref{eq:SLMF}, one realizes that an $(r,m)$-SLMF can be thought of as an $m \times (m-r)$ matrix of $0$'s and $1$'s with not too large zero submatrices. In fact, $(r,m)$-SLMF's have an elegant algebraic characterization, which makes this intuition precise:

\begin{prp}[Proposition 4.1 in \cite{sturmfels1993maximal}, paraphrased] \label{prp:SLMF-algebraic}
Let $\Phi=\bigcup_{j \in [m-r]} \varphi_j \times \{j\}$ be a subset of $[m] \times [m-r]$ with $\#\varphi_j=r+1$ for every $j \in [m-r]$. Let $B=(b_{ij})$ be a generic $m \times (m-r)$ matrix supported on $\Phi$, that is  $b_{ij}=0$ for every $(i,j) \not\in \Phi$. Then $\Phi$ is an $(r,m)$-SLMF if and only if every $(m-r) \times (m-r)$ determinant of $B$ is non-zero. 
\end{prp}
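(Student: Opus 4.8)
The plan is to translate the non-vanishing of a maximal minor into the existence of a system of distinct representatives, apply Hall's marriage theorem, and then perform a short cardinality computation that exploits the fact that the row complement of a maximal square submatrix has size exactly $r$. Throughout I treat the nonzero entries $b_{ij}$, $(i,j)\in\Phi$, as algebraically independent indeterminates, so that asking whether a minor vanishes for generic $B$ is the same as asking whether it vanishes as a polynomial.

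First I would fix a row set $S\subseteq[m]$ with $\#S=m-r$ and expand the associated maximal minor as $\det B_S=\sum_{\sigma}\mathrm{sgn}(\sigma)\prod_{j\in[m-r]}b_{\sigma(j),j}$, the sum running over bijections $\sigma\colon[m-r]\to S$. Because of the support constraint a term survives only when $\sigma(j)\in\varphi_j$ for all $j$, and the decisive observation is that distinct surviving bijections give distinct monomials (the column index determines which variable is used), so no cancellation can occur. Hence $\det B_S$ is a nonzero polynomial, equivalently nonzero for generic $B$, if and only if the bipartite graph with left vertices $[m-r]$, right vertices $S$, and $j$ adjacent to $\varphi_j\cap S$, admits a matching saturating $[m-r]$. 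By Hall's marriage theorem this happens exactly when $\#\bigl(\bigcup_{j\in\mathcal{T}}\varphi_j\cap S\bigr)\ge\#\mathcal{T}$ for every $\mathcal{T}\subseteq[m-r]$.

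For the forward implication I would assume $\Phi$ is an $(r,m)$-SLMF and check Hall's condition for an arbitrary $S$; the case $\mathcal{T}=\emptyset$ is vacuous, and for nonempty $\mathcal{T}$, since $\#([m]\setminus S)=r$ intersecting with $S$ deletes at most $r$ rows, whence $\#\bigl(\bigcup_{j\in\mathcal{T}}\varphi_j\cap S\bigr)\ge\#\bigcup_{j\in\mathcal{T}}\varphi_j-r\ge\#\mathcal{T}$ by \eqref{eq:SLMF}. Thus every maximal minor is nonzero. For the converse I argue by contraposition: if \eqref{eq:SLMF} fails for some (necessarily nonempty) $\mathcal{T}$, put $u:=\#\bigcup_{j\in\mathcal{T}}\varphi_j\le\#\mathcal{T}+r-1$, and construct a bad $S$ by first including all $m-u$ rows outside $\bigcup_{j\in\mathcal{T}}\varphi_j$ and then adding arbitrary rows of $\bigcup_{j\in\mathcal{T}}\varphi_j$ until $\#S=m-r$. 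This makes $\#\bigl(\bigcup_{j\in\mathcal{T}}\varphi_j\cap S\bigr)=\max(0,u-r)\le\#\mathcal{T}-1<\#\mathcal{T}$, so Hall's condition fails for this $S$ and the corresponding minor is identically zero.

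The one genuinely delicate point is the no-cancellation step, and this is precisely where genericity is needed: it is what upgrades the combinatorial existence of a transversal to the algebraic non-vanishing of the minor. Everything else—invoking Hall's theorem and the two counting inequalities organized around $\#([m]\setminus S)=r$—is routine, so I expect the converse's bookkeeping, namely choosing $S$ to meet $\bigcup_{j\in\mathcal{T}}\varphi_j$ in the fewest possible rows, to be the only place requiring care.
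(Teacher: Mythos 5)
Your proof is correct, but there is nothing in the paper to compare it against: Proposition \ref{prp:SLMF-algebraic} is quoted (``paraphrased'') from Proposition 4.1 of \cite{sturmfels1993maximal}, and the paper never proves it --- its proofs section only covers the author's own statements. Judged on its own merits, your argument is the natural self-contained one and it goes through: treating the supported entries as independent indeterminates is the right formalization of genericity; the observation that distinct surviving bijections yield distinct monomials (the column index tags the variable) correctly rules out cancellation, so non-vanishing of $\det B_S$ is exactly the existence of a transversal; Hall's theorem then converts this to the counting condition, and your two inequalities (the forward one via $\#\bigl(\bigcup_{j\in\mathcal{T}}\varphi_j\cap S\bigr)\ge\#\bigcup_{j\in\mathcal{T}}\varphi_j-r$, the converse via a worst-case $S$) are both sound. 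Two small points deserve explicit mention. First, condition \eqref{eq:SLMF} as literally written fails for $\mathcal{T}=\emptyset$ when $r\ge 1$, so the definition must be read as quantifying over nonempty $\mathcal{T}$; you implicitly adopt this reading, which is the intended one. Second, your construction of the bad $S$ (``include all $m-u$ rows outside the union, then pad'') is only feasible because $u\ge\#\varphi_j=r+1>r$ for nonempty $\mathcal{T}$, which forces $m-u<m-r$; this is also what makes your $\max(0,u-r)$ simply equal to $u-r$. Stating that one-line feasibility check would close the only gap a careful reader could point to.
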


Here is a concrete example:

\begin{ex} \label{ex:SLMF-Phi1-Phi2}
The following $\Phi_1, \Phi_2$ are $(2,6)$-SLMF's:
$$\Phi_1 = \begin{bmatrix} 
1 & 1 & 1 & 0 \\
1& 1 & 1 & 0 \\
1 & 0 & 0 & 0 \\
0 & 1 & 0 & 1 \\
0 & 0 & 1 & 1 \\
0& 0& 0& 1 
\end{bmatrix} $$
$$\cong  \underbrace{\{1,2,3\}}_{\varphi_1^1} \times \{1\} \cup \underbrace{\{1,2,4\}}_{\varphi_2^1} \times \{2\}  \cup \underbrace{\{1,2,5\}}_{\varphi_3^1} \times \{3\} \cup \underbrace{\{4,5,6\}}_{\varphi_4^1} \times \{4\} $$
$$\Phi_2 = \begin{bmatrix} 
0& 1 & 1 & 1  \\
1& 1 & 1 & 0 \\
0 & 0 & 0 & 1 \\
1 & 1 & 0 & 0 \\
0 & 0 & 1 & 1 \\
1& 0& 0& 0 
\end{bmatrix}$$
$$\cong  \underbrace{\{2,4,6\}}_{\varphi_1^2} \times \{1\} \cup \underbrace{\{1,2,4\}}_{\varphi_2^2} \times \{2\}  \cup \underbrace{\{1,2,5\}}_{\varphi_3^2} \times \{3\} \cup \underbrace{\{1,3,5\}}_{\varphi_4^2} \times \{4\} $$
\end{ex}

Inasmuch as there are ${m \choose m -r}$ determinants to compute in Proposition \ref{prp:SLMF-algebraic}, a practically relevant question is how can we efficiently check that a $\Phi \subset [m] \times [m-r]$ is an $(r,m)$-SLMF. Fortunately, this can be done by a simple linear algebra test:

\begin{prp} \label{prp:SLMF-V-check}
Let $\Phi =\bigcup_{j \in [m-r]} \varphi_j \times \{j\} \subset [m] \times [m-r]$ with $\#\varphi_j=r+1$ for every $j \in [m-r]$. Let $\V$ be a generic subspace of $\RR^m$ of dimension $r$. Construct an $m \times (m-r)$ matrix $B_\V$ supported on $\Phi$ as follows. For every $j \in [m-r]$ the projection of $\V$ onto the coordinates of $\RR^m$ indexed by $\varphi_j$ is a hyperplane with normal vector $h_j \in \RR^{r+1}$. Now let the $j$th column of $B_\V$ to contain the $i$th entry of $h_j$ at position $\varphi_{ij}$; here $\varphi_{ij}$ denotes the $i$th element of the ordered set $\varphi_j$. Then $\Phi$ is an $(r,m)$-SLMF if and only if $B_\V$ has full column-rank. 
\end{prp}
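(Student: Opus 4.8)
The plan is to reinterpret the columns of $B_\V$ intrinsically and then read the rank condition off the combinatorics of $\Phi$. First I would observe that, by construction, the $j$th column of $B_\V$ is the vector $\tilde h_j \in \RR^m$ obtained by placing the entries of the normal $h_j$ at the positions $\varphi_j$ and zeros elsewhere. Since $h_j$ annihilates the projection of $\V$ onto the coordinates $\varphi_j$, one checks directly that $\tilde h_j \cdot v = 0$ for every $v \in \V$, i.e. $\tilde h_j \in \V^\perp$, and that $\tilde h_j$ is supported on $\varphi_j$. Hence $\tilde h_j \in \V^\perp \cap \RR^{\varphi_j}$, where $\RR^{\varphi_j}$ is the coordinate subspace indexed by $\varphi_j$. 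For generic $\V$ this intersection has the expected dimension $(m-r)+(r+1)-m = 1$, so $\tilde h_j$ is, up to scale, its unique generator and the construction of $B_\V$ is well posed. Consequently $B_\V$ has full column rank if and only if the $m-r$ vectors $\tilde h_1,\dots,\tilde h_{m-r}$ form a basis of $\V^\perp$, and the proposition reduces to showing that these vectors are linearly independent for generic $\V$ exactly when $\Phi$ is an $(r,m)$-SLMF.

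For the direction that SLMF is necessary, suppose \eqref{eq:SLMF} fails for some $\T \subseteq [m-r]$, say $\# U_\T \le \#\T + r - 1$ where $U_\T := \bigcup_{j\in\T}\varphi_j$. Every $\tilde h_j$ with $j\in\T$ is supported on $\varphi_j \subseteq U_\T$, hence lies in $\V^\perp \cap \RR^{U_\T}$. For generic $\V$ this space has the expected dimension $\max(0,\#U_\T - r) \le \#\T - 1$, so the $\#\T$ vectors $\{\tilde h_j\}_{j\in\T}$ are linearly dependent and $B_\V$ is rank deficient. Taking $\V$ generic guarantees that all relevant coordinate intersections attain their expected dimensions simultaneously, since this excludes only the $\V$ lying in a finite union of proper Zariski-closed subsets of $\Gr(r,m)$.

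For the converse I would exploit that full column rank of $B_\V$ is a Zariski-open condition on $\V$, being the nonvanishing of at least one maximal minor, each a regular function of the Pl\"ucker coordinates of $\V$; by irreducibility of $\Gr(r,m)$ it then suffices to produce a single $\V$ for which the $\tilde h_j$ are independent. Here Proposition \ref{prp:SLMF-algebraic} enters: since $\Phi$ is an $(r,m)$-SLMF, a generic matrix $B$ supported on $\Phi$ has all its maximal minors nonzero and in particular full column rank. I would then set $\V := (\im B)^\perp$, so that $\V^\perp = \im B$ and each column $b_j$ of $B$, being supported on $\varphi_j$, lies in $\V^\perp \cap \RR^{\varphi_j}$. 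Provided this intersection is one-dimensional for every $j$, the vector $\tilde h_j$ is proportional to $b_j$, so $B_\V$ agrees with $B$ up to rescaling of columns and therefore has full column rank, as desired.

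The one step requiring care, and the main obstacle, is verifying that $\dim(\im B \cap \RR^{\varphi_j}) = 1$ for generic supported $B$; equivalently, that the submatrix $B'_j$ of $B$ on the rows indexed by $[m]\setminus\varphi_j$ has full row rank $m-r-1$ (note $e_j$ always lies in its kernel, recovering $b_j$, so the intersection is at least one-dimensional). For generic supported $B$ this full-rank condition holds exactly when the bipartite support of $B'_j$ admits a matching saturating its $m-r-1$ rows, which I would deduce from \eqref{eq:SLMF} by a short Hall-type argument: if a row set $R \subseteq [m]\setminus\varphi_j$ had too few neighbouring columns, then the complementary columns $\T := \{j' : \varphi_{j'}\cap R = \emptyset\}$ would satisfy $\#U_\T \le m - \#R$, while SLMF forces $\#U_\T \ge \#\T + r = m - \#\{j':\varphi_{j'}\cap R\neq\emptyset\}$, yielding $\#\{j':\varphi_{j'}\cap R\neq\emptyset\}\ge \#R$ and hence the required expansion. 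Assembling the two directions completes the proof.
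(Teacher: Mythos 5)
Your proof is correct, and it takes a genuinely different route from the paper's in both directions. For the direction (SLMF $\Rightarrow$ full rank) the paper does essentially no work: it invokes Proposition \ref{prp:SLMF} (Sturmfels--Zelevinsky), which already asserts that the columns of $B_\V$ form a basis of $\V^\perp$ on a dense open set. You instead recover this rank statement from the weaker Proposition \ref{prp:SLMF-algebraic}: you build the single witness $\V=(\im B)^\perp$ from a generic $B$ supported on $\Phi$, prove $\dim(\im B\cap\RR^{\varphi_j})=1$ by your Hall-type expansion argument --- which is correct, and is exactly where \eqref{eq:SLMF} enters --- so that $B_\V$ agrees with $B$ up to column rescaling, and then spread the conclusion to generic $\V$ by openness plus irreducibility of $\Gr(r,m)$; in effect you re-prove the rank part of Proposition \ref{prp:SLMF}. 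For the converse (full rank $\Rightarrow$ SLMF) the paper restricts to the locus where no Pl\"ucker coordinate of $\V$ vanishes and derives a contradiction from Proposition \ref{prp:SLMF-algebraic} combined with the duality $[\psi]_{\V^\perp}=\pm\big[[m]\setminus\psi\big]_{\V}$, whereas your argument --- if \eqref{eq:SLMF} fails for $\T$, then the $\#\T$ columns indexed by $\T$ all lie in $\V^\perp\cap\RR^{U}$ with $U=\bigcup_{j\in\T}\varphi_j$, a space of dimension at most $\#\T-1$ for generic $\V$ --- is a plain dimension count that needs neither ingredient. The trade-off: the paper gains brevity by outsourcing the hard content to the two cited Sturmfels--Zelevinsky results; you gain self-containedness, using Proposition \ref{prp:SLMF-algebraic} only once and Proposition \ref{prp:SLMF} not at all, at the price of two standard facts that should be stated or cited explicitly (a generic $\V$ meets each fixed coordinate subspace in the expected dimension, and the generic rank of a matrix with prescribed support equals the maximum matching of its support graph). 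One point to tighten: to justify that full column rank of $B_\V$ is a Zariski-open condition on $\V$, fix the scale ambiguity in $h_j$ via Proposition \ref{prp:sections}, whose signed Pl\"ucker-coordinate representative turns your $B_\V$ into the paper's matrix $B_\Phi|_{\V}$; its maximal minors are then genuine polynomials in the Pl\"ucker coordinates, and the rank condition is insensitive to the column rescaling.
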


\noindent To get each normal vector $h_j$ in Proposition \ref{prp:SLMF-V-check}, we need a Singular Value Decomposition (SVD) of an $(r+1) \times r$ matrix (the projection of a basis of $\V$ onto the $r+1$ coordinates $\varphi_j$). In total, we must compute $m-r$ normal vectors; thus the complexity of forming the matrix $B_\V$ is $\mathcal{O}(r^3(m-r))$. To check that the $m \times (m-r)$ matrix $B_\V$ has full rank we need another SVD. Thus the total complexity of verifying that $\Phi$ is an $(r,m)$-SLMF is $\mathcal{O}\big((m-r)[r^3 + m(m-r)]\big)$.

We close with a useful observation:
\begin{remark} \label{remark:random-SLMF}
Theorem 2 in \cite{pimentel2015deterministic} suggests that $(r,m)$-SLMF's occur with high probability under uniform sampling in the low-rank regime $r \le m/6$. 
\end{remark}

\subsection{The Patterns}

For every $j \in [n]$ we let $\Omega_j$ denote the set of all subsets of $\omega_j$ of size $r+1$. The following definition will be convenient in describing our observation patterns:

\begin{definition} \label{dfn:SLMF-induced}
Let $\Omega = \bigcup_{j \in [n]} \omega_j \times \{j\} \subset [m] \times [n]$ be an observation pattern. For a subset $\J \subset [n]$, we say that the sub-pattern $\Omega_\J = \bigcup_{j \in \J} \omega_j \times \{j\}$ induces the $(r,m)$-SLMF $\Phi = \bigcup_{k \in [m-r]} \varphi_k \times \{k \}$, if $\varphi_k \in \bigcup_{j \in \J} \Omega_j, \forall k \in [m-r]$.
\end{definition}

Here is our family of finitely completable patterns\footnote{We really mean \emph{generically finitely completable} patterns, but for the sake of brevity, we have dropped the attribute \emph{generically} here and in the rest of this section.}:

\begin{thm} \label{thm:finite}
Suppose that $\Omega \subset [m] \times [n]$ satisfies the following two conditions. First $\# \omega_j \ge r$ for every $j \in [n]$. Second, there exists a partition $[n] = \bigcup_{\nu \in [r]} \J_\nu$ of $[n]$ into $r$ subsets $\J_\nu$, such that for every $\nu \in [r]$ we have that $\Omega_{\J_\nu}$ induces an $(r,m)$-SLMF $\Phi_\nu$. Then a generic rank-$r$ matrix $X$ observed at $\Omega$ has finitely many rank-$r$ completions. 
\end{thm} 

In Theorem \ref{thm:finite}, if it so happens that each $\Omega_{\J_\nu}$ induces the same $(r,m)$-SLMF, i.e. all $\Phi_\nu$'s are equal, then the pattern is uniquely completable:

\begin{thm} \label{thm:same-SLMF}
Suppose that $\Omega \subset [m] \times [n]$ satisfies the conditions of Theorem \ref{thm:finite}. Suppose in addition that $\Phi_\nu = \Phi$ for every $\nu \in [r]$, where $\Phi$ is an $(r,m)$-SLMF. Then the only generic completion of a generic rank-$r$ matrix $X$ observed at $\Omega$ is $X$ itself. 
\end{thm} 

If the somewhat restrictive condition in Theorem \ref{thm:same-SLMF} requiring $r$ disjoint parts of the pattern to induce the same SLMF is removed, then unique completion can be achieved by allowing for a larger pattern, in the sense that now $r+1$ disjoint parts of the pattern each induce some SLMF:

\begin{thm} \label{thm:unique}
Suppose that $\Omega \subset [m] \times [n]$ satisfies the following two conditions. First $\# \omega_j \ge r$ for every $j \in [n]$. Second, there exists a partition $[n] = \bigcup_{\nu \in [r+1]} \J_\nu$ of $[n]$ into $r+1$ subsets $\J_\nu$, such that for every $\nu \in [r+1]$ we have that $\Omega_{\J_\nu}$ induces an $(r,m)$-SLMF. Then the only generic completion of a generic rank-$r$ matrix $X$ observed at $\Omega$ is $X$ itself. 
\end{thm} 

By \emph{generic} in the theorems above, we mean that the property in question (finite or unique completion) holds true for any rank-$r$ matrix $X$ that does not satisfy certain polynomial equations that depend only on $r,m,n,\Omega$. The formal way to say this is that there is a dense open set $\U \subset \M(r,m\times n)$ in the Zariski topology (\S \ref{subsection:Zariski}) such that for any $X \in \U$ the property holds true. Thus Theorem \ref{thm:finite} says that there is a dense open set $\U_1 \subset \M(r,m \times n)$ such that under the stated condition any $X \in \U_1$ observed at $\Omega$ has finitely many rank-$r$ completions. Theorem \ref{thm:unique} says that there is a dense open set $\U_2 \subset \M(r, m \times n)$ such that for every $X \in \U_2$, the only completion in $\U_2$ of $X$ observed at $\Omega$ is $X$ itself. Describing the equations that $\U_1,\U_2$ must avoid is too complicated. But $\U_1,\U_2$ have the important property of being dense, which in probabilistic terms implies that a matrix randomly sampled from $\M(r,m\times n)$ under a continuous non-degenerate distribution will lie in $\U_1 \cap \U_2$ with probability $1$. From a practical point of view, it is extremely unlikely that matrices occurring in applications, which may very well have special structures dictated by the application, satisfy the polynomial equations that would place them outside of $\U_1 \cap \U_2$. 

The following example illustrates Theorems \ref{thm:finite} and \ref{thm:unique}, some relationships and limitations:

\begin{ex} \label{ex:Thm}
The observation pattern $\Omega$ of Example \ref{example:main} satisfies the conditions of Theorem \ref{thm:finite} and thus it is finitely completable. To see this, consider the partition of $[5]$ into sets $\J_1 = \{1,2\}$ and $\J_2= \{3,4,5\}$. Now observe that the first three columns of $\Phi_1$ in Example \ref{ex:SLMF-Phi1-Phi2} are associated to $\Omega_1$, the set of all subsets of $\omega_1$ of size $r+1=3$. Moreover, $\Omega_2=\{\omega_2\}$ and this is the same as the fourth column of $\Phi_1$. Thus the first two columns of $\Omega$ induce the $(2,6)$-SLMF $\Phi_1$. Similarly, the first three columns of $\Phi_2$ lie in $\Omega_4$, while the fourth column of $\Phi_2$ is $\omega_5$. 

Note that $\omega_3$ was irrelevant in this consideration. This means that the submatrix of $\Omega$ obtained by removing $\omega_3$ is already finitely completable in its own right, while column 3 has just enough observed entries to be completed uniquely from the column-space of a completion of the submatrix (Lemma \ref{lem:vector-completion}). 

Next, the dimension of $\M(2, 6 \times 5)$ is $2(6+5-2)=18$ and $\Omega$ has exactly $18$ observed locations. This means that $\Omega$ is minimal, in the sense that if we remove any of the $1$'s, then it becomes infinitely completable at rank $2$. 

It is not difficult to check that $\Omega$ does not satisfy the conditions of Theorem \ref{thm:unique}. Instead, the following modified $6 \times 6$ pattern does:

$$ \Omega' = \begin{bmatrix} 
1 & 0 & \underline{1} & 1 & 1 & \underline{1}\\
1& 0 & 1 & 1 & 0 & \underline{1}\\
1& 0 & \underline{1} & 0 & 1 & 0\\
1 & 1 & 1 & 1 & 0 & 0  \\
1 & 1 & 0 & 1 & 1 & \underline{1}\\
0 & 1 & 0 & 1 & 0 & \underline{1}
\end{bmatrix}$$

\noindent The partition of $[6]$ given by $\J_1=\{1,2\}, \J_2=\{4,5\}, \J_3=\{3,6\}$ satisfies the conditions of Theorem \ref{thm:unique} with $\Phi_1, \Phi_2$ as above and $\Phi_3$ given by $\varphi_{1}^3=\{1,2,3\}, \, \varphi_{2}^3=\{1,2,4\}, \, \varphi_{3}^3=\{1,2,5\}, \, \varphi_{4}^3=\{1,2,6\}$. Thus $\Omega'$ is uniquely completable.

On the other hand, a computation with the algebraic geometry software \texttt{Macaulay2} reveals the fascinating fact that $\Omega$ is itself uniquely completable! 
\end{ex}

As Example \ref{ex:Thm} suggests, columns of $\Omega$ with exactly $r$ $1$'s are harmless in that they do not affect the property in question of the pattern if they are removed or if they remain present. On the other hand, fewer than $r$ observations per column immediately lead to infinitely many completions of rank $r$ and this is the point of condition $\# \omega_j \ge r$ in Theorems \ref{thm:finite}-\ref{thm:unique}. The condition involving the partition of the pattern into SLMF-inducing sub-patterns may appear more mysterious, but it has a clear geometric interpretation: it is the main idea behind the proof of Theorem \ref{thm:finite} (\S \ref{section:proofs}). 

\myparagraph{Verifying the conditions} We now discuss the verification of the conditions. This involves two ingredients.

The first ingredient is how to verify that a sub-pattern $\Omega_\J = \bigcup_{j \in \J} \omega_j \times \{j\}$ of $\Omega$, where $\J$ is a subset of $[n]$, induces an $(r,m)$-SLMF. This can be done in a simple and (relatively) efficient fashion. First, we remove any columns of $\Omega_\J$ that have less than $r+1$ ones. Thus we may assume that $\# \omega_j \ge r+1$ for every $j \in \J$. Secondly, we replace each $\omega_j$ with $\alpha_j=\#\omega_j - r$ subsets of $\omega_j$, each of size $r+1$. These $\alpha_j$ subsets can be constructed as follows. Select any subset $\I \subset \omega_j$ of size $r$. Then take the subsets to be $\I \cup \{ k\}$ for every $k \in \omega_j \setminus \I$. With this, we have a collection of $\alpha=\sum_{j \in \J} \alpha_j$ subsets $\varphi_1,\dots,\varphi_\alpha$ of $[m]$, each of size $r+1$. Now, we apply the method of Proposition \ref{prp:SLMF}: let $\V$ be a generic linear subspace of $\RR^m$ of dimension $r$. Its projection onto the $r+1$ coordinates of $\varphi_j$ is a hyperplane $h_j$ of $\RR^{r+1}$. Form an $m \times \alpha$ matrix $H$ by placing the $i$th entry of $h_j$ at the $(\varphi_{ij},j)$ entry of $H$, where $\varphi_{ij}$ is the $i$th entry of $\varphi_j$; the rest entries of $H$ are set to zero. Now, it is a straightforward consequence of the theory that led to Proposition \ref{prp:SLMF}, that $\Omega_\J$ induces an $(r,m)$-SLMF if and only if the matrix $H$ has rank $(m-r)$. Assuming that $\alpha \le m$, the complexity of this ingredient is $\mathcal{O}(\alpha r^3 + \alpha^2 m)$. 

The second ingredient consists of searching for a partition of the columns of $\Omega$ into sub-patterns, such that each induces an $(r,m)$-SLMF. Presently, we are not aware of any efficient and provably correct algorithm for performing this search. But the following empirical procedure appears to be effective, at least for small instances of the problem: randomly sample candidate partitions and test each cell of the partition using the method of the previous paragraph, until either a suitable partition is found or a specified time budget is exceeded. Here is some good news: We have experimentally observed that when one partition \emph{works}, then many partitions \emph{work}. Intuitively, this is because the SLMF property fails when there are large zero submatrices present in the sub-pattern. But when $\Omega$ satisfies the condition of the theorem, randomly partitioning the columns of $\Omega$ tends to make it unlikely for such large zero submatrices to appear in the sub-patterns of the partition.   

\subsection{Designing Uniquely Completable Patterns}
Theorems \ref{thm:same-SLMF}-\ref{thm:unique} offer a device for constructing economic (close to minimal) uniquely completable patterns, providing one has a means of producing sub-patterns that induce SLMF's. This is significant for inductive matrix completion, where the observation pattern is part of the machine learning system (\S \ref{subsection:LRMC}, \S\ref{subsection:Deterministic Patterns}). The examples of this section progressively develop ideas in this direction. As a general remark, we note that if a pattern 1) contains a uniquely completable pattern as a sub-pattern, and 2) contains at least $r$ observations  per column, then it is itself uniquely completable (the reason is as in the second paragraph of Example \ref{ex:Thm}).

In Examples \ref{ex:r copies of SLMF}-\ref{ex:r+1 SLMF's} the sub-patterns that induce SLMF's are SLMF's themselves:

\begin{ex} \label{ex:r copies of SLMF}
Let $\Phi$ be any $(r,m)$-SLMF. Viewing $\Phi$ as an $m \times (m-r)$ matrix, it follows from Theorem \ref{thm:same-SLMF} that any $\Omega$ obtained by concatenating $r$ copies of $\Phi$, i.e. $\Omega = [ \Phi_1 \cdots \Phi_r] \Pi,$ with $\Phi_i = \Phi$ and $\Pi$ any permutation, is a uniquely completable pattern. Note also that this is a minimal pattern, since it has $(r+1)(m-r)r = \dim \M(r, m \times r(m-r))$ observed entries. 
\end{ex}

\begin{ex} \label{ex:r+1 SLMF's}
Let $\Phi_1,\dots,\Phi_{r+1}$ be $(r,m)$-SLMF's. It follows from Theorem \ref{thm:unique} that $\Omega = [ \Phi_1 \cdots \Phi_{r+1}] \Pi$, with $\Pi$ any permutation, is a uniquely completable pattern. This is never a minimal pattern, as its number of observations is $(r+1)^2(m-r) > \dim \M(r,m \times (m-r)(r+1)) = r(m-r)(r+2)$; however it is close to minimal. 
\end{ex}

In the next example the sub-patterns that induce SLMF's are fully observed columns.

\begin{ex} \label{ex:cross pattern}
Consider an observation pattern with the following structure. Let $\I$ and $\J$ be any subsets of $[m]$ and $[n]$, respectively, of size $r$. Then let $\Omega = (\I \times [n]) \cup ([m] \times \J) \subset [m] \times [n]$. 

An instance of this pattern for $m=6, n=5, r=2$ is with $\I=\{3,4\}$ and $\J=\{2,3\}$:
$$\Omega = \begin{bmatrix} 
0 & 1 & 1 & 0 & 0 \\
0& 1 & 1 & 0 & 0 \\
1& 1 & 1& 1 & 1 \\
1 & 1 & 1 & 1 & 1 \\
0 & 1 & 1 & 0 & 0\\
0 & 1 & 1 & 0 & 0
\end{bmatrix}$$

Such a pattern is minimal as it has $r(m+n-r) = \dim \M(r,m \times n)$ entries. Moreover, it is easy to see \textemdash{and it is well known\textemdash} that such a pattern is uniquely completable. Indeed, the pattern allows for the full observation of a basis of the column-space of $X$, and since the remaining columns contain sufficiently many observations, they can each be completed by solving a linear system of equations (e.g., see Lemma \ref{lem:vector-completion}).

Such an $\Omega$ is captured by Theorem \ref{thm:same-SLMF}. Indeed, there are $r$ fully observed columns, each of which can be seen to induce any $(r,m)$-SLMF $\Phi$: for $j \in \J$ we have $\omega_j = [m]$; thus $\Omega_j$ is the set of all subsets of $[m]$ of size $r+1$. Hence, if $\Phi =\bigcup_{k \in [m-r]} \varphi_k \times \{k\} \subset [m] \times [m-r]$ is any $(r,m)$-SLMF, then $\varphi_k \in \Omega_j$ for any $k \in [m-r]$. This is true for every $j \in \J$, and since $\# \J=r$, the conditions of Theorem \ref{thm:same-SLMF} are satisfied.
\end{ex}

Combining Examples \ref{ex:r+1 SLMF's}-\ref{ex:cross pattern}, we can concatenate SLMF's with fully observed columns:

\begin{ex} \label{ex:SLMF+full columns}
Let $\Phi$ be any $(r,m)$-SLMF viewed as a matrix. Let $\mathfrak{e}$ be the $m$-column vector of all $1$'s. It follows from Theorem \ref{thm:unique} that the concatenation of $k$ copies of $\Phi$ with $r+1-k$ copies of $\mathfrak{e}$ is a uniquely completable pattern. For $m=6$ and $r=2$, such an instance is $\Omega = [\Phi_1 \, \Phi_1 \, \mathfrak{e}]$, where $\Phi_1$ is the SLMF that appears in Example \ref{ex:SLMF-Phi1-Phi2}. More generally, one can concatenate a total of $k$ replicas of any SLMF's together with $r+1-k$ copies of $\mathfrak{e}$; e.g. $\Omega = [\Phi_1 \, \Phi_2 \, \mathfrak{e}]$, where $\Phi_1$ and $\Phi_2$ are as in Example \ref{ex:SLMF-Phi1-Phi2}. 
\end{ex}

A limitation in designing uniquely completable patterns using the method of Example \ref{ex:SLMF+full columns} is that the matrix size of the pattern is constrained by the size of its building blocks (it has $k(m-r)+r+1-k$ columns, where $0 \le k \le r+1$ is the number of SLMF's that were employed). This comes from the fact that the SLMF-inducing sub-patterns that we used in Example \ref{ex:SLMF+full columns} are extreme: a building block is either an SLMF or it is a fully observed column. Another limitation is that the method is too restrictive, as it asks for $r+1-k$ columns to be fully observed. The next example demonstrates a method that easily addresses this issue. 

\begin{ex} \label{ex:6 x 10}
Suppose we are given the $(2,6)$-SLMF $\Phi_2$ of Example \ref{ex:SLMF-Phi1-Phi2} and we are asked to produce a uniquely completable pattern $\Omega$ for rank-$2$ matrices of size $6 \times 10$, which has $32$ entries but it can not afford to fully observe any column. The idea is to use Theorem \ref{thm:unique}, which asks for the partition of the $10$ columns of $\Omega$ into $r+1=3$ subsets, each inducing $\Phi_2$. But since no column can be fully observed, we can not use the vector $\mathfrak{e}$ of all ones as a column of $\Omega$. On the other hand, concatenating three copies of $\Phi_2$ gives a pattern with $12$ columns as opposed to the required $10$ columns. The crucial observation here is that, starting from $\Phi_2$, we can construct a sub-pattern with number of columns ranging from $1$ to $4=m-r$ that always induces $\Phi_2$. The technique is to glue two columns of $\Phi_2$ along $r=2$ common entries to yield a single column. We demonstrate this by producing a sub-pattern with two columns that induces $\Phi_2$, which we will then concatenate with two copies of $\Phi_2$. Since $\varphi_1^2 = \{2,4,6\}$ and $\varphi_2^2 = \{1,2,4\}$ have the $r=2$ common elements $2$ and $4$, we glue them to obtain a new subset $\varphi_{12}^2 = \{1,2,4,6\}$. This gives the sub-pattern 
$$\Phi_2' = \begin{bmatrix} 
 1 & 1 & 1  \\
 1 & 1 & 0 \\
 0 & 0 & 1 \\
 1 & 0 & 0 \\
 0 & 1 & 1 \\
 1 & 0 & 0
\end{bmatrix}=[\varphi_{12}^2 \, \, \varphi_3^2 \, \, \varphi_4^2],$$ which is easily seen to induce $\Phi_2$. Still this has three columns, so we need another gluing. Since $\varphi_3^2 = \{1,2,5\}$ and $\varphi_4^2 = \{1,3,5\}$ have the $r=2$ common elements $1$ and $5$, we glue them to obtain a new subset $\varphi_{34}^2 = \{1,2,3,5\}$. This gives the sub-pattern
$$\Phi_2'' = \begin{bmatrix} 
 1 & 1   \\
 1 & 1  \\
 0 & 1 \\
 1 & 0  \\
 0 & 1  \\
 1 & 0
\end{bmatrix}=[\varphi_{12}^2 \, \, \varphi_{34}^2],$$ that induces $\Phi_2$. Finally, the required rank-$2$ uniquely completable $6 \times 10$ pattern with $32$ entries is $[\Phi_2 \, \, \Phi_2 \, \, \Phi_2'']$. This is not unique; an alternative choice is $[\Phi_2' \, \, \Phi_2' \, \, \Phi_2]$.
\end{ex}

It is clear from Examples \ref{ex:r copies of SLMF}-\ref{ex:6 x 10}, that the ability to design uniquely completable patterns is contingent \textemdash{as per the present theory\textemdash} on the ability to design SLMF's. A method to do this is via  random sampling (Remark \ref{remark:random-SLMF}) coupled with the criterion of Proposition \ref{prp:SLMF}. That is, we randomly set to one $r+1$ entries for each column of the zero $m \times (m-r)$ matrix and check by Proposition \ref{prp:SLMF} if the resulting $\Phi$ is an $(r,m)$-SLMF; we repeat until an SLMF is obtained. 

Alternatively, it is easy to specify SLMF's for any $r,m$ and use them as prototypes for synthesizing $\Omega$'s. For instance, let $\Phi_{\P,r,m} = [E ; I]$ be the $m \times (m-r)$ matrix whose top $r \times (m-r)$ block $E$ consists of $1$'s and its bottom $(m-r) \times (m-r)$ block is the identity matrix. The subscript $\P$ in $\Phi_{\P,r,m} $ stands for \emph{pointed}, the name assigned to this special SLMF in \cite{sturmfels1993maximal}. Definition \ref{dfn:SLMF} can be immediately used to verify that $\Phi_{\P,r,m}$ is an $(r,m)$-SLMF. In fact, $\Pi \Phi_{\P,r,m} $ is also an $(r,m)$-SLMF for any row permutation $\Pi$. Moreover, any two columns of $\Pi \Phi_{\P,r,m} $ overlap at $r$ row indices and thus can be conveniently glued as in Example \ref{ex:6 x 10}, to produce sub-patterns $(\Pi \Phi_{\P,r,m})'$ with number of columns ranging from $1$ to $m-r$, that still induce $\Pi \Phi_{\P,r,m} $. Using different permutations, we can obtain sub-patterns $(\Pi_1 \Phi_{\P,r,m} )',\dots,(\Pi_{r+1} \Phi_{\P,r,m} )'$, which 1) can have varying column sizes, 2) can have different observation densities across different regions, and 3) they each induce an SLMF. It is a consequence of Theorem \ref{thm:unique} that $\Omega=[(\Pi_1 \Phi_{\P,r,m} )' \cdots (\Pi_{r+1} \Phi_{\P,r,m} )']$ is uniquely completable. The observation patterns that we just described already form a rich family that can be used for any $r,m,n$ in practice. The attribute \emph{rich} refers to the fact that we can construct many different patterns just by the operations of gluing columns and permuting rows of the $(r,m)$-SLMF $\Phi_{\P,r,m}$. The next example illustrates this:

\begin{ex} \label{ex:alternatively-Omega}
Consider the $6 \times 6$ rank-$2$ pattern $\Omega'$ of Example \ref{example:main}. This was shown to be uniquely completable due to Theorem \ref{thm:unique}, by noting that $\Omega'_{\{1,2\}}$ induces $\Phi_1$, $\Omega'_{\{4,5\}}$ induces $\Phi_2$, and $\Omega'_{\{3,6\}}$ induces $\Phi_{\P,2,4}$ ($\Phi_1, \Phi_2$ are as in Example \ref{ex:SLMF-Phi1-Phi2}). In fact, $\Omega'$ can be shown to be uniquely completable using only the SLMF 
$$\Phi_{\P,2,6} = \begin{bmatrix} 
 1 &  1  & 1 &    1   \\
 1 &  1  & 1 &    1 \\
 1 & 0  & 0 & 0  \\
 0 & 1  &  0 & 0 \\
 0 & 0 &  1  & 0 \\
 0 & 0 &  0 & 1
\end{bmatrix}.$$ 
Indeed, gluing the first three columns of $\Phi_{\P,2,6}$ along row indices $1,2$, and permuting rows $1,2$ with rows $4,5$, gives $\Omega'_{\{1,2\}}$. Moreover, gluing columns $1,3,4$ of $\Phi_{\P,2,6}$ along row indices $1,2$, and transposing rows $2,5$ and rows $3,4$ gives $\Omega'_{\{4,5\}}$. Finally,  gluing columns $1,2$ and columns $3,4$ of $\Phi_{\P,2,6}$ gives $\Omega'_{\{3,6\}}$.
\end{ex}

\subsection{Regimes}

We briefly place the deterministic results of this paper into a wider context. Our purpose is to qualitatively \textemdash{and certainly non-exhaustively\textemdash} discuss the complementary nature of the theory with respect to existing ones.

\myparagraph{Rank} Most works in the LRMC literature that study random observation patterns, either explicitly or implicitly bound the rank $r$. For instance, \cite{keshavan2010matrix} requires the explicit bound $r \le \log (\max\{m,n\})$. On the other hand, \cite{recht2011simpler} asks $\# \Omega \ge 32 \cdot \max\{\mu_1^2,\mu_0\} r (m +n) \beta \log^2(2 \max\{m,n\})$, where $\beta >1$ and $\mu_0,\mu_1$ are incoherence parameters with $1 \le \mu_0 \le \max\{m,n\}/r$. Since $mn \ge \# \Omega $ and $\max\{\mu_1^2,\mu_0\} \beta \ge 1$, this enforces the implicit bound on the rank $r \le mn  / 32 (m+n) \log^2(2 \max\{m,n\})$. In other words, to provably complete a matrix via nuclear norm minimization, the matrix needs to be low-rank; hence the attribute \emph{low-rank} matrix completion. This is not a surprise, as the nuclear norm is the convex envelope of the rank. But for $m=6, \, n=5$ in our running Example \ref{example:main}, both bounds above ask the rank to be less than $1$, i.e. a $6 \times 5$ rank-$1$ matrix is not sufficiently low rank !  

In contrast, Theorems \ref{thm:finite}-\ref{thm:unique} hold for \emph{any rank}. For instance, if one unfolds the condition of Theorem \ref{thm:finite} for the extreme case of maximal bounded rank, say $n \ge m$ and $r = m-1$, one obtains a meaningful answer \textemdash the well-known fact that finite completion in this case requires full observation of $r$ columns of $X$. For other high values of the rank, our theorems give novel families of finitely and uniquely completable patterns, a regime, not only out of bounds for the probabilistic theories, but one where the nuclear norm minimization itself is expected to fail. Put differently, our theoretical results cover the high-rank regime as well, for which efficient matrix completion algorithms seem to not be available yet. This is not to be confused with what is known as \emph{high-rank matrix completion}  \cite{eriksson2012high,elhamifar2016high,ongie2017algebraic,ongie2021tensor}: That part of the literature refers to completing a matrix whose columns come from a union of low-dimensional linear subspaces. There, even though the rank of the complete matrix can be high, the intrinsic dimension of the geometric model of the data is still low (the dimension of a union of linear subspaces is equal to the maximum dimension among the dimensions of the subspaces), and this is exactly what existing theories and algorithms rely upon. Ultimately, as data representations become more compact, say via deep learning architectures such as autoencoders, the (relative) rank of the data becomes higher, and high-rank subspace learning methods, including matrix completion, will become important. 

\myparagraph{Number of observations} The state-of-the-art in the theory of LRMC that studies random patterns, considers $\mathcal{O}(\dim \M(r, m \times n))$ observed entries (up to logarithmic factors). This optimal sampling rate becomes effective in the low-rank regime. For instance, for $m=6, \, n = 5$ and $r=2$, the above bound of \cite{recht2011simpler} asks that one samples at least $32 r (m+n) = 32 \cdot 2 \cdot (6+5) = 704$ entries ! In contrast, Theorems \ref{thm:finite}-\ref{thm:same-SLMF} encompass patterns with the absolutely minimal number of observations, equal to $\dim \M(r, m \times n)=r(m+n-r)$ for any rank $r$; such is the finitely completable pattern of Example \ref{example:main} and the family of uniquely completable patterns of Example \ref{ex:r copies of SLMF}. 

At the same time, the fact that we consider unique completion in the generic sense \textemdash{that is, we ignore a zero-measure set of pathological matrices\textemdash} allows us to have even more economic patterns as without this convention. To illustrate this, the main result of \cite{xu2018minimal} implies that there exist $m \times m$ matrices of rank $\le r$, with $r \le m/2$, that are not uniquely completable from any observation pattern $\Omega$ with $\# \Omega <  4mr - 4 r^2$. For $r=2$ and $m=6$, this asks for at least $4\cdot6\cdot 2 - 4 \cdot 2 \cdot 2 = 32$ observed entries. On the other hand, the $\Omega'$ of Example \ref{ex:Thm} is a $6 \times 6$ rank-$2$ pattern with $24$ observed entries, which is \emph{generically} uniquely completable.

\section{Algebraic Geometry Preliminaries} \label{section:AG}

In this section we develop certain algebraic geometry notions \cite{cox2013ideals,harris2013algebraic} that we need. Along the way, we build intuition and state useful facts about the problem of LRMC. 
Our entire discussion is over $\RR$, in contrast to classical algebraic geometry which is over $\CC$. Since $\CC$ is algebraically closed, many theorems in algebraic geometry that hold over $\CC$ do not hold over $\RR$ and care is needed when working with the latter. But in many cases one can get the same statements over $\RR$ if one works with \emph{schemes} \cite{hartshorne2013algebraic,vakil2017rising} instead of varieties and then restricts attention to the so-called \emph{$\RR$-valued points of the scheme}; this is what we have implicitly done whenever needed in this paper. Another possibility is to first establish the desired result over $\CC$ and then transfer it to $\RR$; this approach was followed in the earlier version of this manuscript \cite{tsakiris2020exposition}, and also in the work \cite{breiding2021algebraic} of the related problem of algebraic compressed sensing. Finally, LRMC naturally connects with algebraic geometry and combinatorial notions that are beyond the scope of the present paper; such are \emph{Gr\"obner bases} and \emph{matroids}. For an approach that incorporates these, we refer to \cite{matroid-Tsakiris-23}.

\subsection{Algebraic varieties and the Zariski topology} \label{subsection:Zariski}

An algebraic variety $\Y$ of $\RR^{k}$ is the set of solutions of a system of polynomial equations in $k$ variables. More precisely, $\Y$ is called an affine algebraic variety, to distinguish from the notion of a projective algebraic variety, which we will also need. The projective space $\PP^{k-1}$ is the set of all equivalence classes of non-zero vectors in $\RR^k$, where two vectors are declared equivalent if they are equal up to scale. A projective variety $\Y$ of $\PP^{k-1}$ is the set of solutions in $\PP^{k-1}$ of a system of homogeneous polynomials; a polynomial is homogeneous if all monomials that appear in it have the same degree. Testing whether an equivalence class is a root of a homogeneous polynomial is defined by testing whether any representative of that class is a root; scale does not matter because the polynomial is homogeneous. 

To work with algebraic varieties (affine or projective), it is very convenient to use the so-called Zariski topology. The Zariski topology consists of a system of open and closed sets that satisfy the usual axioms of the more common Euclidean topology, that is 1) a set is open if and only if its complement is closed, 2) the ambient space and the empty set are both open and closed, 3) any union of open sets is open, and 4) the intersection of finitely many open sets is open. 

In the Zariski topology a set is defined to be closed if it is an algebraic variety. A very important property is irreducibility. An algebraic variety is called irreducible if it can not be written as the union of two proper subvarieties of it. Here is a consequence of irreducibility that we will use very often in the sequel:

\begin{prp} \label{prp:open}
Let $\Y$ be an irreducible algebraic variety (affine or projective). Let $\U$ be a non-empty open subset of $\Y$; then $\U$ is dense in $\Y$. Moreover, the intersection of finitely many non-empty open subsets of $\Y$ is non-empty and open and thus dense in $\Y$. 
\end{prp}

\noindent In Proposition \ref{prp:open} \emph{dense} is meant in the usual topological sense: a set $\U \subset \Y$ is dense if for every point  $y \in \Y$ and every open set that contains $y$ the open set intersects $\U$. In the language of probability, if $\U \subset \Y$ is a fixed open dense set, then a point randomly drawn from a non-degenerate continuous distribution on $\Y$ lies in $\U$ with probability $1$.

\subsection{The variety $\M(r, m \times n)$ of bounded-rank matrices} \label{subsection:determinantal}

We denote by $\M(r, m \times n) \subset \RR^{m \times n}$ the set of all $m \times n$ real matrices of rank at most $r$. By basic linear algebra, a matrix $X \in \RR^{m \times n}$ lies in $\M(r, m \times n)$ if and only if all $(r+1) \times (r+1)$ determinants of $X$ are zero (i.e., they \emph{vanish}). These determinants are polynomials of degree ${r+1}$ in the entries of $X$. Thus $\M(r, m \times n)$ is an algebraic variety of $\RR^{m \times n}$. In the Zariski topology the closed sets of $\M(r, m \times n)$ are defined to be the subvarieties of $\M(r, m \times n)$, that is $m \times n$ matrices of rank $\le r$ which satisfy additional polynomial equations besides the $(r+1) \times (r+1)$ determinants being zero. As usual, the open sets are defined to be complements of closed sets. Thus the open sets of $\M(r, m \times n)$ are defined by the non-simultaneous vanishing of polynomial equations.

\begin{ex}
The set $\M(r-1,m \times n)$ of $m \times n$ matrices of rank at most $r-1$ is a closed set of $\M(r, m \times n)$, defined by the vanishing of all $r \times r$ determinants. The complement of $\M(r-1, m \times n)$ in $\M(r, m \times n)$ is the set of all $m \times n$ matrices of rank exactly $r$ and it is by definition an open set of $\M(r, m \times n)$, because a matrix $X \in \M(r, m \times n)$ lies in that open set if and only if there is some $r \times r$ determinant of $X$ which is non-zero. 
\end{ex}

Given $\Omega \subseteq [m]\times [n]$, we identify $\RR^{ \Omega}$ with the set of $m \times n$ matrices which are supported on $\Omega$, i.e. they are zero in the complement of $\Omega$. We let $\pi_{\Omega}:  \M(r, m \times n) \rightarrow \RR^{\Omega}$ be the coordinate projection that preserves the entries of $X$ indexed by $\Omega$ and sets the rest to zero. The set of matrices $X' \in \M(r, m \times n)$ with $\pi_{\Omega}(X') = \pi_{\Omega}(X)$ is called the \emph{fiber of $\pi_{\Omega}$ over $\pi_{\Omega}(X)$}, denoted $\pi_{\Omega}^{-1}\big(\pi_{\Omega}(X)\big)$. Finding an at most rank-$r$ completion of $\pi_{\Omega}(X)$ is the same as finding an element $X' \in \pi_{\Omega}^{-1}\big(\pi_{\Omega}(X)\big)$. Finitely many at most rank-$r$ completions is the same as the fiber $\pi_{\Omega}^{-1}\big(\pi_{\Omega}(X)\big)$ being a finite set.

It turns out that $\M(r, m \times n)$ is irreducible and so Proposition \ref{prp:open} applies. Thus, when a property is true for every $X$ in a non-empty open set $\U$ of $\M(r, m \times n)$, we will say that the property is true for a generic $X$. In \S \ref{section:Introduction} and \S \ref{section:main} we were referring to an observation pattern as being \emph{finitely} or \emph{uniquely completable}. A more precise terminology is:

\begin{definition} \label{dfn:gfc}
$\Omega$ is generically finitely completable if there is an non-empty open set $\U \subset \M(r, m \times n)$ with $\pi_{\Omega}^{-1}\big(\pi_{\Omega}(X)\big)$ a finite set for every $X \in \U$. $\Omega$ is generically uniquely completable if there is a non-empty open set $\U \subset \M(r, m \times n)$ with $\pi_{\Omega}^{-1}\big(\pi_{\Omega}(X)\big) \cap U = \{X\}$ for every $X \in \U$. 
\end{definition}

A very important theorem in algebraic geometry with many applications is known as \emph{the upper-semicontinuity of the fiber dimension} \cite{hartshorne2013algebraic,vakil2017rising,harris2013algebraic}; the reader may think of this as an algebraic geometric analogue of the \emph{rank plus nullity theorem} in linear algebra. We will state some consequences of this theorem that we will need for our map of interest $\pi_\Omega: \M(r,m \times n) \rightarrow \RR^\Omega$. First, note that the dimension of $\M(r,m \times n)$ as an algebraic variety is $r(m+n-r)$. We have:

\begin{prp} \label{prp:upper}
$\Omega$ is generically finitely completable if and only if the dimension of the image of the map $\pi_\Omega: \M(r,m \times n) \rightarrow \RR^\Omega$ is $r(m+n-r) = \dim \M(r, m \times n)$. 
\end{prp}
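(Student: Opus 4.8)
The plan is to derive the proposition directly from the upper-semicontinuity of the fiber dimension theorem invoked just above, applied to $\pi_\Omega$ viewed as a morphism onto the closure of its image. First I would set $Y=\M(r,m\times n)$, which is irreducible of dimension $d=r(m+n-r)$, and let $Z=\overline{\im\pi_\Omega}$ be the Zariski closure of the image. Being the closure of the image of an irreducible variety under a morphism, $Z$ is itself irreducible; write $e=\dim Z=\dim\overline{\im\pi_\Omega}$, and note $e\le d$ since a $d$-dimensional variety cannot have a higher-dimensional image. Viewing $\pi_\Omega\colon Y\to Z$ as a dominant morphism of irreducible varieties, the theorem supplies two facts I would use: (a) there is a nonempty open $\U_0\subseteq Z$ over which every fiber has dimension exactly $d-e$; and (b) every nonempty fiber of $\pi_\Omega$ has dimension at least $d-e$.

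With this setup the proposition reduces to the equivalence $e=d\iff$ the generic fiber is finite. For the implication $\Leftarrow$, suppose $e=d$: then the fibers over $\U_0$ are zero-dimensional, hence finite, and $\U:=\pi_\Omega^{-1}(\U_0)$ is open in $Y$ and nonempty because the dense image $\im\pi_\Omega$ must meet the nonempty open set $\U_0$. Every $X\in\U$ then has a finite fiber, so $\Omega$ is generically finitely completable in the sense of Definition \ref{dfn:gfc}. For $\Rightarrow$ I would argue by contradiction: if $e<d$ then $d-e>0$, so by (b) every nonempty fiber of $\pi_\Omega$ is positive-dimensional, hence infinite; in particular the fibers over the points of any nonempty open set $\U$ witnessing finite completability are infinite, contradicting finiteness (and, if one prefers to phrase it via intersection of opens, using that $Y$ is irreducible so two nonempty opens meet, Proposition \ref{prp:open}). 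Hence $e=d$.

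The hard part will be the step ``positive-dimensional fiber $\Rightarrow$ infinitely many real completions'' inside the $\Rightarrow$ direction, which is precisely where the gap between $\RR$ and $\CC$ opens up: a positive-dimensional variety over $\RR$ may have only finitely many real points (for instance $V(x^2+y^2)$), so algebraic fiber dimension cannot be naively identified with the number of genuine real completions. To close this I would exploit that $Y$ is the image of the real parametrization $(A,B)\mapsto AB$ on $\RR^{m\times r}\times\RR^{r\times n}$, whose real points are Zariski dense; composing with $\pi_\Omega$ and applying the real-dimension form of the fiber-dimension theorem forces the generic real fiber to have real dimension equal to its scheme-theoretic dimension $d-e$, so positive dimension does yield infinitely many real completions. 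Alternatively one may simply adopt the schemes-with-$\RR$-points convention declared in \S\ref{section:AG}, or establish the statement over $\CC$ and transfer it as in the earlier version \cite{tsakiris2020exposition}; any of these secures the identification between algebraic finiteness of the fiber and finiteness of the real completion set, after which the argument above closes.
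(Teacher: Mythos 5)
Your proposal is correct and takes essentially the approach the paper itself intends: the paper states Proposition~\ref{prp:upper} without a separate proof, presenting it precisely as a consequence of the upper-semicontinuity of fiber dimension applied to $\pi_\Omega$ viewed as a dominant map onto the closure of its image, which is exactly your setup, and both directions of your case analysis ($e=d$ gives generically finite fibers; $e<d$ gives everywhere positive-dimensional fibers) are the standard derivation from that theorem. Your additional care with the real-points issue (a positive-dimensional fiber over $\RR$ need not contain infinitely many real points) is a refinement the paper sweeps under its blanket schemes/$\RR$-valued-points convention of \S\ref{section:AG}, and your fix via the real parametrization $(A,B)\mapsto AB$ together with the semialgebraic form of the fiber-dimension theorem is a sound way to close it.
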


By Proposition \ref{prp:upper} we see that $\#\Omega \ge r(m+n-r)$ is a necessary condition for finite or unique completability. In fact, it is a consequence of \emph{algebraic matroid theory} \cite{rosen2020algebraic,kiraly2015algebraic} that to understand all generically finitely completable patterns, it is enough to understand those that have minimal size equal to $r(m+n-r)$:

\begin{prp} \label{prp:matroid}
An $\Omega$ is generically finitely completable if and only if it contains an $\Omega'$ with $\#\Omega' = \dim \M(r,m \times n) = r(m+n-r)$ such that $\Omega'$ is generically finitely completable.
\end{prp}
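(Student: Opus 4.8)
The plan is to recast the statement in the language of algebraic matroid theory, which linearizes the problem completely. First I would introduce the algebraic matroid $M$ associated to the irreducible variety $\M(r,m \times n)$: its ground set is the set of coordinates $[m]\times[n]$, and the rank of a subset $\Omega$ is defined as $\rho(\Omega) := \dim \overline{\pi_\Omega(\M(r,m \times n))}$, the dimension of the Zariski closure of the image of the coordinate projection (equivalently, the transcendence degree over $\RR$ of the coordinate functions indexed by $\Omega$, evaluated at a generic point). That $\rho$ is the rank function of a matroid is exactly the content of the cited algebraic matroid theory \cite{rosen2020algebraic,kiraly2015algebraic}. Since $\dim \M(r,m\times n) = r(m+n-r) =: d$ and the full coordinate set generates the function field of $\M(r,m\times n)$, the matroid $M$ has rank $d$. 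With this setup Proposition \ref{prp:upper} reads: $\Omega$ is generically finitely completable if and only if $\rho(\Omega) = d$, i.e. if and only if $\Omega$ is a \emph{spanning set} of $M$. A subset $\Omega'$ with $\#\Omega' = d$ and $\rho(\Omega') = d$ is precisely a \emph{basis} of $M$, so the proposition reduces to the elementary matroid fact that a set is spanning if and only if it contains a basis, and I would prove the two directions directly.

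For the ``if'' direction, I would suppose $\Omega \supseteq \Omega'$ with $\Omega'$ generically finitely completable, so that $\rho(\Omega') = d$. Because $\Omega' \subseteq \Omega$, the projection $\pi_{\Omega'}$ factors as a further coordinate projection $p$ composed with $\pi_\Omega$, namely $\pi_{\Omega'} = p \circ \pi_\Omega$; since the image of a variety under a morphism has dimension at most that of the source, $\rho$ is monotone, and $d = \rho(\Omega') \le \rho(\Omega) \le d$. Hence $\rho(\Omega) = d$, and by Proposition \ref{prp:upper} the pattern $\Omega$ is generically finitely completable.

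For the ``only if'' direction, I would suppose $\Omega$ is generically finitely completable, i.e. $\rho(\Omega) = d$. Restricting $M$ to $\Omega$ and choosing a basis $\Omega'$ of that restriction yields a set with $\#\Omega' = \rho(\Omega) = d$ and $\rho(\Omega') = \rho(\Omega) = d$; concretely, one greedily deletes from $\Omega$ any element whose removal leaves the rank equal to $d$, stopping once the size reaches $d$. Since $\rho(\Omega') = d$, Proposition \ref{prp:upper} shows $\Omega'$ is generically finitely completable, and by construction $\#\Omega' = d = r(m+n-r)$, which is the required subpattern.

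The only genuinely delicate point, and the step I would be most careful about, is the translation layer: verifying that the geometric notion of ``image dimension'' legitimately obeys the matroid axioms, in particular monotonicity of $\rho$ and the basis-extraction used in the second direction. Monotonicity is immediate from factoring coordinate projections as above, while the existence of a basis inside a spanning set is the standard greedy argument; both are subsumed by the cited algebraic matroid theory, so no genuinely hard computation remains.
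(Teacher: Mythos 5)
Your proof is correct and takes essentially the same route as the paper: the paper offers no independent argument, stating the proposition as a consequence of algebraic matroid theory \cite{rosen2020algebraic,kiraly2015algebraic}, and your writeup is the standard unpacking of that citation --- Proposition \ref{prp:upper} identifies generic finite completability with the matroid rank condition $\rho(\Omega)=r(m+n-r)$, after which the claim is exactly the fact that a spanning set contains a basis, and a set of size and rank $r(m+n-r)$ is a basis. The only point worth making explicit is that the basis-extraction step needs more than monotonicity of $\rho$ (it uses that every maximal independent subset of $\Omega$ has cardinality $\rho(\Omega)$, i.e.\ genuine matroid structure, or concretely a transcendence-basis argument for the coordinate functions), but this is precisely what the cited algebraic matroid theory supplies.
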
 

We close with another consequence of the \emph{fiber dimension theorem} that will come in handy in the proof of Theorem \ref{thm:finite}:

\begin{prp} \label{prp:X}
Let $\U$ be a non-empty open set of $\M(r,m \times n)$. If there exists an $X \in \U$ such that $\pi_{\Omega}(X)$ has finitely many completions in $\U$, that is the set $\pi_{\Omega}^{-1}\big(\pi_{\Omega}(X)\big) \cap \U$ is finite, then $\Omega$ is generically finitely completable.
\end{prp}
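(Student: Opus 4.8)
The plan is to reduce the statement to the dimension criterion of Proposition \ref{prp:upper} and then feed in the fiber dimension theorem. Since $\M(r,m\times n)$ is irreducible, so is the closure $\Y := \overline{\im \pi_\Omega}$ of the image, and $\pi_\Omega : \M(r,m\times n) \to \Y$ is a dominant morphism of irreducible varieties. Writing $D := \dim \M(r,m\times n) = r(m+n-r)$ and $d := \dim \Y$, Proposition \ref{prp:upper} says that $\Omega$ is generically finitely completable if and only if $d = D$. Since trivially $d \le D$ (the image of a $D$-dimensional variety has dimension at most $D$), the entire task reduces to establishing the lower bound $d \ge D$.

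The bridge to the hypothesis is the \emph{local} behaviour of $\pi_\Omega$ at the given point $X$. Consider the fiber $F := \pi_\Omega^{-1}\big(\pi_\Omega(X)\big)$, a closed subvariety of $\M(r,m\times n)$ containing $X$. I claim that $X$ is isolated in $F$, i.e. $\dim_X F = 0$. Indeed, let $Z$ be any irreducible component of $F$ through $X$. Then $Z \cap \U$ is non-empty (it contains $X$) and open in $Z$, hence dense in $Z$ by Proposition \ref{prp:open}. But $Z \cap \U \subseteq F \cap \U$ is finite by hypothesis, so it is Zariski-closed; being simultaneously closed and dense in $Z$, it must equal all of $Z$, forcing $Z$ to be the single point $\{X\}$. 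As this holds for every component of $F$ through $X$, we get $\dim_X F = 0$.

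Now I combine this with the lower-bound half of the fiber dimension theorem \cite{harris2013algebraic,hartshorne2013algebraic,vakil2017rising}: for a dominant morphism of irreducible varieties, every irreducible component of every non-empty fiber has dimension at least $D - d$. Applying this to the component of $F$ through $X$ yields $0 = \dim_X F \ge D - d$, hence $d \ge D$ and therefore $d = D$. By Proposition \ref{prp:upper} this is precisely the statement that $\Omega$ is generically finitely completable; moreover one recovers the open set required by Definition \ref{dfn:gfc} concretely as the $\pi_\Omega$-preimage of the dense open locus of $\Y$ over which all fibers have dimension $D-d = 0$, i.e. are finite.

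The main obstacle — and the step that must be handled with care — is that the hypothesis constrains $F$ only through its intersection with $\U$. This pins down the geometry of $F$ \emph{locally at $X$} but says nothing about components of $F$ that miss $\U$, which could a priori be positive-dimensional; so one cannot expect to bound $\dim F$ or the size of the whole fiber directly. The resolution is exactly to pass to the local fiber dimension $\dim_X F$ and to invoke the component-wise lower bound of the fiber dimension theorem rather than its generic-value half. The promotion from "one good point $X$" to a genuine dense open set of good points is then supplied automatically by Proposition \ref{prp:upper}.
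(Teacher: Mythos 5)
Your proposal is correct and takes essentially the paper's own route: the paper states Proposition \ref{prp:X} without a written proof, explicitly presenting it as a consequence of the fiber dimension theorem, and your argument---isolating $X$ in its fiber via the finite--dense--closed argument using Proposition \ref{prp:open}, applying the component-wise lower bound of the fiber dimension theorem to the dominant map onto $\overline{\im \pi_\Omega}$ to force $\dim \overline{\im \pi_\Omega} = r(m+n-r)$, and concluding via Proposition \ref{prp:upper}---is precisely the intended derivation, spelled out. The only caveat is that the fiber dimension theorem is a statement over algebraically closed fields, but this affects your proof no more than the paper's, which declares that such results are transferred to $\RR$ implicitly via schemes and their $\RR$-valued points.
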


\subsection{Grassmannian variety $\Gr(r,m)$ and coordinates} \label{subsection:Gr}

\subsubsection{Pl{\"u}cker coordinates} The Grassmannian $\Gr(r,m)$ is defined as the set of all $r$-dimensional linear subspaces of $\RR^m$. Let $\V \in \Gr(r,m)$ and let $B \in \RR^{m \times r}$ contain a basis for $\V$ in its columns. Then $\V$ is certainly defined by $B$ as the column space $\C(B)$ of $B$, but the entries of the matrix $B$ do not serve as coordinates for $\V$, i.e. they are not discriminative features of $\V$, since any matrix $B S$ for invertible $S \in \RR^{r \times r}$ also satisfies $\V = \C(B S)$. Suppose for a moment that $\V$ is $1$-dimensional. Then $B=b \in \RR^m$ and a non-zero vector $b' \in \RR^m$ also defines $\V$ if and only if $b$ and $b'$ are equal up to scale. That is, $\V$ is  identified with a unique point of the projective space $\PP^{m-1}$ and that point serves as a set of (projective) coordinates for $\V$. 

The Pl{\"u}cker embedding $\Gr(r,m) \rightarrow \PP^N$ is a classical construction in algebraic geometry that generalizes this representation of $\V$ for any $r \ge 1$, by identifying $\V$ with a unique point of a higher dimensional projective space $\PP^{N}$ with $N={m \choose r}-1$. The coordinates of that point, called the Pl{\"u}cker coordinates of $\V$, are all the $r \times r$ determinants of any basis $B$ of $\V$. If $B_1, B_2 \in \RR^{m \times r}$ are two bases of $\V$, then there exists an $r\times r$ invertible matrix $S$ such that $B_2 = B_1 S$. Now the $r\times r$ determinants of $B_2$ are obtained by the $r\times r$ determinants of $B_1$ by multiplying them by $\det(S)$. Hence both bases will give the same point in $\PP^N$ under the Pl\"ucker embedding. A machine learning flavored view of the Pl\"ucker embedding is as a polynomial mapping that takes a linear subspace of dimension $r$ to a line in the higher dimensional ambient space $\RR^{N+1}$. 

Since specifying an $r \times r$ determinant of $B$ is the same as specifying $r$ rows of $B$ indexed by some $\psi=\{i_1<\dots <i_r\} \subset [m]$, we denote the Pl{\"u}cker coordinates of $\V$ by $[\psi]_{\V}$, where $\psi$ ranges among all ${m \choose r}$ subsets of $[m]$ of size $r$. 

The reader may wonder how one can distinguish points of $\PP^N$ that correspond to $r$-dimensional subspaces of $\RR^m$ from the rest of the points in $\PP^N$. These are precisely the roots of certain quadratic homogeneous polynomials, known as the \emph{Pl{\"u}cker relations}. That is, a point of $\PP^N$ is the image of an $r$-dimensional subspace $\V$ under the Pl{\"u}cker embedding if and only if it is a root of the Pl{\"u}cker relations. This identification of $\Gr(r,m)$ with points of $\PP^N$ satisfying these homogeneous polynomial equations makes $\Gr(r,m)$ a projective variety of dimension $r(m-r)$. It is also true that $\Gr(r,m)$ is irreducible and thus Proposition  \ref{prp:open} also applies.

We illustrate these notions with a series of examples.

\begin{ex}
$\Gr(1,3)$ is identified via the Pl{\"u}cker embedding with $\PP^2$, since there are no algebraic (Pl{\"u}cker) relations between the $1 \times 1$ determinants of a $3 \times 1$ matrix $B$. Indeed, by definition the points of $\PP^2$ are in 1-1 correspondence with the lines through the origin in $\RR^3$. Also $\dim\Gr(1,3)=2=1\cdot(3-1)$.
\end{ex}

\begin{ex} \label{ex:Gr(2,4)}
Consider the following $B \in \RR^{4 \times 2}$: 

$$B = \begin{bmatrix} 1 & 0 \\ 0 & 1 \\ 0 & 2 \\ 3 & 4 \end{bmatrix}$$

\noindent The column space $\C(B)$ of $B$ is a $2$-dimensional subspace $\V$ of $\RR^4$ uniquely defined by the following 
Pl{\"u}cker coordinates viewed as a point in the projective space $\PP^5$:  
\begin{align}
&\big[ [12]_{\V} : [13]_{\V} : [14]_{\V} : [23]_{\V} : [24]_{\V} :[34]_{\V}\big]= \nonumber \\ 
&\big[1:2:4:0:-3:-6\big] \in \PP^5 \nonumber
\end{align}

The points $\alpha=[\alpha_{12}:\alpha_{13}:\alpha_{14}:\alpha_{23}:\alpha_{24}:\alpha_{34}] \in \PP^5$ that correspond to $2$-dimensional subspaces of $\RR^4$ are those that satisfy the (unique for this case) Pl{\"u}cker relation
$$\alpha_{14} \alpha_{23}+\alpha_{12}\alpha_{34}-\alpha_{13}\alpha_{24} = 0$$
Indeed, for $\V=\mathcal{C}(B)$ above we have $$4 \cdot 0+1\cdot(-6)-2\cdot(-3)=0$$ Hence $\Gr(2,4)$ is a projective hypersurface of $\PP^5$ of dimension $4 = 2 \cdot (4-2)$. 

To conveniently talk about polynomials on $\Gr(2,4)$ we consider the polynomial ring $\RR\big[[12],[13],[14],[23],[24],[34]\big]$ with coefficients in $\RR$ and polynomial variables the $[ij]$'s with $i<j \in \{1,2,3,4\}$. With that convention $\Gr(2,4)$ is the hypersurface of $\PP^5$ defined by the Pl\"ucker polynomial (relation) $$[14][23]+[12][34]-[13][24]$$
\end{ex}

\begin{ex} \label{ex:Gr(2,6)}
$\Gr(2,6)$, the set of all $2$-dimensional subspaces of $\RR^6$, is identified via the Pl{\"u}cker embedding as an $8$-dimensional projective variety of $\PP^{14}$ defined by $15$ Pl{\"u}cker relations. 
\end{ex}

For $\omega \subset [m]$ we let $\pi_{\omega}: \RR^m \rightarrow \RR^{\omega}$ be the coordinate projection that takes a vector $v = (v_i)_{i \in [m]}$ to the vector $\pi_{\omega}(v) = (v_i)_{i \in \omega}$. As a first application of the Pl{\"u}cker coordinates, a simple inspection determines the coordinate projections under which $\V$ preserves its dimension: 

\begin{prp} \label{prp:Gr-non-drop}
Let $\V \in \Gr(r,m)$ and $\psi \subseteq [m]$ with $\#\psi = r$. Then $\dim \pi_{\psi}(\V) = r$ if and only if $[\psi]_{\V} \neq 0$. 
\end{prp}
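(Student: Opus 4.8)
The plan is to prove Proposition~\ref{prp:Gr-non-drop} by unwinding both sides of the claimed equivalence into a single rank computation. Fix a basis $B \in \RR^{m \times r}$ of $\V$, so that $\V = \C(B)$. First I would observe that the restricted projection $\pi_\psi: \V \to \RR^\psi$ is a linear map between two $r$-dimensional spaces (since $\#\psi = r$), and its matrix representation with respect to the chosen basis of $\V$ is exactly the $r \times r$ submatrix $B_\psi$ obtained by selecting the rows of $B$ indexed by $\psi$. Thus $\pi_\psi(\V) = \C(B_\psi)$, and the dimension of this image equals $\rank(B_\psi)$.

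With that reduction in hand, the statement becomes a clean linear-algebra fact. On the one hand, $\dim \pi_\psi(\V) = r$ if and only if the $r \times r$ matrix $B_\psi$ has full rank $r$, which happens precisely when $\det(B_\psi) \neq 0$. On the other hand, by the definition of Pl\"ucker coordinates recalled earlier in \S\ref{subsection:Gr}, the coordinate $[\psi]_\V$ is (up to the nonzero common scale $\det(S)$ relating any two bases) exactly the $r \times r$ determinant $\det(B_\psi)$. Hence $[\psi]_\V \neq 0$ if and only if $\det(B_\psi) \neq 0$, and chaining the two equivalences closes the argument.

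The only point requiring a little care is that $[\psi]_\V$ is a projective coordinate, defined only up to a global nonzero scalar across all of $[m] \choose r$ minors simultaneously. I would emphasize that because the scalar $\det(S)$ is nonzero and common to every Pl\"ucker coordinate, the \emph{vanishing or non-vanishing} of an individual coordinate $[\psi]_\V$ is a well-defined, basis-independent property of $\V$; this is what makes the statement meaningful as written. Concretely, if $B_1 = B_2 S$ then $\det((B_1)_\psi) = \det(S)\det((B_2)_\psi)$, so the two determinants vanish together.

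I do not expect a serious obstacle here: the proposition is essentially a dictionary entry translating ``a coordinate projection preserves dimension'' into ``the corresponding maximal minor is nonzero,'' and the main work is simply identifying $\pi_\psi|_\V$ with the row-selection $B_\psi$ and invoking well-definedness of Pl\"ucker coordinates up to scale. If any subtlety arises it is notational, namely being explicit that the rows of $B$ indexed by $\psi$ give the matrix of $\pi_\psi|_\V$ in the fixed basis, so that the equality $\dim \pi_\psi(\V) = \rank(B_\psi)$ holds verbatim.
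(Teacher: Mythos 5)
Your proposal is correct and follows essentially the same route as the paper's proof: fix a basis $B$ of $\V$, identify $\pi_\psi(\V)$ with the column space of the $r \times r$ row submatrix $B_\psi$, and observe that full rank of $B_\psi$ is by definition the non-vanishing of $[\psi]_\V$. Your extra remark on basis-independence of the vanishing locus (via the common factor $\det(S)$) is a nice touch of rigor that the paper leaves implicit, but it does not change the argument.
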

\begin{proof}
Let $B=[b_1 \cdots b_r] \in \RR^{m \times r}$ be a basis of $\V$. Then $\pi_{\psi}(\V)$ is spanned by $\pi_{\psi}(b_j), \, j \in [r]$. Also $\dim \pi_{\psi}(\V) = r$ if and only if the $\pi_{\psi}(b_j)$'s are linearly independent. This is equivalent to the $r \times r$ row submatrix of $B$ indexed by $\psi$ being invertible, which in turn is by definition equivalent to the Pl{\"u}cker coordinate $[\psi]_{\V}$ being non-zero. 
\end{proof}

\begin{ex} 
Continuing with Example \ref{ex:Gr(2,4)}, all Pl{\"u}cker coordinates of $\V$ are non-zero except $[23]_{\V}$: the projection of $\V$ onto coordinates $\{2,3\}$ has dimension $1$. The projection of $\V$ onto any other two coordinates has dimension $2$. 
\end{ex} 

\subsubsection{Standard local coordinates on $\Gr(r,m)$} \label{subsection:local-coordinates-Gr}

If a subspace $\V \in \Gr(r,m)$ does not drop dimension upon projection onto the first $r$ coordinates, that is if $\V$ lies in the open set of $\Gr(r,m)$ on which the first Pl\"ucker coordinate $[\{1,\cdots,r\}]_{\V}$ is non-zero, then a canonical choice for a basis of $\V$ is possible. This is the unique basis $B_{\V}$ of $\V$ with the identity matrix on the top $r \times r$ block of $B$. The $(m-r) \times r$ bottom block of $B_{\V}$ contains the \emph{standard local coordinates} of $\V$, where \emph{local} signifies that $B_{\V}$ is uniquely determined by $\V$, providing that $[\{1,\cdots,r\}]_{\V} \neq 0$.  

\begin{ex}
Standard local coordinates on $\Gr(2,6)$ have the form 
$$B_{\V}=\begin{bmatrix} 
1 & 0 \\
0 & 1 \\
* & * \\
* & * \\
* & * \\
* & * 
\end{bmatrix} $$
\end{ex} 

\begin{ex}
In Example \ref{ex:Gr(2,4)} the bottom $2 \times 2$ part of the matrix $B$ consists of the standard local coordinates of $\V = \C(B) \in \Gr(2,4)$.
\end{ex}

\subsubsection{Standard dual local coordinates on $\Gr(r,m)$} \label{subsubsection:standard-dual}

The vectors of the canonical basis $B_\V$ above have in principle $m-r+1$ non-zero entries. Thus the relationship between sparsity and rank is in the opposite direction than the one governing LRMC. Let us explain what we mean by that. First, the lower the rank $r$ is the denser the basis $B_\V$ is. Second, $\dim \M(r,m\times n)$ is an increasing function of $r$, so that, for minimal observation patterns of size $\dim \M(r,m\times n)$ (recall Proposition \ref{prp:matroid}), the lower the rank is the fewer entries we observe per column. Putting these two facts together, we see that for small $r$, by looking at the incomplete columns of $\pi_\Omega(X)$, it is unlikely that any of them gives sufficient information for any of the columns of $B_{\V}$, because the latter are too dense, while the former are too sparse. 

Here comes one important insight in our approach: instead of representing $\V$ via $B_\V$, we represent it via a basis $B_{\V^\perp}$ for its orthogonal complement $\V^\perp$. 

\begin{ex} \label{ex:standard-dual-*}
For $\V \in \Gr(2,6)$ the standard local coordinates for $\V^\perp \in \Gr(4,6)$ have the form $$B_{\V^\perp}=\begin{bmatrix} 
1 & 0 & 0 & 0 \\
0 & 1 & 0 & 0 \\
0 & 0 & 1 & 0 \\
0 & 0 & 0 & 1 \\
* & * & * & * \\
* & * & * & * \\
\end{bmatrix} $$
\end{ex}
 
\noindent Notice that now the sparsity level per column has become $r+1$. As far as the nature of the $*$ entries is concerned, this is expressible in terms of the Pl{\"u}cker coordinates of $\V^\perp$ by direct determinant computations, as illustrated next: 

\begin{ex} \label{ex:standard-dual}
In Example \ref{ex:standard-dual-*} let us concentrate on the value $*$ of the matrix $B_{\V^\perp}$ at location $(5,2)$. If we compute the $4 \times 4$ determinant of $B_{\V^\perp}$ defined by the row indices $1,3,4,5$, we see that $*$ is precisely equal to that determinant, that is, it is equal to $[1345]_{\V^\perp} / [1234]_{\V^\perp}$. By a similar reasoning, we see that
$$ B_{\V^\perp} \sim \begin{bmatrix} 
[1234]_{\V^\perp} & 0 & 0 & 0 \\
0 & [1234]_{\V^\perp} & 0 & 0 \\
0 & 0 & [1234]_{\V^\perp} & 0 \\
0 & 0 & 0 & [1234]_{\V^\perp} \\
-[2345]_{\V^\perp} & [1345]_{\V^\perp} & -[1245]_{\V^\perp} & [1235]_{\V^\perp} \\
-[2346]_{\V^\perp} & [1346]_{\V^\perp} & -[1246]_{\V^\perp} & [1236]_{\V^\perp} \\
\end{bmatrix}$$ where $\sim$ denotes equality up to scale.
\end{ex}

Finally, there is a 1-1 correspondence between the Pl{\"u}cker coordinates of $\V^\perp$ and those of $\V$: for any $\psi \subset [m]$ with $\#\psi=r$ the Pl{\"u}cker coordinate $[\psi]_\V$ is up to sign equal to $[[m]\setminus \psi]_{\V^\perp}$. This allows us to express the basis $B_{\V^\perp}$ of $\V^\perp$ in terms of a subset of the Pl\"ucker coordinates of $\V$: 

\begin{ex} 
Continuing with Example \ref{ex:standard-dual} we have that
\begin{align} B_{\V^\perp} \sim \begin{bmatrix} 
[56]_{\V} & 0 & 0 & 0 \\
0 & [56]_{\V} & 0 & 0 \\
0 & 0 & [56]_{\V} & 0 \\
0 & 0 & 0 & [56]_{\V} \\
-[16]_{\V} & -[26]_{\V} & -[36]_{\V} & -[46]_{\V} \\
[15]_{\V} & [25]_{\V} & [35]_{\V} & [45]_{\V} \\
\end{bmatrix} \label{eq:dual}
\end{align}
Since knowledge of $\V^\perp$ is equivalent to knowledge of $\V$, the Pl\"ucker coordinates of $\V$ that appear in \eqref{eq:dual} uniquely determine $\V$ and are uniquely determined by it on the open set of $\Gr(2,6)$ given by $[56]_{\V} \neq 0$. We call them standard dual coordinates of $\V$. 
\end{ex}

\subsubsection{Dual local coordinates on $\Gr(r,m)$ by SLMF's} \label{subsubsection:dual-local-SLMF}

With $\V=\C(X)$ the column-space of $X$, by working with a representation of $\V$ in terms of standard dual local coordinates as in \eqref{eq:dual}, it is more likely that information about the columns of $B_{\V}^\perp$, and thus about $\V$ itself, can be recovered from the columns of $\pi_\Omega(X)$. To be more precise, but still somewhat vague, what is needed is that the columns of $\Omega$ support sufficiently many and sufficiently well-distributed projections of the columns of $B_{\V}^\perp$; the importance of this will become evident inside the proof of Theorem \ref{thm:finite}. 

Still, there is an evident lack of flexibility since the sparsity pattern of $B_{\V}^\perp$ is fixed by our choice to work with the open set $[56]_{\V} \neq 0$. There is nothing special about the choice $[56]_{\V}$: one can go ahead and consider the standard dual local coordinates associated to $[ij]_{\V} \neq 0$ for any $1\le i<j\le 6$ and get another sparsity pattern for $B_{\V}^\perp$. This might be better or worse for a given $\Omega$, but equally limited once we ask that it works for all $\Omega$'s. Remarkably, these types of standard dual local coordinates are special instances of a large, much less known, class of local coordinates on $\Gr(r,m)$, induced by SLMF's (Definition \ref{dfn:SLMF} and  \cite{sturmfels1993maximal}). 
A key insight in our approach is to take this entire family of sparse bases simultaneously into account; again this will become clear inside the proof of Theorem \ref{thm:finite}. 
 
Let $\Phi = \bigcup_{j \in [m-r]} \varphi_j \times \{j\} \subset [m] \times [m-r]$ be an $(r,m)$-SLMF, where we assume that the $\varphi_j$'s are ordered. With $\varphi_{ij}$ the $i$th element of $\varphi_j$, we define an $m \times (m-r)$ matrix $B_\Phi=(b_{ij})$ over Pl{\"u}cker coordinates (viewed as polynomial variables associated to $\PP^{{m\choose r}-1}$; Example \ref{ex:Gr(2,4)}) as
\begin{align}
&b_{\varphi_{ij} j} = (-1)^{i-1}\big[ \varphi_j \setminus \{\varphi_{ij}\}\big]\, \, \, \text{for} \, \, \,  i=1,\dots,r+1 \, \, \, \, \, \, \nonumber \\
&\text{and} \, \, \, \, \, \, b_{kj } = 0 \, \, \, \text{for} \, \, \,  k \not\in \varphi_j  \label{eq:B-Phi}
\end{align} With $B_\Phi|_{\V}$ the evaluation of $B_\Phi$ at $\V$ (variable $[\psi]$ goes to Pl\"ucker coordinate $[\psi]_\V$), we have the important fact:

\begin{prp}[Theorem 4.6 in \cite{sturmfels1993maximal}, paraphrased] \label{prp:SLMF}
If $\Phi$ is an $(r,m)$-SLMF, there is a dense open set $\sV_\Phi \subset \Gr(r,m)$ such that for every $\V \in \sV_\Phi$ the $B_\Phi|_{\V}$ in \eqref{eq:B-Phi} is a basis for $\V^\perp$.
\end{prp}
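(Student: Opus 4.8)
The plan is to separate the two requirements for $B_\Phi|_{\V}$ to be a basis of $\V^\perp$: that each of its $m-r$ columns actually lies in $\V^\perp$, and that these columns are linearly independent. I claim the first requirement holds for \emph{every} $\V \in \Gr(r,m)$, and it is the algebraic content of why the construction \eqref{eq:B-Phi} is ``dual.'' Fix $j\in[m-r]$ and a basis $B=[b_1\,\cdots\,b_r]$ of $\V$. The $j$th column of $B_\Phi|_{\V}$ is supported on $\varphi_j$, carrying the entry $(-1)^{i-1}[\varphi_j\setminus\{\varphi_{ij}\}]_{\V}$ at row $\varphi_{ij}$. For any $v\in\V$ the inner product of this column with $v$ equals $\sum_{i=1}^{r+1}(-1)^{i-1}[\varphi_j\setminus\{\varphi_{ij}\}]_{\V}\, v_{\varphi_{ij}}$, which by Laplace expansion along the first column is exactly the determinant of the $(r+1)\times(r+1)$ matrix whose columns are the projections onto the coordinates $\varphi_j$ of $v,b_1,\dots,b_r$. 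Since $v\in\C(B)$, these $r+1$ columns are linearly dependent and the determinant vanishes. Hence every column of $B_\Phi|_{\V}$ is orthogonal to all of $\V$, i.e. lies in $\V^\perp$. This is a polynomial identity and needs no genericity.

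Since $\dim\V^\perp=m-r$ equals the number of columns of $B_\Phi|_{\V}$, the matrix is a basis of $\V^\perp$ if and only if its columns are linearly independent, i.e. $B_\Phi|_{\V}$ has full column rank $m-r$. Full column rank is equivalent to the non-vanishing of at least one of the $(m-r)\times(m-r)$ minors of $B_\Phi|_{\V}$, and each such minor is a polynomial in the Pl\"ucker coordinates, hence a regular function on $\Gr(r,m)$. Consequently the locus $\sV_\Phi\subset\Gr(r,m)$ on which $B_\Phi|_{\V}$ has full column rank is Zariski-open, its complement being cut out by the simultaneous vanishing of all these minors. Because $\Gr(r,m)$ is irreducible, Proposition \ref{prp:open} makes $\sV_\Phi$ dense as soon as it is nonempty. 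So the whole statement reduces to exhibiting a single $\V$ at which $B_\Phi|_{\V}$ has full column rank.

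This nonemptiness step is the main obstacle, and it is precisely where the SLMF hypothesis \eqref{eq:SLMF} is indispensable. The difficulty is that the $m-r$ columns are not independent quantities: they are all assembled from the Pl\"ucker coordinates of one and the same $\V$, so the hyperplane normals of the projections $\pi_{\varphi_j}(\V)$ are coupled whenever the supports $\varphi_j$ overlap, and one cannot simply invoke the independent-entry statement of Proposition \ref{prp:SLMF-algebraic}. I would resolve it by appealing to the linear-algebraic characterization already recorded in Proposition \ref{prp:SLMF-V-check}: for a generic $r$-dimensional $\V$ the matrix $B_{\V}$ built from the hyperplane normals $h_j$ --- which agrees columnwise, up to nonzero scalars, with $B_\Phi|_{\V}$ on the open set where each $\pi_{\varphi_j}(\V)$ is a genuine hyperplane --- has full column rank \emph{if and only if} $\Phi$ is an $(r,m)$-SLMF. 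Since $\Phi$ is assumed to be an SLMF, generic $\V$ does yield full column rank, which both establishes that $\sV_\Phi$ is nonempty and identifies it with the corresponding dense open set. Equivalently, one may argue directly that the inequalities \eqref{eq:SLMF} force a Hall-type transversal rendering some maximal minor a nonzero function on the standard coordinate cell --- the route underlying Sturmfels--Zelevinsky's Theorem 4.6 --- and this is the technically delicate part where the combinatorics of \eqref{eq:SLMF} does the real work.

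Combining the three steps finishes the proof: on the dense open set $\sV_\Phi$ the $m-r$ columns of $B_\Phi|_{\V}$ lie in the $(m-r)$-dimensional space $\V^\perp$ and are linearly independent, hence they form a basis of $\V^\perp$.
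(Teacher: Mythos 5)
First, a remark on the comparison itself: the paper does not prove this proposition at all. As its bracketed label indicates, it is imported from Theorem 4.6 of Sturmfels--Zelevinsky \cite{sturmfels1993maximal} and is used as a black box throughout \S \ref{section:proofs}. So you are attempting to reprove an external result, and the question is whether your argument stands on its own. Two of your three steps do: the Laplace-expansion computation showing that every column of $B_\Phi|_{\V}$ is orthogonal to $\V$ is correct and holds for \emph{every} $\V$ (it is the same computation that underlies Proposition \ref{prp:sections}), and the reduction of the whole statement to non-emptiness of the full-rank locus --- openness of that locus plus Proposition \ref{prp:open} on the irreducible $\Gr(r,m)$ --- is also correct.

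The genuine gap is in the non-emptiness step, the only place where the SLMF hypothesis enters. Your primary route, citing Proposition \ref{prp:SLMF-V-check}, is circular relative to this paper: the direction you need ($\Phi$ an SLMF implies $B_\V$ has full column rank for generic $\V$) is proved in \S \ref{section:proofs} by the single sentence ``The \emph{only if} part follows from Proposition \ref{prp:SLMF}'' --- that is, from exactly the statement you are trying to prove; only the converse direction receives an independent argument there. Your fallback route (``a Hall-type transversal rendering some maximal minor a nonzero function'') names the difficulty without resolving it: a transversal guaranteed by \eqref{eq:SLMF} shows non-vanishing of a maximal minor when the entries are \emph{independent} indeterminates, which is Proposition \ref{prp:SLMF-algebraic}, but the entries of $B_\Phi|_{\V}$ are Pl\"ucker coordinates of one and the same subspace, so the monomial produced by a transversal can cancel against other terms of the determinant expansion; ruling out such cancellation is the actual content of Sturmfels--Zelevinsky's theorem, and you acknowledge this coupling yourself earlier in the proposal. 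A non-circular repair consistent with the tools in this paper would be: by Proposition \ref{prp:SLMF-algebraic} pick a generic matrix $B$ supported on $\Phi$ with all $(m-r)\times(m-r)$ minors non-zero, set $\V := \C(B)^\perp \in \Gr(r,m)$, and show that each column of $B_\Phi|_{\V}$ is a non-zero scalar multiple of the corresponding column of $B$, whence $\rank B_\Phi|_{\V} = m-r$. But this needs two further Hall-type consequences of \eqref{eq:SLMF} --- that the vectors of $\C(B)$ supported on $\varphi_j$ are exactly the multiples of the $j$th column of $B$, and that $\dim \pi_{\varphi_j}(\V) = r$ so that the $j$th column of $B_\Phi|_{\V}$ is non-zero --- neither of which is automatic. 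As written, your proposal stops exactly where this work begins.
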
 

\begin{ex} \label{ex:SLMF-Phi1-Phi2-B}
The $(2,6)$-SLMF's $\Phi_1$ and $\Phi_2$ of Example \ref{ex:SLMF-Phi1-Phi2} induce the following bases $B_{\Phi_1}|_{\V}$ and $B_{\Phi_2}|_{\V}$ for $\V^\perp$, each valid for $\V \in \sV_{\Phi_1}$ and $\V \in \sV_{\Phi_2}$, respectively:
$$B_{\Phi_1}|_{\V} = \begin{bmatrix} 
[23]_{\V} & [24]_{\V} & [25]_{\V} & 0 \\
-[13]_{\V}& -[14]_{\V} & -[15]_{\V} & 0 \\
[12]_{\V} & 0 & 0 & 0 \\
0 & [12]_{\V} & 0 & [56]_{\V} \\
0 & 0 & [12]_{\V} & -[46]_{\V} \\
0& 0& 0& [45]_{\V}
\end{bmatrix} $$
$$B_{\Phi_2}|_{\V} = 
\begin{bmatrix}
0 \, \, \, \, \, \, \,& [24]_{\V} & [25]_{\V} & [35]_{\V} \\
[46]_{\V} & -[14]_{\V} & -[15]_{\V} & 0 \, \, \, \, \, \, \, \\
0 \, \, \, \, \, \, \, & 0 \, \, \, \, \, \, \, & 0 \, \, \, \, \, \, \, & -[15]_{\V} \\
-[26]_{\V}  & [12]_{\V} & 0 \, \, \, \, \, \, \, & 0 \, \, \, \, \, \, \, \\
0 \, \, \, \, \, \, \,& 0 \, \, \, \, \, \, \,& [12]_{\V} & [13]_{\V} \\
[24]_{\V} & 0 \, \, \, \, \, \, \,& 0 \, \, \, \, \, \, \, & 0 \, \, \, \, \, \, \, \\
\end{bmatrix}$$  
\end{ex}

\section{LRMC via Pl\"ucker Coordinates} \label{section:MC-PC}

\myparagraph{LRMC as a problem on the Grassmannian} We recall from \S \ref{subsection:determinantal} the projection map $\pi_\Omega:\M(r,m \times n) \rightarrow \RR^\Omega$, write $X=[x_1 \cdots x_n] \in \M(r,m\times n)$ in column form and denote by $\C(X)$ the column-space of $X$. Let us begin by observing that since $x_j \in \C(X)$ for every $j \in [n]$, and since this relation is preserved under any linear projection, one can read from the incomplete matrix $\pi_{\Omega}(X)$ the geometric relations 
\begin{align}
\pi_{\omega_j}(x_j) \in \pi_{\omega_j}(\C(X)), \, \forall j \in [n] \label{eq:geometric-C}
\end{align} In LRMC we know $\omega_j$ and $\pi_{\omega_j}(x_j)$ but we know neither $x_j$ nor $\C(X)$ or its projections $\pi_{\omega_j}(\C(X))$. If the relations \eqref{eq:geometric-C} were somehow solved for the unknown $\C(X)$, and if it so happened that $\dim \pi_{\omega_j}(\C(X))=r$ for every $j \in [n]$, then unique completion of $\pi_{\Omega}(X)$ to $X$ would be immediate:

\begin{prp} \label{prp:Unique}
If $\dim \pi_{\omega_j}(\C(X)) = r, \, \forall j \in [n]$, then $X$ is the unique completion of $\pi_{\Omega}(X)$ with column space $\C(X)$, obtained from $\pi_{\Omega}(X)$ and $\C(X)$ by solving $n$ linear systems of equations.
\end{prp}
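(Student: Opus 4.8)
The plan is to reduce the problem to solving, for each column independently, a single (possibly overdetermined) linear system in $r$ unknowns whose coefficient matrix is guaranteed to have full column rank by the hypothesis. First I would fix any basis $B \in \RR^{m \times r}$ of $\V := \C(X)$. Since every column $x_j$ of $X$ lies in $\V$, there is a coordinate vector $c_j \in \RR^r$ with $x_j = B c_j$, uniquely determined once $B$ is fixed. Applying the coordinate projection $\pi_{\omega_j}$ and using its linearity yields $\pi_{\omega_j}(x_j) = \big(\pi_{\omega_j} B\big)\, c_j$, where $\pi_{\omega_j} B$ denotes the $(\#\omega_j) \times r$ row-submatrix of $B$ indexed by $\omega_j$. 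The left-hand side is exactly the observed data $\pi_{\Omega}(X)$ restricted to column $j$, so from $\pi_{\Omega}(X)$ and $\C(X)$ we obtain, for each $j \in [n]$, the linear system $\big(\pi_{\omega_j} B\big)\, c_j = \pi_{\omega_j}(x_j)$ in the unknown $c_j$.

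Next I would translate the dimension hypothesis into a rank statement about these coefficient matrices. The image $\pi_{\omega_j}(\V)$ is spanned by the projected basis vectors, i.e.\ by the columns of $\pi_{\omega_j} B$, so $\dim \pi_{\omega_j}(\V) = \rank\big(\pi_{\omega_j} B\big)$. The assumption $\dim \pi_{\omega_j}(\C(X)) = r$ therefore says precisely that $\pi_{\omega_j} B$ has full column rank $r$; this is the several-rows analogue of Proposition \ref{prp:Gr-non-drop}, proved in the same way.

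Finally, a matrix of full column rank defines an injective linear map, so the system $\big(\pi_{\omega_j} B\big)\, c_j = \pi_{\omega_j}(x_j)$ has at most one solution. Existence is immediate, since the true coordinate vector $c_j$ solves it by construction; hence $c_j$ is the unique solution and $x_j = B c_j$ is uniquely recovered. Reassembling the columns gives $X = B[c_1 \cdots c_n]$, so $X$ is the only matrix with column space $\C(X)$ (indeed, with column space contained in $\C(X)$) that agrees with $\pi_{\Omega}(X)$, and it is produced by solving the $n$ linear systems indexed by $j \in [n]$. There is no serious obstacle here: the only point requiring a moment's care is the equivalence $\dim \pi_{\omega_j}(\V) = r \iff \pi_{\omega_j} B$ has full column rank, after which uniqueness follows at once from the injectivity of a full-column-rank map together with the guaranteed existence of the true solution.
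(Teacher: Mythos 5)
Your proof is correct and follows essentially the same route as the paper: the paper reduces the statement column-wise to Lemma~\ref{lem:vector-completion} (uniqueness of the vector $v \in \V$ with prescribed projection when $\dim \pi_{\omega_j}(\V) = r$, recovered by solving a linear system), which is exactly your observation that $\pi_{\omega_j}B$ has full column rank and hence the system $(\pi_{\omega_j}B)\,c_j = \pi_{\omega_j}(x_j)$ has the unique solution $c_j$. The only cosmetic difference is that the paper's lemma solves an exactly determined $r \times r$ system via an invertible submatrix $\pi_\psi(B)$, $\psi \subset \omega_j$, whereas you keep the possibly overdetermined system and invoke injectivity, which is the same fact.
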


\noindent The proof of Proposition \ref{prp:Unique} is an immediate consequence of the following basic linear algebra fact: 

\begin{lem} \label{lem:vector-completion}
Let $\V$ be an $r$-dimensional subspace of $\RR^m$ and $\omega \subset [m]$ a subset of coordinates of $\RR^m$ such that $\dim \pi_\omega(\V) = r$. Let $x \in \RR^m$ be a vector such that $\pi_\omega(x) \in \pi_\omega(\V)$. Then there exists a unique vector $v \in \V$ such that $\pi_\omega(v) = \pi_\omega(x)$. 

Let $B=[b_1 \cdots b_r] \in \RR^{m \times r}$ be a basis of $\V$, $\psi \subset \omega$ such that $\#\psi=r$ and $\dim \pi_\psi(\V) = r$ and set $\pi_{\psi}(B) := [\pi_{\psi}(b_1) \cdots \pi_{\psi}(b_r)] \in \RR^{r \times r}$. Then $v = B \pi_{\psi}(B)^{-1} \pi_\psi(x)$.  
\end{lem}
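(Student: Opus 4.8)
The plan is to deduce the existence–uniqueness part abstractly from the rank–nullity theorem and then verify that the displayed formula produces precisely that unique vector.

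First I would observe that the coordinate projection restricts to a linear map $\pi_\omega|_{\V} : \V \to \RR^{\omega}$ whose image is $\pi_\omega(\V)$. Since by hypothesis $\dim \pi_\omega(\V) = r = \dim \V$, rank--nullity forces $\ker(\pi_\omega|_{\V}) = \{0\}$, so $\pi_\omega|_{\V}$ is injective, and it is surjective onto $\pi_\omega(\V)$ by construction. The assumption $\pi_\omega(x) \in \pi_\omega(\V)$ then yields a vector $v \in \V$ with $\pi_\omega(v) = \pi_\omega(x)$, giving existence, while injectivity gives uniqueness: if $v, v' \in \V$ both project to $\pi_\omega(x)$, then $v - v' \in \ker(\pi_\omega|_{\V}) = \{0\}$.

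Next I would treat the explicit formula. Because $\dim \pi_\psi(\V) = r$, the columns $\pi_\psi(b_1), \dots, \pi_\psi(b_r)$ of $\pi_\psi(B)$ span an $r$-dimensional space, so the square matrix $\pi_\psi(B) \in \RR^{r \times r}$ is invertible and $v := B\,\pi_\psi(B)^{-1}\pi_\psi(x)$ is well defined. Clearly $v \in \C(B) = \V$. Using that $\pi_\psi$ is linear with $\pi_\psi(Bw) = \pi_\psi(B)\,w$ for every $w \in \RR^{r}$, one gets $\pi_\psi(v) = \pi_\psi(B)\,\pi_\psi(B)^{-1}\pi_\psi(x) = \pi_\psi(x)$.

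The one point that needs care --- and the only step that is not purely mechanical --- is upgrading the equality from the small index set $\psi$ to the full set $\omega$: the formula manifestly guarantees only $\pi_\psi(v) = \pi_\psi(x)$, whereas the claim is $\pi_\omega(v) = \pi_\omega(x)$. To bridge this I would apply the same rank--nullity argument to $\pi_\psi|_{\V}$, which is injective since $\dim \pi_\psi(\V) = r = \dim \V$, so there is a \emph{unique} vector of $\V$ with $\psi$-projection equal to $\pi_\psi(x)$. Both the $v$ just constructed and the existence-vector $\tilde v$ from the first part lie in $\V$ and satisfy $\pi_\psi(\tilde v) = \pi_\psi(x) = \pi_\psi(v)$, the former because $\psi \subset \omega$ and $\pi_\omega(\tilde v) = \pi_\omega(x)$. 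By injectivity of $\pi_\psi|_{\V}$ they coincide, $v = \tilde v$, and therefore $\pi_\omega(v) = \pi_\omega(\tilde v) = \pi_\omega(x)$, identifying the formula with the unique completion.
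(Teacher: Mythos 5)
Your proof is correct and complete. The paper itself offers no argument for this lemma---it is stated as a ``basic linear algebra fact'' supporting Proposition \ref{prp:Unique}---so there is no official proof to diverge from; your rank--nullity argument is precisely the standard one the paper implicitly relies on. You also correctly isolated the one genuinely non-mechanical step, namely that the formula $v = B\,\pi_{\psi}(B)^{-1}\pi_\psi(x)$ a priori only guarantees agreement with $x$ on $\psi$, and that injectivity of $\pi_\psi|_{\V}$ (which holds since $\dim \pi_\psi(\V) = r = \dim \V$) is what upgrades this to agreement on all of $\omega$ by forcing $v$ to coincide with the unique completion from the first part.
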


Proposition \ref{prp:Unique} allows the following reformulation: 

\begin{form}[LRMC as a problem on the Grassmannian] \label{form:subspace}
Suppose that there is a linear subspace $\V$ of $\RR^m$ of dimension $r$ and $n$ points $x_j \in \V$ with $j \in [n]$. We know neither $\V$ nor the $x_j$'s. Suppose that we are given $n$ coordinate projections $\pi_{\omega_j} : \RR^m \rightarrow \RR^{\omega_j}$ and the knowledge that $\dim \pi_{\omega_j}(\V) = r$ for every $j \in [n]$. 
Then the problem is to determine $\V$ from the projected points $\pi_{\omega_j}(x_j), \, j \in [n]$.
\end{form}

In view of Formulation \ref{form:subspace}, we will head towards reducing the problem of whether $\Omega$ is generically uniquely or finitely completable (Definition \ref{dfn:gfc}) to whether there is a unique generic or finitely many generic subspaces $\V$ that agree with $\pi_{\Omega}(X)$ for a generic matrix $X$. This might be intuitively expected, but we need to be careful in formalizing it, especially to avoid a confusion with the use of $\emph{generic}$; indeed, the reader may have already noticed the again intuitively expected fact that, if we replace \emph{generic} with \emph{random}, then certainly no generic matrix agrees with $\pi_\Omega(X)$! 

Let us restrict our attention to a set of subspaces and matrices that are convenient to work with, while still being able to use them for infering properties of interest of $\Omega$:

\begin{lem} \label{lem:U-G}
Let $\sV_\Omega$ be the set of subspaces $\V \in \Gr(r,m)$ for which $\dim \pi_{\omega_j}(\V) = r, \, \forall j \in [n]$. Then $\sV_\Omega$ is a non-empty open (and thus dense) subset of $\Gr(r,m)$. 
\end{lem}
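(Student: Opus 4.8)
The plan is to reduce the full-rank-projection condition to the non-vanishing of Pl\"ucker coordinates via Proposition \ref{prp:Gr-non-drop}, and then obtain openness and density from the irreducibility of $\Gr(r,m)$ through Proposition \ref{prp:open}. First I would analyze a single column $j$, studying the set $\sV_{\omega_j} := \{\V \in \Gr(r,m) : \dim \pi_{\omega_j}(\V) = r\}$, and show it is non-empty and open; the full result then follows by intersecting over $j \in [n]$.

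The key observation for a fixed $j$ is the identity
$$ \sV_{\omega_j} = \bigcup_{\substack{\psi \subseteq \omega_j \\ \#\psi = r}} \{\V \in \Gr(r,m) : [\psi]_\V \neq 0\}. $$
To see the inclusion ``$\supseteq$'', if $[\psi]_\V \neq 0$ for some $\psi \subseteq \omega_j$ with $\#\psi = r$, then by Proposition \ref{prp:Gr-non-drop} we have $\dim \pi_\psi(\V) = r$; since $\pi_\psi(\V)$ is a further projection of $\pi_{\omega_j}(\V)$ (because $\psi \subseteq \omega_j$) and $\dim \pi_{\omega_j}(\V) \le \dim \V = r$, this forces $\dim \pi_{\omega_j}(\V) = r$. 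For ``$\subseteq$'', if $\dim \pi_{\omega_j}(\V) = r$ then a basis matrix $B$ of $\V$ has a $\#\omega_j \times r$ row-submatrix of rank $r$, hence some $r \times r$ minor indexed by $\psi \subseteq \omega_j$ with $\#\psi = r$ is non-zero, i.e. $[\psi]_\V \neq 0$. Each set $\{\V : [\psi]_\V \neq 0\}$ is the complement in $\Gr(r,m)$ of the zero locus of the Pl\"ucker coordinate $[\psi]$, so it is Zariski-open, and a union of open sets is open; thus $\sV_{\omega_j}$ is open.

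For non-emptiness I would use the standing hypothesis $\#\omega_j \ge r$ (condition (i) of Theorems \ref{thm:finite}--\ref{thm:unique}): pick any $\psi \subseteq \omega_j$ with $\#\psi = r$ and take the coordinate subspace $\V = \Span\{e_i : i \in \psi\}$, for which $\pi_\psi(\V) = \RR^\psi$ has dimension $r$, so $[\psi]_\V \neq 0$ and $\V \in \sV_{\omega_j}$. Finally, $\sV_\Omega = \bigcap_{j \in [n]} \sV_{\omega_j}$ is a finite intersection of non-empty open subsets of the irreducible variety $\Gr(r,m)$, so Proposition \ref{prp:open} yields that it is non-empty, open, and dense, which is exactly the assertion.

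The argument presents no serious obstacle; the only points requiring care are the non-emptiness of each $\sV_{\omega_j}$, which is precisely where $\#\omega_j \ge r$ must enter, and the routine check that non-vanishing of a Pl\"ucker coordinate defines a Zariski-open condition on $\Gr(r,m)$. Density is then automatic from irreducibility via Proposition \ref{prp:open}.
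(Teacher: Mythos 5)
Your proof is correct and takes essentially the same route as the paper: both reduce the condition $\dim\pi_{\omega_j}(\V)=r$ to the non-vanishing of some Pl\"ucker coordinate $[\psi]_\V$ with $\psi\subseteq\omega_j$, $\#\psi=r$ (Proposition \ref{prp:Gr-non-drop}), note each such locus is non-empty and open, and then invoke irreducibility via Proposition \ref{prp:open}. The only difference is bookkeeping --- the paper expresses $\sV_\Omega$ as a finite union over tuples $\K$ of intersections $\bigcap_{j}\sV_{\psi_{j,k_j}}$, while you write it as the intersection over $j$ of the unions $\sV_{\omega_j}$, which is the same set by distributivity; your explicit remark that $\#\omega_j\ge r$ is where non-emptiness enters is a point the paper leaves implicit.
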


\begin{lem} \label{lem:U-M}
Let $\U_\Omega$ be the set of matrices $X \in \M(r, m \times n)$ whose column-space lies in $\sV_\Omega$. Then $\U_\Omega$ is a non-empty open (and thus dense) subset of $\M(r, m \times n)$.
\end{lem}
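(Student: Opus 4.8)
The plan is to give an explicit, coordinate-level description of $\U_\Omega$ that exhibits it as a finite intersection of basic open sets, after which non-emptiness follows from Lemma \ref{lem:U-G} and density from the irreducibility of $\M(r,m\times n)$. Writing $X=[x_1\cdots x_n]$ and letting $X_{\omega_j}$ denote the $(\#\omega_j)\times n$ submatrix of $X$ retaining the rows indexed by $\omega_j$, the first step is the elementary observation that $\pi_{\omega_j}(\C(X))=\C(X_{\omega_j})$, so that $\dim\pi_{\omega_j}(\C(X))=\rank(X_{\omega_j})$. Since $X\in\M(r,m\times n)$ forces $\rank(X_{\omega_j})\le\rank(X)\le r$, the requirement $\rank(X_{\omega_j})=r$ for even a single $j$ already forces $\rank(X)=r$, and hence $\C(X)\in\Gr(r,m)$. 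Consequently $X\in\U_\Omega$ if and only if $\rank(X_{\omega_j})=r$ for every $j\in[n]$; in particular the rank-$r$ condition on $X$ is subsumed and need not be imposed separately.

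For openness, I would note that for each fixed $j$ the condition $\rank(X_{\omega_j})=r$ is, by the previous paragraph, equivalent to $\rank(X_{\omega_j})\ge r$, i.e. to the non-vanishing of at least one $r\times r$ minor of $X_{\omega_j}$. Each such minor is a polynomial in the entries of $X$, so $\{X\in\M(r,m\times n):\rank(X_{\omega_j})=r\}$ is the complement of the common zero locus of these minors and is therefore an open set of $\M(r,m\times n)$ in the Zariski topology (\S\ref{subsection:determinantal}). Thus $\U_\Omega=\bigcap_{j\in[n]}\{X:\rank(X_{\omega_j})=r\}$ is a finite intersection of open sets, hence open.

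Non-emptiness I would deduce directly from Lemma \ref{lem:U-G}, which provides a subspace $\V\in\sV_\Omega$: any $m\times n$ matrix $X$ whose columns span $\V$ (such $X$ has rank $r$ and exists since $n\ge r$) satisfies $\C(X)=\V\in\sV_\Omega$, i.e. $X\in\U_\Omega$. Finally, since $\M(r,m\times n)$ is irreducible (\S\ref{subsection:determinantal}), Proposition \ref{prp:open} upgrades any non-empty open subset to a dense one, giving density of $\U_\Omega$. The only point demanding care — and the nearest thing to an obstacle — is the bookkeeping identity $\dim\pi_{\omega_j}(\C(X))=\rank(X_{\omega_j})$ together with the observation that it renders the rank-$r$ constraint automatic; once this reformulation is in place, the statement reduces to the standard fact that bounded-rank loci of submatrices are closed, and everything else is immediate.
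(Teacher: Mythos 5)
Your proof is correct, but it establishes openness by a genuinely different route than the paper. You exploit the concrete form of $\sV_\Omega$: since $\pi_{\omega_j}(\C(X))=\C(X_{\omega_j})$, membership in $\U_\Omega$ is equivalent to $\rank(X_{\omega_j})=r$ for all $j$ (with the rank-$r$ condition on $X$ coming for free), so $\U_\Omega$ is a finite intersection of loci where some $r\times r$ minor of a row submatrix is non-vanishing. The paper instead argues at the level of the Grassmannian: it writes $X\in\U_\Omega$ iff some $m\times r$ column submatrix $X'$ has rank $r$ and its Pl\"ucker embedding avoids the equations cutting out the complement of $\sV_\Omega$; substituting the $r\times r$ minors of $X'$ into those equations produces polynomials in the entries of $X$, exhibiting $\U_\Omega$ as a union over column choices of open sets. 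Your argument is more elementary and transparent — no Pl\"ucker coordinates are needed, and the reformulation via $\rank(X_{\omega_j})$ is clean bookkeeping — but it is tailored to open sets of the specific form $\sV_\Omega$. The paper's pullback argument, by contrast, shows that the preimage under $X\mapsto\C(X)$ of \emph{any} open subset of $\Gr(r,m)$ is open in $\M(r,m\times n)$, and this extra generality is reused later (e.g., in the proof of Theorem \ref{thm:finite}, where the same reasoning is invoked for the open set $\sV_{\Phi_1}\cap\sV_{\Phi_2}$, which is not of the form $\sV_\Omega$). The non-emptiness and density steps (spanning a $\V\in\sV_\Omega$ by columns, then Proposition \ref{prp:open} via irreducibility) coincide with the paper's; note that both your construction and the paper's implicitly use the standing assumptions $n\ge r$ and $\#\omega_j\ge r$, so you are at parity there.
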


The key ingredient that we need for our reduction is:

\begin{lem} \label{lem:U->V}
Let $\U_\Omega'$ be any non-empty open set of $\U_\Omega$. For any $X \in \U_\Omega'$ there exists a non-empty open set $\sV_{\Omega,X} \subset \sV_\Omega$, such that there is a 1-1 correspondence $X' \mapsto \C(X')$ between 1) completions $X' \in \U_\Omega'$ of $\pi_\Omega(X)$ and 2) subspaces $\V \in \sV_{\Omega,X}$ satisfying 
\begin{align}
\pi_{\omega_j}(x_j) \in \pi_{\omega_j}(\V), \, \forall j \in [n] \label{eq:geometric}
\end{align} 
\end{lem}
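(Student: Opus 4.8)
The plan is to exhibit the correspondence explicitly through two maps and then carve out the right open set so that they become mutually inverse bijections. Write $X=[x_1\cdots x_n]$ and let $\Z_X := \{\V\in\sV_\Omega : \pi_{\omega_j}(x_j)\in\pi_{\omega_j}(\V),\ \forall j\in[n]\}$ be the locus of subspaces satisfying \eqref{eq:geometric}; this is a (locally) closed subset of $\sV_\Omega$ cut out by the vanishing of appropriate $(r+1)\times(r+1)$ minors. The forward map is $\C : X'\mapsto\C(X')$. The backward map $\Psi:\Z_X\to\M(r,m\times n)$ is the reconstruction furnished by Lemma \ref{lem:vector-completion}: since $\V\in\sV_\Omega$ gives $\dim\pi_{\omega_j}(\V)=r$ and $\V\in\Z_X$ gives $\pi_{\omega_j}(x_j)\in\pi_{\omega_j}(\V)$, the lemma produces for each $j$ a unique $x_j'(\V)\in\V$ with $\pi_{\omega_j}(x_j'(\V))=\pi_{\omega_j}(x_j)$, and we set $\Psi(\V)=[x_1'(\V)\cdots x_n'(\V)]$. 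First I would record the three facts that make the maps interact correctly: (i) for any completion $X'=[x_1'\cdots x_n']\in\U_\Omega'$ of $\pi_\Omega(X)$ one has $\C(X')\in\sV_\Omega$ (as $X'\in\U_\Omega$) and $\pi_{\omega_j}(x_j)=\pi_{\omega_j}(x_j')\in\pi_{\omega_j}(\C(X'))$, so $\C(X')\in\Z_X$; (ii) each column of $\Psi(\V)$ lies in $\V$, hence $\C(\Psi(\V))\subseteq\V$, and $\Psi(\V)$ is genuinely a completion of $\pi_\Omega(X)$ precisely because $\V\in\Z_X$; (iii) $\Psi$ is continuous on $\Z_X$, since the formula $v=B_\V\,\pi_\psi(B_\V)^{-1}\pi_\psi(x_j)$ of Lemma \ref{lem:vector-completion} is regular on each chart $\{[\psi]_\V\neq0\}$ and the uniqueness clause of the lemma forces the local formulas to agree on overlaps, so they glue to a morphism independent of the chosen size-$r$ index sets.

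Next I would check that $\C$ and $\Psi$ are mutually inverse wherever both make sense. For a completion $X'\in\U_\Omega'$, the columns $x_j'$ lie in $\C(X')$ and match $\pi_{\omega_j}(x_j)$, so by uniqueness in Lemma \ref{lem:vector-completion} they are exactly the reconstructed columns; thus $\Psi(\C(X'))=X'$. In the other direction $\C(\Psi(\V))\subseteq\V$, with equality if and only if the reconstructed columns span $\V$, i.e. if and only if $\mathrm{rank}\,\Psi(\V)=r$. This rank condition, together with the requirement $\Psi(\V)\in\U_\Omega'$, is precisely what must be imposed on the target side, and both hold at the distinguished point $\V=\C(X)$: indeed $\Psi(\C(X))=X$ by the same uniqueness argument, $X\in\U_\Omega'$ by hypothesis, and $\mathrm{rank}\,X=r$ since $\C(X)\in\sV_\Omega\subset\Gr(r,m)$.

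Finally I would package this by setting $\mathcal{G}:=\{\V\in\Z_X : \Psi(\V)\in\U_\Omega'\ \text{and}\ \mathrm{rank}\,\Psi(\V)=r\}$ and showing that $\C,\Psi$ restrict to mutually inverse bijections between $\mathcal{G}$ and the completions $X'\in\U_\Omega'$ of $\pi_\Omega(X)$: facts (i)--(ii) show $\C$ sends completions into $\mathcal{G}$ and $\Psi$ sends $\mathcal{G}$ to completions, and the inverse relations of the previous paragraph close the loop. To obtain the open set promised in the statement, I would note that $\mathcal{G}$ is open inside $\Z_X$, because by (iii) $\Psi$ is continuous, $\U_\Omega'$ is open in $\M(r,m\times n)$, and $\{\mathrm{rank}\,\Psi(\V)=r\}$ is open (nonvanishing of some $r\times r$ minor). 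Hence, by definition of the subspace topology, $\mathcal{G}=\sV_{\Omega,X}\cap\Z_X$ for some open $\sV_{\Omega,X}\subseteq\sV_\Omega$, which is nonempty since $\C(X)\in\mathcal{G}$. Then $\{\V\in\sV_{\Omega,X} : \eqref{eq:geometric}\}=\sV_{\Omega,X}\cap\Z_X=\mathcal{G}$, which is exactly the set placed in correspondence with the completions.

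The main obstacle I anticipate is two-fold, and both parts live in the backward direction. First, the continuity/regularity of $\Psi$ on all of $\Z_X$ must be argued chart-independently: one cannot fix a single size-$r$ index set $\psi_j\subseteq\omega_j$ globally, since a completion's column space may make a prescribed minor $[\psi_j]_\V$ vanish while still lying in $\sV_\Omega$; the remedy is to glue the local formulas using the uniqueness in Lemma \ref{lem:vector-completion}. Second, one must resist defining $\sV_{\Omega,X}$ by a naive global formula, because the asserted correspondence requires that the column space of \emph{every} completion in $\U_\Omega'$, not merely generic ones, lie in $\sV_{\Omega,X}$; building $\sV_{\Omega,X}$ from the intrinsically defined locus $\mathcal{G}$ (rather than from arbitrary chart data) is what guarantees surjectivity of $\C$ onto the target set.
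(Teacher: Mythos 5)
Your proof is correct, and it differs from the paper's precisely in the step that carries the real weight of the lemma, namely how the open set $\sV_{\Omega,X}$ is produced. The paper works chart by chart: for each choice $\K$ of size-$r$ subsets $\psi_{j,k_j}\subset\omega_j$ it defines the reconstruction $X_\V$ on the \emph{whole} chart $\sV_{\Omega,\K}$ (even at subspaces violating \eqref{eq:geometric}), shows that $\{\V\in\sV_{\Omega,\K}: X_\V\in\U_\Omega'\}$ is open by substituting $X_\V$ into the non-vanishing conditions defining $\U_\Omega'$ and clearing the denominators $\det\big(\pi_{\psi_{j,k_j}}(B)\big)$, and finally takes $\sV_{\Omega,X}$ to be the union of these chart-wise opens. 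You instead define the reconstruction $\Psi$ only on the incidence locus $\Z_X$ of subspaces in $\sV_\Omega$ satisfying \eqref{eq:geometric}, where the uniqueness clause of Lemma \ref{lem:vector-completion} glues the chart formulas into a single morphism, observe that the good locus $\mathcal{G}=\Psi^{-1}(\U_\Omega')$ is relatively open in $\Z_X$ by continuity, and then extend to an ambient open set of $\sV_\Omega$ by the definition of the subspace topology. Both arguments share the same skeleton (reconstruction via Lemma \ref{lem:vector-completion} and Proposition \ref{prp:Unique}; the good subspaces are exactly those whose reconstruction lands in $\U_\Omega'$; non-emptiness via $\C(X)$ itself), but the packaging trades explicitness for economy: the paper's $\sV_{\Omega,X}$ is described by concrete polynomial non-vanishing in local coordinates, while yours is non-explicit but avoids all coordinate computation and makes the bijection transparent as a pair of mutually inverse maps. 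Two small remarks. First, your condition $\rank\Psi(\V)=r$ in the definition of $\mathcal{G}$ is redundant: $\Psi(\V)\in\U_\Omega'\subseteq\U_\Omega$ already forces $\C(\Psi(\V))\in\Gr(r,m)$, hence rank exactly $r$, which is also how the paper silently gets $\C(X_\V)=\V$ in its backward direction. Second, the obstacle you flag at the end --- that $\sV_{\Omega,X}$ must capture the column space of \emph{every} completion in $\U_\Omega'$, not just generic ones --- is exactly the point the paper addresses by taking the union over all charts $\K$; your intrinsic construction of $\mathcal{G}$ resolves the same issue in one stroke.
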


Having the correspondence of Lemma \ref{lem:U->V} available, we can transfer the problem of unique or finite completability of $\Omega$ to the Grassmannian:

\begin{prp} \label{prp:guc-V}
$\Omega$ is generically uniquely completable (Definition \ref{dfn:gfc}) if and only if there is a non-empty open set $\U_\Omega'$ of $\U_\Omega$ such that for every $X \in \U_\Omega'$ the unique subspace $\V \in \sV_{\Omega,X}$ satisfying \eqref{eq:geometric} is $\C(X)$; with $\sV_{\Omega,X} \subset \Gr(r,m)$ the open set of Lemma \ref{lem:U->V}. 
\end{prp}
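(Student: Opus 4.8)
The plan is to prove both directions by directly invoking the bijection $X' \mapsto \C(X')$ of Lemma \ref{lem:U->V}, which converts statements about the number of completions of $\pi_\Omega(X)$ lying inside an open set into statements about the number of subspaces $\V$ satisfying \eqref{eq:geometric}. All the genuine content is already packaged in that lemma; the only care needed is in managing the various ``generic'' open sets so that intersections remain non-empty, for which I would lean on the irreducibility of $\M(r,m\times n)$ through Proposition \ref{prp:open}.

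For the forward implication I would start from the open set $\U$ furnished by Definition \ref{dfn:gfc} and set $\U_\Omega' := \U \cap \U_\Omega$. Since $\U$ and $\U_\Omega$ (Lemma \ref{lem:U-M}) are both non-empty open subsets of the irreducible variety $\M(r,m\times n)$, Proposition \ref{prp:open} guarantees that $\U_\Omega'$ is non-empty and open, hence a non-empty open subset of $\U_\Omega$. I would then check that the uniqueness property survives shrinking: for any $X \in \U_\Omega' \subset \U$ one has $\pi_\Omega^{-1}(\pi_\Omega(X)) \cap \U_\Omega' \subseteq \pi_\Omega^{-1}(\pi_\Omega(X)) \cap \U = \{X\}$, and since $X$ itself lies in the left-hand set the inclusion is an equality. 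Thus the only completion of $\pi_\Omega(X)$ inside $\U_\Omega'$ is $X$. Feeding this into the bijection of Lemma \ref{lem:U->V}, the only $\V \in \sV_{\Omega,X}$ satisfying \eqref{eq:geometric} is the image of $X$, namely $\C(X)$, as required.

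For the reverse implication I would take $\U := \U_\Omega'$ itself as the witness open set for Definition \ref{dfn:gfc}; it is non-empty and open in $\M(r,m\times n)$ because $\U_\Omega$ is. Fixing $X \in \U_\Omega'$, the hypothesis says the set of subspaces $\V \in \sV_{\Omega,X}$ obeying \eqref{eq:geometric} equals $\{\C(X)\}$. By the bijection of Lemma \ref{lem:U->V} the set of completions $X' \in \U_\Omega'$ of $\pi_\Omega(X)$ is therefore a single point, the unique completion whose column space is $\C(X)$. Because $X \in \U_\Omega$ forces $\dim \pi_{\omega_j}(\C(X)) = r$ for every $j$, Proposition \ref{prp:Unique} (equivalently Lemma \ref{lem:vector-completion}) identifies that unique completion with $X$ itself. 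Hence $\pi_\Omega^{-1}(\pi_\Omega(X)) \cap \U_\Omega' = \{X\}$ for every $X \in \U_\Omega'$, which is exactly generic unique completability.

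I do not anticipate a serious obstacle here. The one subtlety to watch, and the place where irreducibility is indispensable, is purely bookkeeping: Definition \ref{dfn:gfc} phrases uniqueness relative to the \emph{same} open set in which $X$ lives, so one must verify (as above) that passing to the smaller set $\U \cap \U_\Omega$ neither destroys the uniqueness nor empties the set, the latter being precisely what Proposition \ref{prp:open} supplies.
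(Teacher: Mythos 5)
Your proposal is correct and follows essentially the same route as the paper: both directions hinge on the bijection of Lemma \ref{lem:U->V}, with the intersection $\U \cap \U_\Omega$ kept non-empty via irreducibility (Proposition \ref{prp:open}) in the forward direction and Proposition \ref{prp:Unique} identifying the unique completion with $X$ in the reverse. The paper's own proof is merely a terser rendering of exactly these steps, so there is nothing to add.
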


\begin{prp} \label{prp:gfc-V}
$\Omega$ is generically finitely completable (Definition \ref{dfn:gfc}) if and only if there is a non-empty open set $\U_\Omega'$ of $\U_\Omega$ such that for every $X \in \U_\Omega'$ there are finitely many subspaces $\V \in \sV_{\Omega,X}$ satisfying \eqref{eq:geometric}; with $\sV_{\Omega,X} \subset \Gr(r,m)$ the open set of Lemma \ref{lem:U->V}. 
\end{prp}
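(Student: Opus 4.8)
The plan is to prove the equivalence by leaning almost entirely on the $1$-$1$ correspondence of Lemma \ref{lem:U->V}, which already converts the two finiteness conditions into one another: for a fixed non-empty open $\U_\Omega' \subseteq \U_\Omega$ and a fixed $X \in \U_\Omega'$, the correspondence $X' \mapsto \C(X')$ pairs completions $X' \in \U_\Omega'$ of $\pi_\Omega(X)$ with subspaces $\V \in \sV_{\Omega,X}$ obeying \eqref{eq:geometric}. Hence ``finitely many completions inside $\U_\Omega'$'' and ``finitely many admissible subspaces'' are literally the same count, and the only genuine work is to reconcile the two different open sets and the two different notions of ``fiber'' appearing in Definition \ref{dfn:gfc} (which counts the \emph{entire} fiber $\pi_\Omega^{-1}(\pi_\Omega(X))$ over a witnessing open set of $\M(r,m\times n)$) versus the statement (which counts only completions lying in $\U_\Omega'$).

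For the forward direction I would start from a non-empty open $\U \subseteq \M(r,m \times n)$ supplied by Definition \ref{dfn:gfc}, for which $\pi_\Omega^{-1}(\pi_\Omega(X))$ is finite for every $X \in \U$. Since $\M(r,m \times n)$ is irreducible and $\U, \U_\Omega$ are both non-empty open, Proposition \ref{prp:open} guarantees that $\U_\Omega' := \U \cap \U_\Omega$ is non-empty and open, hence a legitimate non-empty open subset of $\U_\Omega$. For any $X \in \U_\Omega'$ the completions lying in $\U_\Omega'$ form a subset of the finite full fiber $\pi_\Omega^{-1}(\pi_\Omega(X))$, so they are finite in number; applying the correspondence of Lemma \ref{lem:U->V} then yields finitely many $\V \in \sV_{\Omega,X}$ satisfying \eqref{eq:geometric}. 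This direction is essentially a set-inclusion argument.

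For the converse I would take the hypothesized non-empty open $\U_\Omega' \subseteq \U_\Omega$. First note that, because $\U_\Omega$ is open in $\M(r,m \times n)$ (Lemma \ref{lem:U-M}), $\U_\Omega'$ is itself open in $\M(r,m \times n)$. Fixing any $X \in \U_\Omega'$, the correspondence of Lemma \ref{lem:U->V} turns the hypothesized finiteness of admissible subspaces into finiteness of $\pi_\Omega^{-1}(\pi_\Omega(X)) \cap \U_\Omega'$. I would then invoke Proposition \ref{prp:X} with the open set $\U_\Omega'$ and this single witnessing point $X$ to conclude directly that $\Omega$ is generically finitely completable.

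The main obstacle is precisely the asymmetry just flagged: the statement only controls completions that stay inside the open set $\U_\Omega'$, whereas Definition \ref{dfn:gfc} demands control of the \emph{full} fiber on a witnessing open set. The forward direction crosses this gap trivially by inclusion, but the converse cannot, since a priori the full fiber could acquire extra points outside $\U_\Omega'$. The resolution is that one need not bound the full fiber pointwise: the upper-semicontinuity of fiber dimension, packaged here as Proposition \ref{prp:X}, promotes the existence of a single point with a finite $\U_\Omega'$-fiber to generic finite completability. Recognizing that this is the correct tool, rather than attempting to compare the two open sets by hand, is the conceptual crux of the argument.
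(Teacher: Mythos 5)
Your proof is correct and takes essentially the same route as the paper's: both directions rest on the correspondence of Lemma \ref{lem:U->V}, with one direction handled by intersecting the witnessing open set from Definition \ref{dfn:gfc} with $\U_\Omega$ (justified by Proposition \ref{prp:open}) plus a set-inclusion argument, and the other by applying Proposition \ref{prp:X} to $\U_\Omega'$ with a single witnessing point. Your observation that Proposition \ref{prp:X} is precisely what bridges the asymmetry between the partial fiber $\pi_{\Omega}^{-1}\big(\pi_{\Omega}(X)\big) \cap \U_\Omega'$ and the full fiber demanded by Definition \ref{dfn:gfc} is exactly the step the paper relies on.
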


Now that we have precisely expressed this transfer from $\M(r, m \times n)$ to $\Gr(r,m)$, we can safely and for the sake of brevity bring back the attribute \emph{generic} into the picture, interpreted accordingly. For instance, Proposition \ref{prp:guc-V} can be informally restated as saying that $\Omega$ is generically uniquely completable if and only if for a generic $X$ there is a unique generic subspace that agrees with $\pi_\Omega(X)$. The matrix is generic in the sense of belonging to the non-empty open set $\U_\Omega' \subset \M(r, m \times n)$, while the subspace is generic in the sense of belonging to the non-empty open set $\sV_{\Omega,X} \subset \Gr(r,m)$. Since $\M(r, m\times n)$ and $\Gr(r,m)$ are irreducible, both these open sets are dense (Proposition \ref{prp:open}).

\myparagraph{LRMC via Pl\"ucker coordinates} We now come to a key conceptual novelty in this work. We have seen that the passage from thinking in terms of matrices with regard to completing $\pi_\Omega(X)$ to thinking in terms of subspaces is the geometric relations \eqref{eq:geometric}; we now express these relations as algebraic equations in the Pl\"ucker coordinates of $\V$. This is done by the following proposition:

\begin{prp} \label{prp:sections}
Let $\varphi = \{i_1<\dots<i_{r+1}\}$ be a subset of $[m]$ and $\V \in \Gr(r,m)$ with $\dim \pi_{\varphi}(\V)=r$. Let $x \in \RR^m$. Then $\pi_\phi(x) \in \pi_\phi(V)$ if and only if 
\begin{align}
\sum_{k \in [r+1]} \, (-1)^{k-1} \, x_{i_k} \, [\varphi \setminus \{i_k\}]_{\V}=0 \label{eq:section}
\end{align}
where $x_{i_k}$ is the $i_k$th entry of $x$. That is, the vector
$$\big( [\varphi \setminus \{i_1\}]_{\V} \, \, \, \,  - [\varphi \setminus \{i_2\}]_{\V} \, \, \, \, \cdots \, \, \, \,  (-1)^r[\varphi \setminus \{i_{r+1}\}]_{\V}\big)$$ is the normal vector to the hyperplane $\pi_{\varphi}(\V) \subset \RR^{r+1}$.
\end{prp}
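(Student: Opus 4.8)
The plan is to reduce the statement to one linear-algebra fact about the left null space of a tall full-rank matrix. First I would fix a basis $B \in \RR^{m \times r}$ of $\V$ and set $M := \pi_\varphi(B) \in \RR^{(r+1)\times r}$, so that the columns of $M$ span the projected subspace $\pi_\varphi(\V)$. By the hypothesis $\dim \pi_\varphi(\V) = r$ the matrix $M$ has rank $r$, hence $\pi_\varphi(\V)$ is a genuine hyperplane of $\RR^{r+1}$ whose one-dimensional orthogonal complement is exactly the left null space of $M$. Thus $\pi_\varphi(x) \in \pi_\varphi(\V)$ holds if and only if $\pi_\varphi(x)$ is orthogonal to any nonzero left null vector $h$ of $M$, and the whole proposition reduces to identifying $h$ explicitly with the signed Pl\"ucker vector.

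The core step is to exhibit that null vector, which is the generalized cross product of the columns of $M$. I would set $h_k := (-1)^{k-1} \det(M_{\hat k})$, where $M_{\hat k}$ denotes the $r \times r$ matrix obtained from $M$ by deleting its $k$th row, and verify $h^\top M = 0$. The clean way is the repeated-column trick: for each column index $l$, form the $(r+1)\times(r+1)$ matrix obtained by prepending the $l$th column of $M$ to $M$; this matrix has two equal columns, so its determinant vanishes, while Laplace expansion along the prepended column gives precisely $\sum_{k \in [r+1]} (-1)^{k-1}\det(M_{\hat k})\, M_{kl} = 0$. Since this holds for every $l$, the vector $h$ annihilates every column of $M$, i.e. $h^\top M = 0$.

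Next I would translate the minors back into Pl\"ucker coordinates. Deleting the $k$th row of $M = \pi_\varphi(B)$ removes the row of $B$ indexed by $i_k$ and keeps the rows indexed by $\varphi \setminus \{i_k\}$ in increasing order, so $\det(M_{\hat k}) = [\varphi \setminus \{i_k\}]_\V$ for the chosen $B$; replacing $B$ by $BS$ scales every minor by $\det S$, hence rescales $h$ without affecting either its direction or the vanishing of $h^\top \pi_\varphi(x)$. Consequently $h_k = (-1)^{k-1}[\varphi\setminus\{i_k\}]_\V$, exactly the asserted normal vector; moreover $\dim\pi_\varphi(\V)=r$ forces at least one maximal minor to be nonzero, so $h \neq 0$ and it genuinely is a normal. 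Evaluating the orthogonality condition at the $k$th coordinate of $\pi_\varphi(x)$, which equals $x_{i_k}$, yields \eqref{eq:section}, and both implications of the ``if and only if'' follow from the hyperplane characterization.

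The only delicate point is bookkeeping: matching the sign $(-1)^{k-1}$ produced by the Laplace expansion to the sign in the statement, and confirming that deleting row $k$ of $M$ leaves the index set $\varphi \setminus \{i_k\}$ in its natural increasing order, so that the minor is \emph{literally} the Pl\"ucker coordinate and not a signed permutation of it. Everything else is routine, and I expect no genuine obstacle beyond this sign and ordering accounting.
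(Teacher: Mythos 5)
Your proof is correct and takes essentially the same approach as the paper: the paper's own proof applies exactly this Laplace expansion, but to the bordered matrix $[\pi_\varphi(x)\ \ \pi_\varphi(B)] \in \RR^{(r+1)\times(r+1)}$, whose determinant vanishes iff $\pi_\varphi(x)$ lies in the column span of the rank-$r$ matrix $\pi_\varphi(B)$. Your only reorganization is to first exhibit the signed-minor vector $h$ as a left null vector of $M=\pi_\varphi(B)$ (via the repeated-column trick, which is the same expansion applied to $[M_{\cdot l}\ \ M]$) and then dot $h$ with $\pi_\varphi(x)$ --- a harmless, slightly longer route to the identical computation, with the same sign and ordering bookkeeping.
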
 

\begin{ex} \label{ex:sections}
Consider the $\Omega$ of Example \ref{example:main} and note that $\omega_2=\{4,5,6\}$. Then for $X \in \M(2, 6 \times 5)$ and $\V \in \Gr(2,6)$ with $\dim \pi_{\{4,5,6\}} \big(\V\big) = 2$, we have $\pi_{\{4,5,6\}}(x_2) \in \pi_{\{4,5,6\}}(\V)$ if and and only if $x_{42} \, [56]_{\V} - x_{52} \, [46]_{\V} + x_{62} \, [45]_{\V}=0$, with $x_{ij}$ the $(i,j)$th entry and $x_2$ the second column of $X$. Also, $\pi_{\{4,5,6\}}(\V)$ is a plane in $\RR^3$ with normal vector $( [56]_{\V} \,  -[46]_{\V} \, \, \,  [45]_{\V} )$. 
\end{ex}

From Proposition \ref{prp:sections} we see that the constraints on $\C(X)$ induced by $\pi_\Omega(X)$ are encoded in the columns of $\pi_\Omega(X)$ with $\# \omega_j \ge r+1$ and have the form of \emph{hyperplane sections} of $\Gr(r,m)$. That is, 
for every $j \in [n]$ and every subset $$\varphi=\{i_1<\dots<i_{r+1}\} \subset \omega_j$$ one extracts from the relation $\pi_{\omega_j}(x_j) \in \pi_{\omega_j}(\V)$ the sub-relation $\pi_{\varphi}(x_j) \in \pi_{\varphi}(\V)$, equivalent to the linear equation
\begin{align}
\sum_{k \in [r+1]} \, (-1)^{k-1} \, x_{i_kj} \, [\varphi \setminus \{i_k\}]_{\V}=0 \label{eq:section-j}
\end{align} This equation defines a hyperplane of the projective space $\PP^{N}$ with $N={m \choose r} -1$. Inside this projective space sits the image of the Grassmannian $\Gr(r,m)$ under the Pl\"ucker embedding and $\V \in \Gr(r,m)$ agrees with $\pi_\Omega(X)$ if and only if $\V$ lies in the intersection of all such hyperplanes of the form \eqref{eq:section-j}, obtained as $\varphi$ varies across all subsets of $\omega_j$ of size $r+1$ and $j$ across all values in $[n]$. In view of Propositions \ref{prp:guc-V}-\ref{prp:gfc-V} and the remark following them, we have:

\begin{form}[LRMC via Pl\"ucker coordinates] \label{form:Plucker}
Let $X$ be a rank-$r$ matrix observed at $\Omega$. Geometrically, obtaining the rank-$r$ completions of $X$ amounts to intersecting $\Gr(r,m)$ embedded in $\PP^N$ via the Pl\"ucker embedding with all hyperplanes of the form \eqref{eq:section-j}. Algebraically, it amounts to solving a polynomial system of equations in Pl\"ucker coordinates, consisting of the linear equations in \eqref{eq:section-j} and the Pl\"ucker relations. 

A generic rank-$r$ matrix $X$ has finitely many completions if and only if the above intersection (system of polynomial equations) consists of finitely many generic points (solutions). $X$ is the unique generic completion of $X$ observed at $\Omega$ if and only if there is a unique such generic point (solution). 
\end{form}

We should emphasize that Formulation \ref{form:Plucker} plays a conceptual role in this paper; obtaining the completions computationally is beyond our present scope. It is though of great interest to know when the above polynomial system has finitely many generic solutions; this is because this happens exactly when $\Omega$ is generically finitely completable. 
 
Recall Proposition \ref{prp:matroid} and the remark preceeding it, that to understand the generically finitely completable patterns it is enough to understand such patterns $\Omega$ with size $\#\Omega = r(m+n-r)$. So let us next consider our polynomial system for such $\Omega$'s. Since the equations \eqref{eq:section-j} are linear equations, it is meaningful to ask what is the maximal number of linearly independent such equations that $\pi_\Omega(X)$ can give. This is answered by the next proposition:
 
\begin{prp} \label{prp:number-of-hyperplanes}
Suppose $\#\Omega = r(m+n-r)$. For a generic rank-$r$ matrix $X$ each $\omega_j$ contributes $\#\omega_j - r$ linearly independent equations of the form \eqref{eq:section-j}. Moreover, if $\Omega$ is generically finitely completable, then all these $\sum_{j \in [n]} (\# \omega_j -r)=\#\Omega - nr=r(m+n-r)-nr=r(m-r)$ equations must be linearly independent.
\end{prp}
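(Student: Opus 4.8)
The plan is to work on the Grassmannian side via Lemma~\ref{lem:U->V} and Proposition~\ref{prp:gfc-V}, fixing a generic $X$ and setting $\V_0=\C(X)\in\sV_\Omega$, so that $\dim\pi_{\omega_j}(\V_0)=r$ for every $j$ and Proposition~\ref{prp:sections} turns each incidence $\pi_{\omega_j}(x_j)\in\pi_{\omega_j}(\V)$ into the linear equations \eqref{eq:section-j}. I read ``linearly independent equations contributed by $\omega_j$'' as the number of \emph{independent conditions} these equations impose on $\V$ at $\V_0$, i.e.\ the dimension of the span of their differentials inside the cotangent space of $\Gr(r,m)$ at $\V_0$, whose total dimension is $r(m-r)$. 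This is the reading consistent with the count $\sum_j(\#\omega_j-r)=r(m-r)=\dim\Gr(r,m)$; note that as raw ambient forms the equations from a single column span a larger space, so the intended notion is genuinely that of conditions cut on the Grassmannian.

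For the per-column count I fix $j$ and let $Z_j=\{\V\in\sV_\Omega:\pi_{\omega_j}(x_j)\in\pi_{\omega_j}(\V)\}$ be the locus cut out by the equations \eqref{eq:section-j} from $\omega_j$; the goal is $\codim_{\Gr(r,m)}Z_j=\#\omega_j-r$. I use the morphism $p_j:\sV_\Omega\to\Gr(r,\omega_j)$, $\V\mapsto\pi_{\omega_j}(\V)$, which is dominant with irreducible fibres $\cong\RR^{r(m-\#\omega_j)}$ (an $r$-plane projecting isomorphically onto a fixed $U$ is the graph of a linear map into the unobserved coordinates). Writing $y_j=\pi_{\omega_j}(x_j)\neq 0$, the image locus $I_j=\{U\in\Gr(r,\omega_j):y_j\in U\}$ is the Schubert variety of $r$-planes through $y_j$; it is isomorphic to $\Gr(r-1,\#\omega_j-1)$, hence irreducible of codimension $\#\omega_j-r$. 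Since $Z_j=p_j^{-1}(I_j)$ and $p_j$ has equidimensional fibres, the fiber-dimension theorem (\S\ref{section:AG}) gives $\codim Z_j=\codim I_j=\#\omega_j-r$; and for generic $X$ the point $\V_0$ maps to a smooth point of $I_j$, so $\V_0$ is smooth on $Z_j$ and the differentials at $\V_0$ of the equations \eqref{eq:section-j} from $\omega_j$ span exactly the $(\#\omega_j-r)$-dimensional conormal of $Z_j$. This is the asserted per-column count.

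For the global independence, rather than arguing transversality of the $Z_j$ I pass back to $\M(r,m\times n)$ and invoke Proposition~\ref{prp:upper}. Parametrise $\M(r,m\times n)$ locally near $X$ by a basis $B$ of $\V_0$ (its $r(m-r)$ free entries) and the coefficients $C=[c_1\cdots c_n]$, so $X=BC$ and $\dim\M(r,m\times n)=r(m-r)+rn=r(m+n-r)$. A tangent vector $(\dot B,\dot C)$ lies in $\ker d\pi_\Omega|_X$ iff $\pi_{\omega_j}(\dot B\,c_j+B\,\dot c_j)=0$ for all $j$; since $\pi_{\omega_j}(B)$ has full column rank $r$, for each $j$ this recovers $\dot c_j$ uniquely from $\dot B$ (Lemma~\ref{lem:vector-completion}) and leaves precisely the conditions $\pi_{\omega_j}(\dot B\,c_j)\in\pi_{\omega_j}(\V_0)$ on $\dot B$, which are exactly the differentials at $\V_0$ of the equations \eqref{eq:section-j}. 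Hence $\ker d\pi_\Omega|_X$ maps isomorphically onto the subspace of $T_{\V_0}\Gr(r,m)$ annihilated by all these differentials, of dimension $r(m-r)-\rho$, where $\rho$ is their total rank. Therefore $\dim\im(\pi_\Omega)=\rank d\pi_\Omega|_X=r(m+n-r)-\big(r(m-r)-\rho\big)=rn+\rho$. By Proposition~\ref{prp:upper}, $\Omega$ is generically finitely completable iff $\dim\im(\pi_\Omega)=r(m+n-r)$, i.e.\ iff $\rho=r(m-r)$, i.e.\ iff all $\sum_j(\#\omega_j-r)=r(m-r)$ equations are linearly independent, as claimed.

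The main obstacle I anticipate is the honest bookkeeping of the kernel computation: showing that eliminating $\dot c_j$ is clean (full column rank of $\pi_{\omega_j}(B)$ for every $j$, valid for generic $X$ because $\V_0\in\sV_\Omega$) and that the surviving conditions on $\dot B$ \emph{coincide with}, rather than merely imply, the differentials of \eqref{eq:section-j}. A secondary point, needed both for $\dim\im(\pi_\Omega)=\rank d\pi_\Omega|_X$ and for the smoothness of $\V_0$ on each $Z_j$, is generic smoothness; over $\RR$ this is handled as elsewhere in the paper through the scheme and its $\RR$-valued points (\S\ref{section:AG}).
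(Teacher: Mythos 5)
Your proposal is correct in substance, but it takes a genuinely different route from the paper's proof. For the per-column count, the paper argues via Lemma \ref{lem:vector-completion}: fixing $\psi_j \subset \omega_j$ with $\#\psi_j = r$ and $[\psi_j]_{\C(X)} \neq 0$, the relation $\pi_{\omega_j}(x_j) \in \pi_{\omega_j}(\V)$ is equivalent, on the open set where $\dim\pi_{\psi_j}(\V)=r$, to the $\#\omega_j - r$ explicit equations indexed by $\varphi = \psi_j \cup \{k\}$, $k \in \omega_j\setminus\psi_j$, which are visibly independent forms; your Schubert fibration ($Z_j = p_j^{-1}(I_j)$ with $I_j \cong \Gr(r-1,\#\omega_j-1)$) arrives at the same count as a codimension. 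For the global claim the paper stays projective: if the selected $r(m-r)$ forms were dependent, $\Gr(r,m)$ would be cut by fewer than $\dim\Gr(r,m)$ hyperplanes, leaving a positive-dimensional locus through the generic point $\C(X)$, hence infinitely many completions in the relevant open set (Proposition \ref{prp:gfc-V}) --- a contradiction. You instead compute $\ker d\pi_\Omega|_X$ in the $(B,C)$-parametrization and invoke Proposition \ref{prp:upper}. Your route buys more: a two-sided Jacobian criterion (generic finite completability holds if and only if the equation differentials have rank $r(m-r)$ at generic $X$), and it makes precise a point the paper leaves implicit, namely that the raw forms from a single column span \emph{more} than $\#\omega_j - r$ dimensions in $\PP^N$ (e.g.\ $r=2$, $\#\omega_j=4$ gives rank $3$), so the per-column count must be read as conditions on the Grassmannian, or, in the paper's reading, as an equivalent independent subset of that size. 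The paper's route buys brevity and the explicit per-column selection of equations, which is then reused in the proof of Proposition \ref{prp:necessary}.

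Two points in your argument need tightening. First, ``the differentials span exactly the conormal of $Z_j$'' does not follow from smoothness of $Z_j$ alone: a system of equations can cut out a smooth variety while its differentials vanish there (think of $x^2$ cutting out a line in the plane). What settles it is the first-order computation: writing $x_j = B_0 c_j$, column operations give
\begin{align}
df_\varphi|_{\V_0}(\dot B) = -\det\bigl[\pi_\varphi(\dot B c_j)\ \ \pi_\varphi(B_0)\bigr], \nonumber
\end{align}
so the joint kernel over $\varphi \subset \omega_j$ is exactly $\{\dot B : \pi_{\omega_j}(\dot B c_j) \in \pi_{\omega_j}(\V_0)\}$, whose codimension is $\#\omega_j - r$ because $\dot B \mapsto \pi_{\omega_j}(\dot B c_j) \bmod \pi_{\omega_j}(\V_0)$ is surjective (using $c_j \neq 0$ and the chart condition on $\V_0$). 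This single computation also discharges the ``coincide with, rather than merely imply'' bookkeeping you flag in the kernel argument. Second, your closing ``iff'' conflates two inequivalent senses of independence: $\rho = r(m-r)$ does imply the $r(m-r)$ forms are linearly independent (which is the direction the proposition asserts), but independence of the forms in $\PP^N$ does not imply $\rho = r(m-r)$. Under your stated differential reading the equivalence is fine, and either way the proposition follows, but the distinction is worth stating since the paper uses the form-level statement later.
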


\noindent Notice the interesting fact in Proposition \ref{prp:number-of-hyperplanes} that the maximal number $r(m-r)$ of linearly independent hyperplanes of the form \eqref{eq:section-j} that $\pi_\Omega(X)$ can give us, is precisely equal to the dimension of the Grassmannian $\Gr(r,m)$. It is a fact of algebraic geometry that the minimal number of linearly independent hyperplanes in $\PP^N$ that one needs to intersect $\Gr(r,m) \subset \PP^N$ with and get a finite number of points (subspaces of $\RR^m$ of dimension $r$) is equal to the dimension of $\Gr(r,m)$, that is $r(m-r)$. The same is true in linear algebra; if instead of $\Gr(r,m)$ we had a linear subspace $\mathcal{S} \subset \PP^N$ of dimension $r(m-r)$, the minimal number of linearly independent hyperplanes that we would need to intersect it with to get a finite set of points (in fact in this case we would get exactly one point in $\PP^N$) is $\dim \mathcal{S} = r(m-r)$. Moreover, this would be true as soon as none of the hyperplanes contained $\mathcal{S}$. But in algebraic geometry the picture is more complicated; we may have $r(m-r)$ linearly independent hyperplanes none of which containing $\Gr(r,m)$ and still their intersection with $\Gr(r,m)$ could be infinite. 

Instead, a more appropriate (but more involved) notion is that of a \emph{regular sequence}; see \cite{eisenbud2013commutative} for a definition or Appendix D in \cite{tsakiris2020algebraic} for an equivalent but more accessible definition. Indeed, the $r(m-r)$ linearly independent hyperplanes drop upon intersection the dimension of $\Gr(r,m)$ to zero (thus giving a finite number of points) if and only if the corresponding $r(m-r)$ linear equations are a regular sequence on $\Gr(r,m)$. On the other hand, being a regular sequence is not equivalent to $\Omega$ being generically finitely completable; the latter is a weaker condition because it is tantamount to saying that the intersection of the $r(m-r)$ hyperplanes with $\Gr(r,m)$ consists of finitely many points on a dense open set of $\Gr(r,m)$, allowing for infinitely many degenerate subspaces to still agree with the data $\pi_\Omega(X)$. We thus close this section by identifying a necessary condition on $\Omega$ for this weaker phenomenon to happen: 

\begin{definition} \label{dfn:relaxed-SLMF}
We call an $\Omega=\cup_{j \in [n]} \omega_j \times \{j\} \subset [m] \times [n]$ with $\#\Omega = r(m+n-r)$ a relaxed-$(r,r,m)$-SLMF if for every $\I \subset [m]$ with $\# \I \ge r+1$ we have that \begin{align}\sum_{j \in [\ell]} \max\big\{\# (\omega_j \cap \I) -r, 0 \big\} \le r(\#\I -r) \label{eq:relaxed-SLMF}\end{align} with equality for $\I=[m]$. 
\end{definition}

\begin{prp} \label{prp:necessary}
If $\Omega \subset [m] \times [n]$ is generically finitely completable, then $\Omega$ contains a relaxed-$(r,r,m)$-SLMF.
\end{prp}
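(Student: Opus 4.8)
\emph{Plan overview.} My plan is to reduce to a minimal pattern via Proposition~\ref{prp:matroid} and then run a dimension count on the Grassmannian. Since $\Omega$ is generically finitely completable, Proposition~\ref{prp:matroid} supplies a subpattern $\Omega'\subseteq\Omega$ with $\#\Omega'=r(m+n-r)$ that is itself generically finitely completable, so it suffices to show that $\Omega'$ is a relaxed-$(r,r,m)$-SLMF; I relabel $\Omega'$ as $\Omega$. Two parts are immediate. First, if some $\#\omega_j<r$, then fixing $\C(X)$ and all other columns and letting $x_j$ range over the positive-dimensional affine set $\{v\in\C(X):\pi_{\omega_j}(v)=\pi_{\omega_j}(x_j)\}$ produces infinitely many rank-$r$ completions, contradicting finite completability; hence $\#\omega_j\ge r$ for all $j$. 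Consequently $\sum_{j}(\#\omega_j-r)=\#\Omega-nr=r(m-r)$, which is exactly the equality in \eqref{eq:relaxed-SLMF} at $\I=[m]$. The real work is the inequality $a_\I:=\sum_{j}\max\{\#(\omega_j\cap\I)-r,0\}\le r(\#\I-r)$ for every $\I\subseteq[m]$ with $\#\I\ge r+1$.

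\emph{The inequality, by contrapositive.} I would assume $a_\I>r(\#\I-r)$ and build infinitely many completions. By Formulation~\ref{form:Plucker} and Proposition~\ref{prp:gfc-V}, for generic $X$ the completions correspond to subspaces in $S=\{\V\in\Gr(r,m):\pi_{\omega_j}(x_j)\in\pi_{\omega_j}(\V)\ \forall j\}$, and finite completability means $\dim S=0$ near $\C(X)$. I would introduce the dominant rational map $q:\Gr(r,m)\dashrightarrow\Gr(r,\I)$, $\V\mapsto\pi_\I(\V)$; for generic $\V$ the restriction $\pi_\I|_\V$ is an isomorphism onto its image, so each fiber of $q$ is the set of graphs of linear maps $\pi_\I(\V)\to\RR^{\I^c}$ and has dimension $r\,\#\I^c$. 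Let $W_\I\subseteq\Gr(r,\I)$ be the solution set of the projected relations $\pi_{\omega_j\cap\I}(x_j)\in\pi_{\omega_j\cap\I}(\W)$; it is nonempty since $\pi_\I(\C(X))\in W_\I$. Because each such relation depends on $\V$ only through $\pi_\I(\V)$, the locus $Z_\I$ where all of them hold equals $q^{-1}(W_\I)$, so $\dim Z_\I=\dim W_\I+r\,\#\I^c\ge r\,\#\I^c$.

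\emph{The count.} Next I would impose the complementary relations that upgrade each projected relation to the full $\pi_{\omega_j}(x_j)\in\pi_{\omega_j}(\V)$, bounding how much they cut $Z_\I$. For a column with $\#(\omega_j\cap\I)\ge r$, on a dense open subset of $Z_\I$ the projection $\pi_{\omega_j\cap\I}(\V)$ has dimension $r$, so Lemma~\ref{lem:vector-completion} pins down the unique $u\in\V$ with $\pi_{\omega_j\cap\I}(u)=\pi_{\omega_j\cap\I}(x_j)$, and the full relation becomes the $\#(\omega_j\setminus\I)$ scalar equations $\pi_{\omega_j\setminus\I}(u)=\pi_{\omega_j\setminus\I}(x_j)$. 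For a column with $\#(\omega_j\cap\I)<r$ the projected relation is vacuous and the full relation (a subvariety of codimension at most $\#\omega_j-r$ in the smooth $\Gr(r,m)$) cuts by at most $\#\omega_j-r$. In both cases column $j$ drops the dimension by at most $c_j:=(\#\omega_j-r)-\max\{\#(\omega_j\cap\I)-r,0\}$, and $\sum_j c_j=r(m-r)-a_\I$. Hence, at the point $\C(X)$, $\dim S\ge \dim Z_\I-(r(m-r)-a_\I)\ge r\,\#\I^c-r(m-r)+a_\I=a_\I-r(\#\I-r)$, using $r\,\#\I^c-r(m-r)=-r(\#\I-r)$. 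If $a_\I>r(\#\I-r)$, the component of $S$ through the generic point $\C(X)\in\sV_{\Omega,X}$ is positive-dimensional, giving infinitely many generic completions, a contradiction. This proves $a_\I\le r(\#\I-r)$ and that $\Omega$ is a relaxed-$(r,r,m)$-SLMF.

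\emph{Main obstacle.} The hard part will be the codimension bookkeeping in the last step: justifying that the complementary relations cut $Z_\I$ by at most $\sum_j c_j$ and that the resulting positive-dimensional family actually meets the good open set $\sV_{\Omega,X}$ rather than hiding in the degenerate locus. The former I would handle by upper-semicontinuity of fiber dimension together with the explicit description from Lemma~\ref{lem:vector-completion}, so that for \emph{active} columns the complementary relations are literally $\#(\omega_j\setminus\I)$ coordinate-matching equations and Krull's principal-ideal bound applies termwise; the latter follows since all estimates are local at the generic point $\C(X)$, which lies in the interior of $\sV_{\Omega,X}$, so the component of $S$ through it retains the bound. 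The genuinely delicate case is columns with $\#(\omega_j\cap\I)<r$: the elementary image-dimension estimate $\#\Omega_\I\le\dim\M(r,\#\I\times n)$ already yields $a_\I\le r(\#\I-r)$ whenever every column meets $\I$ in at least $r$ rows, but it is too weak otherwise, and it is precisely the tight accounting $c_j=\#\omega_j-r$ (rather than the naive $\#(\omega_j\setminus\I)$) for these columns that makes the bound sharp enough to reach $r(\#\I-r)$.
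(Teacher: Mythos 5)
Your proposal is correct and follows essentially the same route as the paper: after the same reduction via Proposition~\ref{prp:matroid}, you prove the contrapositive by showing that a violating $\I$ forces a positive-dimensional family of solutions through the generic point $\C(X)$, via the count $r(m-r)-a_\I < r\,\#\I^c$. Your fibration $q:\Gr(r,m)\dashrightarrow\Gr(r,\I)$ with Krull's bound is precisely the coordinate-free form of the paper's argument, which works in standard local coordinates with identity block on a $\psi\subset\I$ (so the $\I^c$-rows are your graph coordinates) and observes that the equations supported on $\I$ --- more than $r(\#\I-r)$ of them --- involve only the $r(\#\I-r)$ variables in the $\I$-rows, leaving fewer than $r\,\#\I^c$ equations to cut the fiber directions.
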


\section{Comments on existing literature} \label{section:Alarcon}

The results of this paper were mostly motivated by the work of Alarcon, Boston \& Nowak \cite{pimentel2016characterization}. There, an attempt was made to characterize the generically finitely completable patterns for the special case where the matrix has $n=r(m-r)$ columns and each column is observed at exactly $r+1$ entries. In the language of the present paper, the main claim in \cite{pimentel2016characterization} (Theorem 1 there) is that for this special case of $\Omega$'s, generic finite completability is equivalent to being a relaxed-$(r,r,m)$ SLMF (Definition \ref{dfn:relaxed-SLMF} here). 

The 2016 work of Alarcon et al. \cite{pimentel2016characterization} made use of the 2015 work of Alarcon et al. \cite{pimentel2015deterministic}, where the importance of SLFM's was captured on an intuitive level independently of the 1993 work of Sturmfels \& Zelevinsky \cite{sturmfels1993maximal} who introduced SLMF's. Indeed, the claimed Theorem 1 in Alarcon et al. \cite{pimentel2015deterministic} is essentially the same as Theorem 4.6 of Sturmfels \& Zelevinsky \cite{sturmfels1993maximal} (Proposition \ref{prp:SLMF} here). Importantly, SLMF's were shown in \cite{pimentel2015deterministic} to occur with high probability in the low-rank regime under uniform sampling. Beyond the role of SLFM's, an important insight of \cite{pimentel2016characterization} was that LRMC can be cast as a problem on the Grassmannian. Indeed, a different polynomial system of equations in standard local coordinates on $\Gr(r,m)$ was derived from the geometric relations \eqref{eq:geometric} and was claimed to be a regular sequence in the generic case. The insight of Proposition \ref{prp:SLMF-V-check} here was also present in \cite{pimentel2016characterization} as Corollary 2.

The proof techniques though used in the present paper are very different from those in \cite{pimentel2016characterization,pimentel2015deterministic}. In fact, key technical arguments in \cite{pimentel2016characterization,pimentel2015deterministic} are problematic. For example, their proof of Lemma 4 in \cite{pimentel2015deterministic} is incomplete since it is impossible to prove that their polynomial (7) is non-zero without understanding the algebraic nature of the entries of their matrix $A$. This is precisely the matrix $B_\Phi|_\V$ here. Moreover, the proof strategy in \cite{pimentel2016characterization} presents a confusion between the notions of \emph{regular sequence} and \emph{algebraic independence}. For instance, the statement in their proof of their Lemma 2 that \emph{for almost every $(\Theta^*,\V^*)$ the polynomials in $\mathcal{F}$ are algebraically independent if and only if they are a regular sequence} is not correct in general and depends on the structure of $\Omega$. For the reader who is willing to trace the definitions of \emph{regular sequence} and \emph{algebraically independent}, e.g. in \cite{eisenbud2013commutative}, here is a counterexample to their statement:
\begin{ex} \label{example:Alarcon-Morteza}
The polynomials $c_1 x^2, c_2 xy \in \CC[x,y]$ are algebraically independent over $\CC$ for all non-zero values of $c_1,c_2 \in \CC$. But their common root locus in $\CC^2$ is the $y$ axis and thus the variety defined by these polynomials has dimension $1$. Moreover, there are no $c_1,c_2$ for which $c_1 x^2, c_2 xy$ is a regular sequence. 
\end{ex}
\noindent These issues were not addressed in the corrections \cite{pimentel2016corrections}.

The results of \cite{pimentel2016characterization,pimentel2015deterministic} have been used substantially in follow-up work, such as in the important paper \cite{pimentel2016information} and also in \cite{pimentel2018mixture}. Most used are corollaries (weaker versions) of Theorems 1 and 2 in \cite{pimentel2016characterization}, which are special cases of Theorems \ref{thm:finite} and \ref{thm:unique} here. Also commonly used is Theorem 1 in \cite{pimentel2015deterministic} (Proposition \ref{prp:SLMF} here) and Corollary 2 in \cite{pimentel2016characterization} (Proposition \ref{prp:SLMF-V-check} here). Finally, \cite{ashraphijuo2017fundamental,ashraphijuo2020union} considered the analogues of unique and finite completability in tensor completion, but they followed the same type of problematic arguments as in \cite{pimentel2016characterization}. 

Recently, the results presented in this paper together with an idea of Omid Amini from \emph{matroid theory}, led to a proof of the statement of Theorem 1 in \cite{pimentel2016characterization}. In the notation of the present paper this reads:

\begin{thm}[Proposition 3 in \cite{matroid-Tsakiris-23}] \label{thm:Amini-Tsakiris}
Let $\Omega \subset [m] \times [n]$ be an observation pattern with exactly $r+1$ observations per column. Suppose that $\#\Omega = r(m+n-r)$, which necessarily implies $n = r(m-r)$. Then $\Omega$ is generically finitely completable if and only if it is a relaxed-$(r,r,m)$ SLMF (Definition \ref{dfn:relaxed-SLMF}).
\end{thm}

\section{Proofs} \label{section:proofs}

In this section we give the proofs of our contributed statements. These proofs are expository, written in a style that respects the reader with no prior experience in algebraic geometry. The reader who insists on absolute rigor and completeness is referred to \cite{matroid-Tsakiris-23}. 

We follow a logical order and thus the proofs of \S \ref{section:MC-PC} precede those of \S \ref{section:main}, as the latter depend on the former.

\subsection{Proofs for \S \ref{section:MC-PC}}

\myparagraph{Proof of Lemma \ref{lem:U-G}}
First, let us fix a $j \in [n]$. For $\V \in \Gr(r,m)$ we have by Proposition \ref{prp:Gr-non-drop} that $\dim \pi_{\omega_j}(\V) = r$ if and only if there is a $\psi \subset \omega_j$ with $\#\psi=r$ such that the Pl\"ucker coordinate $[\psi]_\V$ is non-zero, that is $\V$ lies in the open set $\sV_\psi \subset \Gr(r,m)$ given by $[\psi]_\V \neq 0$. The open set $\sV_\psi$ is non-empty because one can easily find a subspace of dimension $r$ in $\RR^m$ whose projection onto the coordinates $\psi$ has dimension $r$. For every $j \in [n]$ consider all the subsets $\psi$ of size $r$ contained in $\omega_j$ and order them in any way so that $\psi_{j,k}$ denotes the $k$th such subset. If we now ask that $\dim \pi_{\omega_j}(\V) = r$ for every $j \in [n]$, this is the same as asking that $\V \in \sV_{\Omega,\K}:=\cap_{j \in [n]} \sV_{\psi_{j, k_j}}$ for some $\K=(k_1,\dots,k_n)$. Equivalently, we must have that $\V \in \cup_{\K} \sV_{\Omega,\K}$ where now the union is over all possible choices of $\K$. The set $\sV_{\Omega,\K}$ is the intersection of finitely many open non-empty sets and thus it is open. Moreover, it is non-empty by Proposition \ref{prp:open}. Also, $\sV_\Omega:= \cup_{\K} \sV_{\Omega,\K}$ is the finite union of non-empty open sets and is thus open and non-empty.

\myparagraph{Proof of Lemma \ref{lem:U-M}}
The set $\U_\Omega$ is non-empty, because one can always take a $\V \in \sV_\Omega$ and construct an $X \in \M(r, m \times n)$ by taking the $x_j$'s to be any spanning set of $\V$. To see that it is open, note that $X \in \U_\Omega$ if and only if there is an $m \times r$ column submatrix $X'$ of $X$ of rank $r$ whose Pl\"ucker embedding does not satisfy the equations that define the complement of $\sV_\Omega$. If we substitute the Pl\"ucker embedding of $X'$ into these equations, we get polynomials in the entries of $X'$ and thus polynomials in the entries of $X$. Thus each $r$ columns of $X$ give rise to an open set of $\M(r, m \times n)$ which is a subset of $\U_\Omega$. In fact $\U_\Omega$ is the union of all such open sets corresponding to all choices of $r$ columns of $X$. Since the union of open sets is open, $\U_\Omega$ is open. \qed

\myparagraph{Proof of Lemma \ref{lem:U->V}}
We define the open set $\sV_{\Omega,X}$ in several steps. Recalling the notation in the proof of Lemma \ref{lem:U-G}, if $\V \in \sV_\Omega$ there exists a $\K=\{k_1,\dots,k_n\}$ such that $\V \in \sV_{\Omega,\K}$ and so $\dim \pi_{\psi_{j,k_j}}(\V) = r$ for every $j \in [n]$. By \S \ref{subsection:local-coordinates-Gr} $\V$ is uniquely represented by a matrix $B \in \RR^{m\times r}$ with the $r \times r$ block corresponding to indices $\psi_{1,k_1}$ being the identity matrix (there is nothing special here about choosing $j=1$). Let $X \in \U_\Omega'$. By definition of $\sV_{\Omega,\K}$ the $r \times r$ matrix $\pi_{\psi_{j,k_j}}(B)$ is invertible $\forall j \in [n]$. Hence we can define the matrix
\small
$$X_\V:=\Big[B \pi_{\psi_{1,k_1}}(B)^{-1} \pi_{\psi_{1,k_1}}(x_1) \cdots B \pi_{\psi_{n,k_n}}(B)^{-1} \pi_{\psi_{n,k_n}}(x_n)\Big] $$
\normalsize
Since the columns of $X_\V$ lie in the column-space of $B$, $X_\V$ has rank at most $r$, that is $X_\V \in \M(r,m \times n)$. Now we define an open set $\sV_{\Omega,\K,X} \subset \sV_{\Omega,\K}$ to consist of those $\V \in \sV_{\Omega,\K}$ for which $X_\V \in \U_\Omega'$. Let us explain why this is an open set of $\Gr(r,m)$. First, $\U_\Omega'$ is defined by the non-simultaneous vanishing of certain polynomial equations in the entries of an $m \times n$ matrix of variables. If we take these equations and replace the variables with $X_\V$ we get rational functions in the entries of $B$, the denominators consisting of products and powers of $\det\big(\pi_{\psi_{j,k_j}}(B)\big)$'s. Clearing denominators gives a polynomial in the entries of $B$ and $X_\V \in \U_\Omega'$ if and only if one of these polynomials is non-zero. Since the entries of $B$ are local coordinates of $\Gr(r,m)$ on $\sV_{\Omega,\K}$, these polynomials correspond to polynomials in Pl\"ucker coordinates, expressed \emph{locally} at the open set $\sV_{\Omega,\K} \subset \Gr(r,m)$. Finally, our $\sV_{\Omega,X}$ is defined as $\sV_{\Omega,X}=\bigcup_{\K} \sV_{\Omega,\K,X}$ with $\K$ ranging over all possible choices. 

We now establish the claimed 1-1 correspondence. Let $X' \in \U_\Omega'$ be a completion of $\pi_\Omega(X)$. Since $\U_\Omega' \subset \U_\Omega$ we have $\C(X') \in \sV_\Omega$. Then there is a $\K$ such that $\C(X') \in \sV_{\Omega,\K}$. By Lemma \ref{lem:vector-completion} $X' = X_{\C(X')}$ where $X_{\C(X')}$ is the matrix $X_\V$ above with $\V=\C(X')$. Since $X' \in \U_\Omega'$, by definition of $\sV_{\Omega,X}$ we have $\C(X') \in \sV_{\Omega,X}$. Hence each completion $X' \in \U_\Omega'$ of $\pi_\Omega(X)$ gives a unique subspace $\C(X') \in \sV_{\Omega,X}$.

Conversely, let $\V \in \sV_{\Omega,X}$ satisfy \eqref{eq:geometric}. Since $\V \in \sV_\Omega$, by Proposition \ref{prp:Unique} $\V$ induces a unique completion $X'$ of $\pi_\Omega(X)$. By Lemma \ref{lem:vector-completion}, there is a $\K$ such that $X'$ is precisely the matrix $X_\V$ above. By definition of $\sV_{\Omega,X}$ we have that $X' \in \U_\Omega'$. 

Finally, let us see why $\sV_{\Omega,X}$ is non-empty. Take any $X \in \U_\Omega'$. Since $\U_\Omega' \subset \U_\Omega$ we have $\C(X) \in \sV_\Omega$. By the definition of $\sV_\Omega$ there is a $\K$ such that $\C(X) \in \sV_{\Omega,\K}$. By Proposition \ref{prp:Unique} we have that $X = X_{\C(X)}$. Since $X \in \U_\Omega'$ we also have $X_{\C(X)} \in \U_\Omega'$. Thus by definition of $\sV_{\Omega,X}$ we have $\C(X) \in \sV_{\Omega,X}$. \qed

\myparagraph{Proof of Proposition \ref{prp:guc-V}}
\emph{(if)} By Lemma \ref{lem:U->V}, $\Omega$ satisfies the definition of generic unique completability on $\U_\Omega'$.  

\emph{(only if)} Let $\U$ be a non-empty open set of $\M(r,m\times n)$ such that for any $X \in \U$ the only completion of $\pi_\Omega(X)$ in $\U$ is $X$. Intersect $\U$ with $\U_\Omega$ and call the result $\U_\Omega'$. By Proposition \ref{prp:open} $\U_\Omega'$ is non-empty and open. Also for any $X \in \U_\Omega'$ the only completion of $\pi_\Omega(X)$ in $\U_\Omega'$ is $X$ itself. By Lemma \ref{lem:U->V} $\C(X)$ is the unique subspace in $\sV_{\Omega,X}$ satisfying \eqref{eq:geometric}. \qed

\myparagraph{Proof of Proposition \ref{prp:gfc-V}}
\emph{(if)} By Lemma \ref{lem:U->V} there are finitely many completions of $\pi_\Omega(X)$ in $\U_\Omega'$. In view of Proposition \ref{prp:X} we have that $\Omega$ is generically finitely completable. 

\emph{(only if)} Suppose that there is a non-empty open set $\U$ of $\M(r, m \times r)$ such that for every $X \in \U$ there are finitely many completions of $\pi_\Omega(X)$ in $\U$. Intersect $\U$ with $\U_\Omega$ and call the result $\U_\Omega'$. Then for any $X \in \U_\Omega'$ there are finitely many completions of $\pi_\Omega(X)$ in $\U_\Omega'$. By Lemma \ref{lem:U->V} there are finitely many $\V \in \sV_{\Omega,X}$ that satisfy \eqref{eq:geometric}. \qed

\myparagraph{Proof of Proposition \ref{prp:sections}}
With $B \in \RR^{m \times r}$ a basis for $\V$, for every $k \in [r+1]$ we identify $[\varphi \setminus \{i_k\}]_{\V}$ with $\det \big(\pi_{\varphi \setminus \{i_k\}}(B) \big)$. Applying Laplace expansion on the first column of the matrix $[\pi_\varphi(x) \, \, \pi_\varphi(B)] \in \RR^{(r+1) \times (r+1)}$ shows that $\det \big([\pi_\varphi(x) \, \, \pi_\varphi(B)]\big)=0$ is equivalent to equation \eqref{eq:section}. Since $\pi_{\varphi}(B)$ has rank $r$, $\det \big([\pi_\varphi(x) \, \, \pi_\varphi(B)]\big)=0$ is equivalent to $\pi_\varphi(x) \in \pi_\varphi(\V)$. \qed

\myparagraph{Proof of Proposition \ref{prp:number-of-hyperplanes}}
Take $X \in \U_\Omega$, then $\C(X) \in \sV_\Omega$. By the proof of Lemma \ref{lem:U-G}, for every $j \in [n]$ there exists some $\psi_j \subset \omega_j$ with $\#\psi_j = r$ and $\dim \pi_{\psi_j} (\C(X)) = r$. By Lemma \ref{lem:vector-completion} we have that $x_j$ is the unique vector of $\C(X)$ with $\pi_{\psi_j}(x_j) \in \pi_{\psi_j}(\C(X))$. Thus, with $\C(X)$ unknown, and again by Lemma \ref{lem:vector-completion}, the relation $\pi_{\omega_j}(x_j) \in \pi_{\omega_j} (\C(X))$ is equivalent to the $\#\omega_j-r$ relations $\pi_{\psi_j \cup \{k\}}(x_j) \in \pi_{\psi_j \cup \{k\}} (\C(X)), \, \forall k \in \omega_j \setminus \psi_j$. For the second claim, we note that intersecting $\Gr(r,m)$ with fewer than $r(m-r)=\dim \Gr(r,m)$ hyperplanes gives infinitely many points, and these points stay infinite if we restrict them to any non-empty open set. \qed

\myparagraph{Proof of Proposition \ref{prp:necessary}}
By Proposition \ref{prp:matroid} $\Omega$ contains an $\Omega'$ which is generically finitely completable and $\#\Omega'=r(m+n-r)$. Suppose that $\Omega'$ violates \eqref{eq:relaxed-SLMF} for some $\I$. That is $\sum_{j \in \J} \big[\#(\omega_j' \cap \I) -r\big] > r(\#\I-r)$, where $\J$ indexes the $\omega_j$'s for which $\#(\omega_j' \cap \I)>r$. Fix a $\psi \subset \I$ with $\#\psi=r$. For generic rank-$r$ $X$, by Proposition \ref{prp:number-of-hyperplanes} each $\omega_j'$ contributes exactly $\#\omega_j'-r$ linearly independent equations of the form \eqref{eq:section-j}. There are many ways to select $\#\omega_j'-r$ linearly independent equations per $j$, but as in the proof of Proposition \ref{prp:number-of-hyperplanes}, for $j \in \J$ we can choose $\#(\omega_j' \cap \I) -r$ of them to be supported on $\I$. Thus among a total of $r(m-r)$ linearly independent equations, there are more than $r(\#\I-r)$ of them supported on $\I$; call them $\mathscr{E}$.
Now, on the open set where the Pl\"ucker coordinate $[\psi]$ is non-zero, a subspace $\V$ is uniquely represented by an $m \times r$ basis matrix $B$ whose $r \times r$ submatrix corresponding to row indices $\psi$ is the identity matrix (\S \ref{subsection:local-coordinates-Gr}). Note that there are exactly $r(m-r)$ free variables in $B$ and $r(\#\I-r)$ of them in the row-submatrix of $B$ indexed by $\I$. Now, we can get the Pl\"ucker coordinates for $\V$ by computing all $r \times r$ determinants of $B$. Substituting this expression of the Pl\"ucker coordinates into our polynomial system, gives a new polynomial system of $r(m-r)$ equations in $r(m-r)$ polynomial variables; the Pl\"ucker relations immediately vanish upon this substitution. But $\mathscr{E}$ now consists of more than $r(\#\I-r)$ equations in only $r(\#\I-r)$ variables. This contradicts the fact that $\Omega$ is generically finitely completable, since the polynomial system will have infinitely many solutions, even if restricted to a non-empty open set. \qed

\subsection{Proofs for \S \ref{section:main}} 

We begin by making a crucial connection between SLMF's (Definition \ref{dfn:SLMF}) and incomplete data. It is a criterion for determining membership of a point to a subspace, from memberships of a set of projections of the point to the corresponding projections of the subspace. This is interesting in its own right but is also needed in the proofs. Recall that if $\Phi$ is an $(r,m)$-SLMF, there is a dense open set $\sV_\Phi$ of $\Gr(r,m)$ on which the SLMF induces a local set of coordinates as per Proposition \ref{prp:SLMF}. We first need the following:

\begin{lem} \label{lem:Phi-non-drop}
Let $\Phi=\bigcup_{j \in [m-r]} \varphi_j \times \{j\} \subset [m] \times [m-r]$ be an $(r,m)$-SLMF. If $\V \in \sV_\Phi$ then $\dim \pi_{\varphi_j}(\V)  = r$ for every $j \in [m-r]$. 
\end{lem}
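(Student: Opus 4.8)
The plan is to leverage Proposition~\ref{prp:SLMF}: membership $\V \in \sV_\Phi$ guarantees that $B_\Phi|_\V$ is a basis for $\V^\perp$, and the whole argument is to translate the resulting full-column-rank property of $B_\Phi|_\V$ into a nonvanishing statement about the Pl\"ucker coordinates living inside each column.

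First I would unpack the structure of $B_\Phi|_\V$ from \eqref{eq:B-Phi}. Its $j$th column is supported exactly on $\varphi_j$, and for each $i \in [r+1]$ its entry in row $\varphi_{ij}$ equals $(-1)^{i-1}[\varphi_j \setminus \{\varphi_{ij}\}]_\V$. The key bookkeeping observation is that as $i$ ranges over $[r+1]$, the sets $\varphi_j \setminus \{\varphi_{ij}\}$ range over \emph{all} size-$r$ subsets of $\varphi_j$; hence the entries of the $j$th column are, up to sign, precisely the Pl\"ucker coordinates $[\psi]_\V$ of $\V$ indexed by the size-$r$ subsets $\psi \subset \varphi_j$.

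Next I would invoke dimension counting. Since $\V^\perp$ has dimension $m-r$, any basis matrix for it, in particular $B_\Phi|_\V$, has full column rank $m-r$. A matrix of full column rank has no zero columns, so for every $j \in [m-r]$ the $j$th column of $B_\Phi|_\V$ is nonzero. By the previous step this means that at least one Pl\"ucker coordinate $[\psi]_\V$ with $\psi \subset \varphi_j$ and $\#\psi = r$ is nonzero.

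Finally I would apply Proposition~\ref{prp:Gr-non-drop}: a nonzero $[\psi]_\V$ with $\#\psi = r$ gives $\dim \pi_\psi(\V) = r$. Because $\psi \subset \varphi_j$, the projection $\pi_\psi(\V)$ factors through $\pi_{\varphi_j}(\V)$, so $r = \dim \pi_\psi(\V) \le \dim \pi_{\varphi_j}(\V) \le \dim \V = r$, forcing $\dim \pi_{\varphi_j}(\V) = r$, as claimed. I do not expect a genuine obstacle: the argument is a direct chain from Proposition~\ref{prp:SLMF} to Proposition~\ref{prp:Gr-non-drop}. The only point requiring care is the combinatorial identification in the first step, that the nonzero entries of each column are exactly the size-$r$ Pl\"ucker coordinates supported inside $\varphi_j$, since this is precisely where the SLMF data connects to the dimension-drop criterion.
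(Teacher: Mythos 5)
Your proof is correct and follows essentially the same route as the paper's: both invoke Proposition~\ref{prp:SLMF} to get full column rank of $B_\Phi|_{\V}$, identify the nonzero entries of the $j$th column with the size-$r$ Pl\"ucker coordinates supported on $\varphi_j$, and conclude via Proposition~\ref{prp:Gr-non-drop}. The only cosmetic difference is that the paper argues by contradiction (dimension drop forces a zero column) while you argue in the contrapositive direction, spelling out the factoring inequality $\dim\pi_\psi(\V) \le \dim\pi_{\varphi_j}(\V)$ that the paper leaves implicit.
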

\begin{proof}
By Proposition \ref{prp:SLMF} the matrix $B_\Phi|_{\V}$ has full column rank. On the other hand, by Proposition \ref{prp:Gr-non-drop} we have that $\dim \pi_{\varphi_j}(\V)  < r$ if and only if all Pl{\"u}cker coordinates of $\V$ supported on $\varphi_j$ are zero. In that case the $j$th column of $B_\Phi|_{\V}$ would be zero, a contradiction.
\end{proof}

Here is the membership criterion:
\begin{prp} \label{prp:projections-Phi}
Let $\Phi=\bigcup_{j \in [m-r]} \varphi_j \times \{j\} \subset [m] \times [m-r]$ be an $(r,m)$-SLMF. Let $x \in \RR^m$,  $\V \in \sV_\Phi$ and suppose $\pi_{\varphi_j}(x) \in \pi_{\varphi_j}(\V), \, \forall j \in [m-r]$. Then $x \in \V$. 
\end{prp}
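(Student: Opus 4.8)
The plan is to recognize that the defining formula \eqref{eq:B-Phi} for the matrix $B_\Phi$ is assembled precisely out of the normal vectors produced by Proposition \ref{prp:sections}, so that the membership hypotheses translate into orthogonality relations against the columns of $B_\Phi|_{\V}$, whose span is $\V^\perp$ by Proposition \ref{prp:SLMF}. First I would invoke Lemma \ref{lem:Phi-non-drop} to guarantee that $\dim \pi_{\varphi_j}(\V) = r$ for every $j \in [m-r]$; this is exactly the nondegeneracy hypothesis needed to apply Proposition \ref{prp:sections} to each $\varphi_j$. Writing $\varphi_j=\{\varphi_{1j}<\dots<\varphi_{(r+1)j}\}$, Proposition \ref{prp:sections} then tells me that the membership $\pi_{\varphi_j}(x) \in \pi_{\varphi_j}(\V)$ is equivalent to the single linear equation $\sum_{i \in [r+1]} (-1)^{i-1} x_{\varphi_{ij}} [\varphi_j \setminus \{\varphi_{ij}\}]_{\V}=0$.

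The key step is to observe that, by the definition \eqref{eq:B-Phi}, the $j$th column $b_j$ of $B_\Phi|_{\V}$ is supported on $\varphi_j$ with entries $(b_j)_{\varphi_{ij}} = (-1)^{i-1} [\varphi_j \setminus \{\varphi_{ij}\}]_{\V}$ and zero elsewhere. Hence the left-hand side of the equation above is exactly $x^\transpose b_j$, and the hypothesis $\pi_{\varphi_j}(x) \in \pi_{\varphi_j}(\V)$ for all $j$ becomes $x^\transpose b_j = 0$ for all $j \in [m-r]$; that is, $x$ is orthogonal to the column space of $B_\Phi|_{\V}$.

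Finally I would apply Proposition \ref{prp:SLMF}: since $\V \in \sV_\Phi$, the columns of $B_\Phi|_{\V}$ form a basis of $\V^\perp$, so their span is precisely $\V^\perp$. Orthogonality of $x$ to $\V^\perp$ then yields $x \in (\V^\perp)^\perp = \V$, which is the desired conclusion. I do not expect a genuine obstacle here: the entire argument hinges on the bookkeeping identification of the alternating-sign normal vectors of Proposition \ref{prp:sections} with the columns of $B_\Phi$, after which the conclusion is immediate from Proposition \ref{prp:SLMF}. The only point that requires care is matching the signs and the support pattern in \eqref{eq:B-Phi} against the normal vector written out in Proposition \ref{prp:sections}, together with the invocation of Lemma \ref{lem:Phi-non-drop} to license the use of Proposition \ref{prp:sections} columnwise.
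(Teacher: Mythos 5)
Your proof is correct and follows essentially the same route as the paper's: Lemma \ref{lem:Phi-non-drop} to license Proposition \ref{prp:sections}, identification of the resulting alternating-sign linear forms with the columns of $B_\Phi|_{\V}$, and Proposition \ref{prp:SLMF} to conclude $x \perp \V^\perp$, hence $x \in \V$. The only difference is that you spell out the column-by-column bookkeeping that the paper compresses into ``inspecting these relations,'' which is a welcome clarification rather than a deviation.
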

\begin{proof}
By Lemma \ref{lem:Phi-non-drop} we have that $\dim \pi_{\varphi_j}(\V)  = r$ for every $j \in [m-r]$. Hence by Proposition  \ref{prp:sections} the relations $\pi_{\varphi_j}(x) \in \pi_{\varphi_j}(\V)$ for every $j \in [m-r]$ can be expressed in terms of Pl\"ucker coordinates. Inspecting these relations one realizes that they are identical to saying that $x$ is orthogonal to the columns of $B_\Phi|_{\V}$. But by Proposition \ref{prp:SLMF} the columns of $B_\Phi|_{\V}$ form a basis for $\V^\perp$, hence $x \in \V$.   
\end{proof}

\myparagraph{Proof of Theorem \ref{thm:finite}}

We illustrate the proof using the $\Omega$ in Example \ref{example:main} and note that essentially the same arguments work in general. We will specify an open set $\U^*$ of $\M(2,5 \times 6)$ and construct an $X \in \U^*$ such that $\pi_{\Omega}^{-1}\big(\pi_\Omega(X)\big) \cap \U^*$ consists of $X$ alone. Then we will be done by Proposition \ref{prp:X}. By Examples \ref{ex:SLMF-Phi1-Phi2} and \ref{ex:Thm} there is a partition $\J_1=\{1,2\}, \, \J_2=\{3,4,5\}$ of $[5]$ such that the columns of $\Omega$ associated with $\J_1$ and $\J_2$ induce two $(2,6)$-SLMF's $\Phi_1$ and $\Phi_2$, respectively. Let $\sV_{\Phi_k}, \, k=1,2$ be the open sets of Proposition \ref{prp:SLMF} associated to these SLFM's. For similar reasons as in the proof of Lemma \ref{lem:U-M}, the set of $X \in \M(2,6 \times 5)$ such that $\C(X) \in \sV_{\Phi_1} \cap \sV_{\Phi_2}$ is dense and open, call it $\U$. The set of $X$'s for which $\dim \pi_{\omega_3} (\C(X)) = 2$ is also dense open, call it $\U'$. Define $\U^*=\U \cap \U'$ and $\sV^*$ the dense open set of $\Gr(2,6)$ arising as the intersection of $\sV_{\Phi_1} \cap \sV_{\Phi_2}$ with the open set of $\V$'s that satisfy $\dim \pi_{\omega_3} (\V) = 2$. Pick $\V \in \sV^*$, let $v_1,v_2$ be a basis of $\V$ and set 
$$X = [x_1\, \, x_2 \, \, x_3 \, \, x_4 \, \, x_5]:=[\underbrace{v_1\, \, v_1}_{\J_1} \, \, \underbrace{v_2 \, \, v_2 \, \, v_2}_{\J_2}]$$ Take any $X'=[x'_1\, \, x'_2 \, \, x'_3 \, \, x'_4 \, \, x'_5] \in \pi_{\Omega}^{-1}\big(\pi_\Omega(X)\big) \cap \U^*$. By hypothesis $\C(X') \in \sV^*$ and $\pi_{\omega_j}(x_j)=\pi_{\omega_j}(x'_j) \in \pi_{\omega_j}(\C(X'))$ for every $j \in [5]$. Write $\Phi_1 =\bigcup_{j \in [4]} \varphi_j^{1} \times \{j\}$ with the $\varphi_j^{1}$'s subsets of $[6]$. Then $\pi_{\omega_j}(x_j) \in \pi_{\omega_j}(\C(X'))$ for $j=1,2$ imply the relations $\pi_{\phi_k}(v_1) \in \pi_{\varphi_k^{1}}(\C(X')), \, \forall k \in [4]$. But then Proposition \ref{prp:projections-Phi} gives $v_1 \in \C(X')$. A similar argument applied with the last three  columns of $\Omega$ and $\Phi_2$ gives $v_2 \in \C(X')$. Thus necessarily $\C(X')=\C(X)$. By Lemma \ref{lem:Phi-non-drop} and the definition of $\U^*$ and $\sV^*$ we have that $\dim \pi_{\omega_j}(\C(X')) = r, \, \forall j \in [5]$. Thus Proposition \ref{prp:Unique} gives $X' = X$. \qed  

\myparagraph{Proof of Proposition \ref{prp:SLMF-V-check}}
The \emph{only if} part follows from Proposition \ref{prp:SLMF}. For the \emph{if} part, suppose there is a non-empty open set $\sV_1$ of $\Gr(r,m)$ such that for every $\V \in \sV_1$ we have $\rank (B_\Phi|_{\V}) = m-r$. Let $\sV_2$ be the open set of $\Gr(r,m)$ where none of the Pl\"ucker coordinates is zero. Set $\sV_3 = \sV_1 \cap \sV_2$ and take $\V \in \sV_3$. By Proposition \ref{prp:Gr-non-drop} each $\pi_{\varphi_j}(\V)$ is a hyperplane of $\RR^{\varphi_j}$. By Proposition \ref{prp:sections}, the vector of Pl\"ucker coordinates with $\varphi:= \varphi_j$ that appear in \eqref{eq:section} is the normal vector to that hyperplane. The $j$th column of $B_\Phi|_{\V}$ consists of these Pl\"ucker coordinates appearing at row indices $\varphi_j$ and it is orthogonal to $\V$. Hence $B_\Phi|_{\V}$ is a basis for $\V^\perp$. 

Suppose for the sake of contradiction that $\Phi$ is not $(r,m)$-SLMF. By Proposition \ref{prp:SLMF-algebraic}, some $(m-r) \times (m-r)$ determinant of $B_\Phi|_{\V}$ must be zero. But this is the same as saying that some Pl\"ucker coordinate of $\V^\perp$ is zero, say $[\psi]_{\V^\perp} = 0$ for some $\psi \subset [m]$ with $\#\psi = m-r$. But up to a sign, $[\psi]_{\V^\perp}$ is equal to $\big[[m] \setminus \psi\big]_{\V}$; contradicting the definition of $\sV_2$. \qed

\myparagraph{Proof of Theorem \ref{thm:unique}}
We illustrate the proof using the $\Omega'$ of Example \ref{ex:Thm}, which we rename $\Omega$. Let $\J_1,\J_2,\J_3, \Phi_1,\Phi_2,\Phi_3$ be as in Example \ref{ex:Thm}. We use $\bar{\cdot}$ to indicate $\bar{\cdot}$ is an object associated to the $6 \times 4$ column sub-pattern of $\Omega$ with columns indexed by $\J_1 \cup \J_2$. By Theorem \ref{thm:finite}, $\bar{\Omega}$ is generically finitely completable. Hence there is a (non-empty) open set $\bar{\U}_1 \subset \M(2, 6 \times 4)$ such that for any $Y \in \bar{\U}_1$, $\pi_{\bar{\Omega}}(Y)$ has finitely many completions. Let $\bar{\U}_2 \subset \M(2, 6 \times 4)$ be the open set defined by $Y \in \bar{U}_2$ if and only if $\C(Y) \in  \cap_{k \in [3]} \sV_{\Phi_k}$, with $\sV_{\Phi_k}$ as in Proposition \ref{prp:SLMF}. Set $\bar{\U}_3 = \bar{\U}_1 \cap \bar{\U}_2$. Now let $\U_1 \subset \M(2, 6 \times 4)$ be the open set defined by requiring $X \in \U_1$ if and only if $\bar{X} \in \bar{\U}_3$. Take an $X \in \U_1$ and let $Y$ be a completion of $\pi_{\bar{\Omega}}(\bar{X})$ in $\bar{\U}_3$; there are finitely many such $Y$'s. For $k \in [4]$ and by Lemma \ref{lem:Phi-non-drop}, the projection of $\C(Y)$ onto the coordinates $\varphi_k^3$ is a plane $\H_{Y,k}$ of $\RR^{\varphi_k^3} \cong \RR^3$ and so is the projection of $\C(X)$; call this latter $\H_{X,k}$. Now, there is a non-empty open subset $\U_2 \subset \U_1$ such that for any $X=[x_1 \cdots x_6] \in \U_2$ the projected points $\pi_{\varphi_k}(x_j)$ for $j \in \J_3$ do not lie in any $\H_{Y,k}$ for any $k$ unless $\H_{Y,k} = \H_{X,k}$; the proof of this is simple once the notions of \emph{constructible set} and \emph{upper semi-continuity of the fiber dimension} are available; for these see \cite{harris2013algebraic}. We will show that for $X \in \U_2$ the only completion of $X$ in $\U_2$ is $X$. Let $X'=[x_1' \cdots x_6']$ be such a completion. The relations $\pi_{\omega_j}(x_j')=\pi_{\omega_j}(x_j) \in \pi_{\omega_j}(\C(X))$ for $j \in \J_3$ imply that for any $k \in [4]$ we have $\pi_{\varphi_k^3}(x_j') \in \H_{X,k}$. But by the definition of $\U_2$ this is only possible if $\H_{Y,k} = \H_{X,k}$ for every $k \in [4]$. By the proof of Proposition \ref{prp:SLMF-V-check}, $\C(X')=\C(X)$. Then Proposition \ref{prp:Unique} gives  $X' = X$. \qed

\myparagraph{Proof of Theorem \ref{thm:same-SLMF}} We describe the main idea; the details can be filled following the proof of Theorem \ref{thm:unique}. Since $\Phi = \bigcup_{k \in [m-r]} \varphi_k \times \{k\}$ is supported by $r$ disjoint sub-patterns of $\Omega$, observing a generic $X$ on $\Omega$ gives $\pi_{\varphi_k}(\C(X)), \, \forall k \in [m-r]$. Each $\pi_{\varphi_k}(\C(X))$ is a hyperplane of $\RR^{r+1}$ with normal vector given by Proposition \ref{prp:sections}. By Proposition \ref{prp:SLMF}, these normal vectors give $\C(X)$. Uniqueness follows from Proposition \ref{prp:Unique}.

\section*{Acknowledgement} 
The author is grateful to Aldo Conca for many inspiring discussions on the algebraic geometry of matrix completion. He also thanks Omid Amini, Daniel I. Bernstein, Lisa Nicklasson, Louis Theran, Nick Vannieuwenhoven and Yang Yang (ORCID 0000-0003-1200-268X) for stimulating discussions on the combinatorics of low-rank matrix completion. The author acknowledges the support of CAS Project for Young Scientists in Basic Research, Grant No. YSBR-034.

\bibliographystyle{IEEEtran}
\bibliography{MC-arxiv-Feb23}

\begin{IEEEbiographynophoto}
{Manolis C. Tsakiris} holds a PhD from Johns Hopkins University in Electrical $\&$ Computer Engineering advised by Ren\'e Vidal, and a PhD in Mathematics from the University of Genova advised by Aldo Conca. His main research interests are machine learning,  applied algebraic geometry, and commutative algebra. He is currently Tenured Associate Professor in the Academy of Mathematics and Systems Science, of the Chinese Academy of Sciences in Beijing.
\end{IEEEbiographynophoto}

\end{document}